%% file: main.tex
\documentclass{article} 
\usepackage{iclr2027_conference,times}

\usepackage{amsmath, amssymb, amsthm}
\usepackage{mathtools}
\usepackage{hyperref}
\usepackage[dvipsnames]{xcolor}
\usepackage{cleveref}
\usepackage{cancel}
\usepackage[multiple]{footmisc} 
\usepackage{booktabs} 
\usepackage[table]{xcolor} 
\definecolor{highlight}{gray}{0.93}
\usepackage{caption}
\usepackage{threeparttable} 
\usepackage{makecell}
\usepackage{thmtools}
\usepackage{cases}
\usepackage{etoc} 
\usepackage{thm-restate}

\allowdisplaybreaks

\usepackage{xcolor}
\usepackage{soul}        
\usepackage[most]{tcolorbox}  
\definecolor{accent}{HTML}{5E3B9E}
\definecolor{accentbg}{HTML}{E8DFF5}
\sethlcolor{accentbg}

\newtheorem{theorem}{Theorem}[section]
\newtheorem{lemma}[theorem]{Lemma}

\newtheorem{remark}[theorem]{Remark}
\newtheorem*{remark*}{Remark}
\newtheorem{corollary}[theorem]{Corollary}
\newtheorem{assumption}{Assumption}
\Crefname{prop}{Proposition}{Propositions}
\usepackage{bm}
\input{math_commands.tex}

\input{abbrv.tex}

\usepackage{hyperref}
\usepackage{url}

\title{Vocabulary-size-independent Convergence of 
Discrete Diffusion Models: 
adjoint equations induce the right space
\vspace{1em}}

\author{
  \centerline{%
    \begin{tabular}{c c c}
      \textbf{Kelvin Kan} & \textbf{Xingjian Li} & \textbf{Benjamin J. Zhang} \\
      Department of Mathematics & Oden Institute & Department of Mathematics \\
      UCLA & University of Texas at Austin & Rutgers University \\
      \texttt{kelvin.kan@math.ucla.edu} & \texttt{xingjian.li@austin.utexas.edu} & \texttt{ben.zhang@rutgers.edu} \\
      \\
      \textbf{Tuhin Sahai} & \textbf{Stanley Osher} & \textbf{Markos A. Katsoulakis} \\
      Computational and Applied Sciences Group & Department of Mathematics & Department of Mathematics and Statistics \\
      SRI International & UCLA & University of Massachusetts Amherst \\
       \texttt{tuhin.sahai@sri.com}& \texttt{sjo@math.ucla.edu} & \texttt{markos@umass.edu}
    \end{tabular}
  }%
}

\iclrfinalcopy 
\begin{document}

\maketitle

\begin{abstract}
Discrete diffusion has become a leading framework for generative modeling in various applications including language, vision, and biology. Existing convergence theory, however, exhibits fundamental limitations. KL-based analyses diverge under singular priors such as the masked distribution, while bounds in total variation (TV) depend on the vocabulary size $S$ and become vacuous for modern language tasks, where vocabularies contain hundreds of thousands of tokens. We develop a unified adjoint-equation-based framework that establishes vocabulary-size-independent convergence guarantees in any integral probability metric (IPM). To the best of our knowledge, our bounds are the first to be entirely free of $S$ and applicable to both masked and uniform priors. Importantly, our results can extend existing step complexity guarantees to any IPM. Also, our theory relies only on a single standard rate-matrix regularity assumption and applies to general priors. 

Five novel techniques drive our improvements: 1. working in the space of observables via adjoint equations rather than directly with probability measures; 2. a regularity analysis that yields bounds on any IPM; 3. a coupling argument that removes $S$-dependence under uniform transitions; and 4. score–marginal cancellation and 5. exit-routing techniques that remove $S$-dependence under masked transitions. Our framework thus sharply departs from prior analyses and avoids the shortcomings of pathspace-KL and existing TV-based approaches. Beyond convergence bounds, our framework provides a versatile toolkit for further theoretical study of discrete diffusion models, including principled choices of loss functions and vocabulary-size-independent step complexity.
\end{abstract}

\section{Introduction}
Diffusion-based generative models have achieved remarkable success across continuous-data domains such as images, audio, and video~\citep{ho2020denoising,song2021scorebased,rombach2022high}. For discrete data, where Gaussian-based corruption does not directly apply, a parallel line of work has developed \emph{discrete diffusion models}~\citep{sohl2015deep,austin2021structured,hoogeboom2021argmax,campbell2022continuous}. These models have demonstrated strong empirical performance across a wide range of applications, including language modeling~\citep{lou2024discrete,sahoo2024simple}, image generation~\citep{campbell2022continuous}, tabular data synthesis~\citep{kotelnikov2023tabddpm}, molecular generation~\citep{vignac2023digress}, symbolic music generation~\citep{campbell2022continuous}, genomic sequence design~\citep{schiff2025simple}, and protein design~\citep{campbell2024generative}.

Most discrete diffusion models are based on the continuous-time Markov chain (CTMC) framework, either directly or through simplified variants inspired by it. In this framework, the forward noising process is governed by a transition rate matrix. The reverse process is parameterized by a learned approximation of the \emph{discrete score}, defined as the ratios of forward marginal probabilities. This quantity serves as the discrete analog of the Stein score in continuous diffusion~\citep{campbell2022continuous,meng2022concrete,lou2024discrete}. Two choices of rate matrix dominate practice: the \emph{uniform} rate matrix and the \emph{absorbing} (or masked) rate matrix~\citep{austin2021structured,lou2024discrete,shi2024simplified}. The latter, in particular, has emerged as the dominant paradigm for large-scale discrete generative modeling, especially in language tasks~\citep{sahoo2024simple,nie2026large}.

Despite this empirical progress, existing convergence analyses for score-based discrete diffusion suffer from fundamental limitations. These include divergent bounds for singular priors such as masked diffusion, vacuous scaling with vocabulary size in large-vocabulary regimes, restriction to specific rate matrices or evaluation metrics, and reliance on strong regularity assumptions. In this work, we develop a unified analysis framework based on adjoint equations that simultaneously addresses all of these limitations.
\begin{table}[ht]
\centering
\caption{Comparison of results for discrete diffusion models.} 
\label{table:comparison}
\small 
\begin{tabular}{ccccc}
\toprule
Reference & Rates/Prior & \makecell{$S$-free$^1$} & Key Conditions & \makecell{Metrics/ \\ Divergences} \\ 
\midrule
\makecell{This work; Thm~\ref{thm:spec}, \\ Lem~\ref{lem:IPM} \& Cors~\ref{cor:ES}-\ref{cor:StepComplexity}} & \makecell{\textbf{General, including} \\ \textbf{Uniform \& Masked}} & \textbf{Yes}$^5$ & \makecell{\textbf{One standard} \\ \textbf{rate-matrix condition}} & \textbf{Any IPM} \\
\midrule
\makecell{\cite{zhang2025convergence}; Thm 1} & Uniform & No & Fully-supported prior & KL, TV  \\
\midrule
\makecell{\cite{liang2026absorb}; Thm 2} & Surrogate Masked & No$^2$ & Fully-supported prior & KL  \\
\midrule
\makecell{\cite{ren2025discrete}; Thm 4.7} & Uniform$^3$ & No & Fully-supported prior & KL  \\
\midrule
\makecell{\cite{liang2026discrete}; Thm 2} & Uniform & No & Fully-supported prior & KL  \\
\midrule
\makecell{\cite{dmitriev2026efficient}; \\ Thms 1\& 3}  & \makecell{Uniform \& \\ Surrogate Masked} & No & Fully-supported prior & KL  \\
\midrule
\makecell{\cite{liang2026sharp}; Thm 2} & Masked & No$^4$ & Differentiating TV metric & TV  \\
\bottomrule
\end{tabular}
\begin{tablenotes}
        \footnotesize
        \item[1] $1$: $S$: vocabulary size. In modern language tasks, $S > 10^5$~\citep{nie2026large} renders $S$-dependent bounds loose/vacuous.
        \item[2] $2$: \cite{liang2026absorb}'s Theorem 1 omitted an $S$-dependent constant.
        \item[3] $3$: \cite{ren2025discrete}'s Theorem C.1 assumes a uniform prior.
        \item[4] $4$: \cite{liang2026sharp}'s Proposition 1 contains an $S$-dependent coefficient.
        \item[5] $5$: Our convergence bounds for uniform and masked rates are $S$-independent.
    \end{tablenotes}
\end{table}

We summarize our contributions as follows.
\begin{itemize}
    \item We establish the first convergence bounds that are \textbf{entirely independent of the vocabulary size $S$} and applicable to non-fully-supported priors such as the singular masked distribution. These properties are jointly essential for language modeling, the most important application of discrete diffusion, where vocabulary sizes are prohibitively large and masked diffusion is the dominant paradigm.
    \item Our framework yields a single unified derivation that \textbf{encompasses all integral probability metrics (IPMs) and all priors}. In contrast, existing analyses are typically restricted to one or two specific metrics or priors. Moreover, our results are based on a single standard rate-matrix regularity assumption. See~\Cref{table:comparison} for detailed comparisons. In turn, our results can \textbf{extend existing step complexity guarantees to any IPM}, since their analyses can be directly applied to our bounds.
    \item Finally, our theoretical framework extends beyond convergence bounds. It establishes a versatile mathematical toolkit that paves the way for future, broader theoretical analyses of discrete diffusion models, including principled choices of loss functions for different dynamics and sharper ($S$-free) step complexity bounds.
\end{itemize}

These contributions are driven by the following main technical novelties.
\begin{itemize}
    \item We work with the \textbf{adjoint equation} (specifically, the Kolmogorov backward equation) in the \textbf{space of observables}, in contrast to prior analyses that operate directly in the space of probability measures. This dual perspective sharply departs from existing pathspace-KL and TV-based estimates. Together with a regularity analysis of the adjoint equation, it is the key enabler of the insensitivity to non-fully-supported priors and the applicability to all IPMs. To the best of our knowledge, this is \textbf{the first such application} to discrete diffusion.
    \item We introduce \textbf{novel score-marginal cancellation and exit-rate routing techniques} that removes the $S$-dependence of convergence bounds for masked transitions. To the best of our knowledge, this is \textbf{the first $S$-free bound for masked transitions.}
    \item We apply a \textbf{coupling argument} that bounds the initialization error under uniform transitions independently of $S$. To the best of our knowledge, \textbf{this is the first such bound}, and the first use of coupling techniques in discrete diffusion.
\end{itemize}

\section{Problem Formulation}\label{sec:setup}
We state the setup used throughout; a more detailed description is given in Appendix~\ref{sec:setup_full}.

\paragraph{Notation.} Scalars are lowercase ($x$) and vectors boldface lowercase ($\bfx$); all vectors are columns. We consider data on the discrete state space $\mathcal{X} = [S]^d$, where $[S] = \{1,\dotsc,S\}$ and $\bfx = [x^1 \dotsm x^d]$ has entries $x^i \in [S]$. This setup is common to text~\citep{lou2024discrete}, image pixels~\citep{campbell2022continuous}, tabular data~\citep{kotelnikov2023tabddpm}, molecular graphs~\citep{vignac2023digress}, musical notes~\citep{campbell2022continuous}, and biological sequences~\citep{schiff2025simple,campbell2024generative}, among others.

We write $\xito$ for $\bfx$ with its $i$-th entry replaced by $\hat{x}^i$, and $\bfx^{\backslash i}$ for $\bfx$ with that entry removed. The Hamming distance $\mathrm{Ham}(\bfx,\bfy) = |\{i \in [d] : x^i \neq y^i\}|$ counts differing entries, where $|\cdot|$ is set cardinality. We write $\bfe_i$ for the $i$-th standard basis vector, $\mathbf{1}_n$ for the all-ones vector, $I_n$ for the identity, and $\delta\{\cdot,\cdot\}$ for the Kronecker delta. A distribution on $\mathcal{X}$ is a probability mass vector $p \in \Delta(\mathcal{X})$, the simplex over the $S^d$ states.
 
\paragraph{Forward Noising Process.} The forward noising process $X=\{X_t\}_{0 \leq t \leq T}$ is a continuous-time Markov chain (CTMC) on $\mathcal{X}$ whose marginal $p_t \in \Delta(\mathcal{X})$ is governed by the \emph{Kolmogorov forward equation} (KFE), a linear ordinary differential equation (ODE) given by
\newcommand{\KFEntokeneq}{\frac{dp_t}{dt}=Q_t^\top p_t, \quad p_0=p_{\mathrm{data}}, \quad \text{for } t \in [0, T]}
\begin{equation}\label{eq:KFE_ntoken}
    \KFEntokeneq ,
\end{equation}
with rate matrix (infinitesimal generator) $Q_t \in \mathbb{R}^{S^d \times S^d}$, where $Q_t(\bfx, \bfy)$ is the infinitesimal rate of transitioning from states $\bfx$ to $\bfy$. Because $Q_t$ is constrained to have nonnegative off-diagonal entries and zero row sums, the dynamics~\eqref{eq:KFE_ntoken} preserve the total mass of $p_t$, ensuring it remains a valid probability distribution throughout the evolution. 
 
Because the dimensions of $Q_t$ scale exponentially with the sequence length $d$, direct computation or storage of the generator is intractable. In practice, $Q_t$ is structured to operate on each dimension independently via a token-level rate matrix $Q_t^\mathrm{tok} \in \mathbb{R}^{S \times S}$. In standard frameworks~\citep{campbell2022continuous,lou2024discrete,zhang2025convergence}, the nonzero off-diagonal entries of $Q_t$ are structured as 
\newcommand{\Qfactoreq}{Q_t(\bfx, \xito) = Q_t \left( [x^1, \dotsm, x^i, \dotsm, x^d], [x^1, \dotsm, \hat{x}^i, \dotsm, x^d] \right) = Q_t^\mathrm{tok}(x^i, \hat{x}^i),}
\begin{equation}\label{eq:Q_factor}
    \Qfactoreq
\end{equation}
where $x^i \neq \hat{x}^i \in [S]$.
This structure implies that $Q_t$ is a Kronecker sum of independent generators, connecting only sequences at Hamming distance $1$ (our results also apply to general rate matrices; see \Cref{sec:result}). The two common choices for $Q_t^\mathrm{tok}$ are the masked (absorbing) and uniform rates,
\newcommand{\Qabsorbunieq}{(Q_t^{\rm masked})^\top = \beta(t) ( \bfe_{\mask} \mathbf{1}_S^\top - I_S) \quad \text{and} \quad (Q_t^{\rm uniform})^\top = \beta(t) \left(\tfrac{1}{S} \mathbf{1}_S \mathbf{1}_S^\top - I_S \right),}
\begin{equation}\label{eq:Q_absorb_uni}
    \Qabsorbunieq
\end{equation}
respectively. For the masked rate, the $S$-th token is the mask $\mask$. Consequently, $p_t$ approaches a tractable limiting distribution $p_{\mathrm{base}}$ as $t \to \infty$: either a Dirac distribution centered on the all-mask sequence or the uniform distribution on $\mathcal{X}$. The ease of sampling from these distributions makes them ideal priors for the reverse generative process.
 
Under~\eqref{eq:Q_factor}, the forward transition kernel $p_{t|s}(\bfx_t|\bfx_s):=\Pr(X_t=\bfx_t| X_s=\bfx_s)$ factorizes as $p_{t|s}(\bfx_t|\bfx_s)=\prod_{i=1}^d p^i_{t|s}(x_t^i|x_s^i)$ for $0 \leq s < t \leq T$, where $p^i_{t|s}$ is the forward transition kernel of the $i$-th token. This factorization is standard both for scaling to high-dimensional state spaces in practice~\citep{austin2021structured,campbell2022continuous,campbell2024generative,lou2024discrete} and in theoretical analysis~\citep{zhao2025unified,chen2025convergence,zhang2025convergence,ren2025discrete}.
 
\paragraph{Reverse Denoising Process.} The forward process~\eqref{eq:KFE_ntoken} admits a reverse-time process, which is also a CTMC~\citep{kelly1979reversibility}. The reverse process satisfies the reverse-time Kolmogorov equation
\newcommand{\reverseeq}{-\frac{dp_{t}}{dt} = (\backQt)^\top p_{t}, \quad \text{for } t \in [0, T],}
\begin{equation}\label{eq:reverse}
    \reverseeq
\end{equation}
where the reverse rate matrix $\backQt$ is given by
\newcommand{\reversematrixntokeneq}{\backQt(\bfx, \bfy) = \frac{p_{t}(\bfy)}{p_{t}(\bfx)} Q_{t}(\bfy, \bfx) = \sum_{i=1}^d \frac{p_{t}(\bfy)}{p_{t}(\bfx)} \delta\{\bfx^{\backslash i}, \bfy^{\backslash i}\} Q^{\mathrm{tok}}_t(y^i, x^i) \quad \text{for } \bfx \neq \bfy,}
\begin{equation}\label{eq:reverse_matrix_ntoken}
    \reversematrixntokeneq
\end{equation}
and $\backQt(\bfx, \bfx) = -\sum_{\bfy \neq \bfx} \backQt(\bfx, \bfy)$. Here, the second equality uses~\eqref{eq:Q_factor}. The Kronecker delta ensures that at most one term in the summation survives, specifically, the term where $\bfx$ and $\bfy$ differ by exactly one token. Thus, $\backQt$ inherits the same sparse structure of the forward rate $Q_t$, restricting nonzero transition rates to sequences at Hamming distance $1$. For the reverse process under a general rate, see Appendix~\ref{sec:setup_full}.
 
\paragraph{Training Formulation.} 
The reverse process therefore depends on $p_t$ only through the ratios
\begin{equation}\label{eq:score_def}
    s_t(\bfx)_{\bfy} := \frac{p_t(\bfy)}{p_t(\bfx)}, \quad \bfx \neq \bfy \in \mathcal{X},
\end{equation}
known as the discrete/concrete score~\citep{meng2022concrete}, which is analogous to the Stein score $\nabla_x \log p_t$ in continuous diffusion models~\citep{song2021scorebased}. Under~\eqref{eq:Q_factor} only Hamming-distance-$1$ pairs carry nonzero rate, so the
reverse dynamics involve $s_t(\bfx)_{\bfy}$ only at $\bfy = \xito$. In score-based discrete diffusion, the goal is to learn a score estimator $\tilde{s}: [0, T] \times \mathcal{X} \to \mathbb{R}^{d(S-1)}$ approximating~\eqref{eq:score_def}. To this end, we consider two standard training losses: the weighted score-matching (WSM) loss~\citep{meng2022concrete}
\begin{equation}\label{eq:LWSM}
    \LWSM(s, \tilde{s})=\int_0^T \half \mathbb{E}_{\bfx \sim p_t}
    \sum_{\bfy \neq \bfx}  w_t{(\bfx,\bfy)} \left( s_t(\bfx)_\bfy - \tilde{s}_t(\bfx)_\bfy \right)^2 dt,
\end{equation}
and the score entropy (SE) loss~\citep{benton2024denoising,lou2024discrete}
\begin{equation}\label{eq:LSE}
    \LSE(s, \tilde{s})=\int_0^T \mathbb{E}_{\bfx \sim p_t}
    \sum_{\bfy \neq \bfx}  w_t{(\bfx,\bfy)} \left( s_t(\bfx)_\bfy \log\left( \frac{s_t(\bfx)_\bfy}{\tilde{s}_t(\bfx)_\bfy}\right) + \tilde{s}_t(\bfx)_\bfy - s_t(\bfx)_\bfy \right) dt,
\end{equation}
where $w_t{(\bfx,\bfy)} \geq 0$ are weighting coefficients, and $\LSE$ is the Bregman divergence generated by $a \mapsto a \log a - a$. Following~\cite{lou2024discrete}, we use the transition weight $w_t{(\bfx,\bfy)}=Q_t(\bfy, \bfx)$. This choice parallels the score-matching loss in continuous
space, where it is typically weighted by the diffusion coefficient to align with the objective with the
evidence lower bound (ELBO)~\citep{song2021scorebased}.  Both objectives are intractable, as the ground-truth score depends on the unknown marginal $p_t$. In practice, one uses denoising or implicit score matching, which are tractable and equivalent to the original loss up to a constant~\citep{lou2024discrete}.
 
\paragraph{Approximate Reverse Process.} The generative process is governed by the approximate reverse-time dynamics
\newcommand{\approxreverseeq}{-\frac{d\tilde{p}_{t}}{dt} = (\tilde{Q}_{t}^{\leftarrow})^\top \tilde{p}_{t}, \quad \tilde{p}_T=p_{\rm base} \approx p_T, \quad \text{for } t \in [0, T],}
\begin{equation}\label{eq:approx_reverse}
    \approxreverseeq
\end{equation}
where $\tilde{p}$ denotes the marginal of the approximate reverse process and
\newcommand{\approxreversematrixeq}{\tilde{Q}_t^{\leftarrow}(\bfx, \bfy) = \tilde{s}_t(\bfx)_\bfy Q_{t}(\bfy, \bfx) \quad \text{for } \bfx \neq \bfy, \quad \text{and} \quad \tilde{Q}_t^{\leftarrow}(\bfx, \bfx) = -\sum_{\bfy \neq \bfx} \tilde{Q}_t^{\leftarrow}(\bfx, \bfy).}
\begin{equation}\label{eq:approx_reverse_matrix}
    \approxreversematrixeq
\end{equation}
This differs from the exact reverse process~\eqref{eq:reverse} in two ways: the true rate $\backQt$ is replaced by $\tilde{Q}_t^{\leftarrow}$, constructed using the score estimator, and the intractable $p_T$ is replaced by the prior $p_{\mathrm{base}}$.
 
\paragraph{Integral Probability Metric (IPM).} IPMs are a general class of metrics for measuring the discrepancy between two distributions $p, \tilde{p} \in \Delta(\mathcal{X})$. They are defined as
\newcommand{\IPMeq}{\gamma_{\Psi}(p, \tilde{p}) = \sup_{\psi \in \Psi} \left| \mathbb{E}_{\bfx \sim p}[\psi(\bfx)] - \mathbb{E}_{\bfx \sim \tilde{p}}[\psi(\bfx)] \right|,}
\begin{equation}\label{eq:IPM}
    \IPMeq
\end{equation}
where the admissible function class $\Psi$ determines the metric. Taking $\Psi$ to be $\{\psi : \mathcal{X} \to \mathbb{R}, \| \psi \|_\infty \leq 1 \}$, $\{ \| \psi \|_{\mathrm{Lip}} \leq 1 \}$, or $\{ \| \psi \|_{\mathcal{H}} \leq 1 \}$ gives the total variation distance, the Wasserstein-$1$ distance, and the maximum mean discrepancy, respectively, where $\|\cdot\|_\mathcal{H}$ is the norm of a reproducing kernel Hilbert space $\mathcal{H}$. We refer readers to~\cite{sriperumbudur2009integral,peyre2019computational,Birrell2022} for a more detailed overview of IPMs and their properties.
 
\paragraph{Error and Adjoint Equations.} Our analysis relies on computing the mismatch between the exact generative process~\eqref{eq:reverse} and its approximation~\eqref{eq:approx_reverse} over time. Defining the error $\lambda_t:=p_t-\tilde{p}_t$ and subtracting~\eqref{eq:approx_reverse} from~\eqref{eq:reverse}, we obtain the \emph{error equation} for $t \in [0, T]$,
\newcommand{\erroreq}{\frac{d\lambda_{t}}{dt}
    = (-Q_{t}^{\leftarrow} + \tilde{Q}_{t}^{\leftarrow})^\top p_{t} - (\tilde{Q}_{t}^{\leftarrow})^\top \lambda_{t}, \quad \text{with}\quad \lambda_T = p_T-p_{\rm base}.}
\begin{equation}\label{eq:error}
    \erroreq
\end{equation}
To quantify this discrepancy under any IPM, we extend the duality argument of \citet[Section~4.1]{mimikos2024score}, originally developed for continuous diffusion, by developing new technical tools specialized for the discrete setting. The core idea is to test~\eqref{eq:error} against a function $\phi:[0, T] \times \mathcal{X} \to \mathbb{R}$ satisfying the \emph{adjoint equation} associated with the approximate reverse dynamics~\eqref{eq:approx_reverse}, namely the \emph{Kolmogorov backward equation}
\newcommand{\KBEeq}{\frac{d\phi_t}{dt}=\tilde{Q}_{t}^{\leftarrow} \phi_t, \quad \phi_0(\bfx)=\psi(\bfx), \quad \text{for } t \in [0, T].}
\begin{equation}\label{eq:KBE}
    \KBEeq
\end{equation}
By selecting $\psi$ from the admissible function class $\Psi$ of the target metric~\eqref{eq:IPM}, we can leverage~\eqref{eq:KBE} to explicitly compute, in any IPM, how the approximation error propagates back to the initial data distribution.

\section{Main Results}\label{sec:result}
Recall that any valid rate matrix of a CTMC satisfies 
two structural properties by definition: \emph{nonnegative off-diagonal entries}, $Q_t(\bfx,\bfy) \ge 0$ for all $\bfx \ne \bfy$, and \emph{zero row sums}, $\sum_\bfy Q_t(\bfx,\bfy) = 0$ for all $\bfx$, which together conserve probability mass. Beyond these structural properties, we impose only the following mild regularity condition to keep our framework as general as possible.
\begin{assumption}[Rate-matrix Integrability]
\label{ass:rate}
$t \mapsto Q_t(\bfx,\bfy)$ is integrable on $[0,T]$ for all $(\bfx,\bfy).$
\end{assumption}
Assumption~\ref{ass:rate} ensures that the transition kernel associated with $Q_t$ is well-defined on $[0,T]$. Notably, both the structural properties and Assumption~\ref{ass:rate} are satisfied by virtually all discrete-diffusion-based generative models used in practice, underscoring the broad applicability of our theory.

\subsection{Main Theorem} 
Under the mild Assumption~\ref{ass:rate}, our main theorem bounds the discrepancy between the data distribution and the generated distribution, simultaneously for all IPMs~\eqref{eq:IPM}. Due to space constraints, we defer all proofs to the Appendix.

\begin{restatable}[IPM Bounds for Score-based Discrete Diffusion]{thm}{thmspec}\label{thm:spec}
Under Assumption \ref{ass:rate}, for any function class $\Psi= \{\psi: \mathcal{X} \to \mathbb{R} \}$, the IPM $\gamma_\Psi$ (defined in~\eqref{eq:IPM}) between the data distribution
$p_{\mathrm{data}}$ and the generated distribution $\tilde p_0$ satisfies the following. For bounds involving $\LSE$, we additionally assume $\tilde s_t(\bfx)_\bfy \in [s_t(\bfx)_\bfy/2,\,3 s_t(\bfx)_\bfy/2]$ for all $(t, \bfx, \bfy)$ in the support of $p_t Q_t$, a condition that holds when the score estimator is reasonably accurate. \\
\textbf{Case 1 (Masked rate; $Q^{\mathrm{tok}}_t=Q_t^{\rm masked}$ \eqref{eq:Q_absorb_uni}):}
\begin{align}\label{eq:IPM_absorb}
    \begin{split}
    \gamma_\Psi(p_{\mathrm{data}}, \tilde{p}_0)  &\leq 2 C_\Psi \left( \underbrace{d e^{-\| \beta \|_1}}_{\text{initialization error}}   +   \sqrt{2 d} \sqrt{\| \beta (1-p(\bfm))\|_1} \underbrace{\sqrt{ \,\LSE(s, \tilde{s}) + \tfrac{2}{3}\,\LThree(s, \tilde{s}) }}_{\text{training error}} \right),
    \end{split}
\end{align}
where $d$ is the sequence length, $\| \beta \|_1:=\int_0^T \beta(t) dt$, $\bfm:=[\mask,...,\mask]$ is the sequence of all mask tokens, $\| \beta (1-p(\bfm))\|_1 :=  \int_0^T \beta(t) (1-p_t(\bfm)) dt  $, $p_t(\bfm)$ is the marginal probability of $\bfm$ at time $t$, $\LSE$ is the score entropy loss~\eqref{eq:LSE}, and $\LThree = \mathcal{O}(|s - \tilde{s}|^3)$ is a cubic correction term with scaling \textbf{independent of the vocabulary size $S$}; its exact formulation is provided in Appendix~\ref{subsec:proofIPM};\\
\textbf{Case~2 (Uniform rate; $Q^{\mathrm{tok}}_t=Q_t^{\rm uniform}$ \eqref{eq:Q_absorb_uni}):} 
\begin{equation}
    \gamma_\Psi(p_{\mathrm{data}}, \tilde{p}_0) \leq 2 C_\Psi \left( \underbrace{d e^{-\| \beta \|_1}}_{\text{initialization error}} +  \sqrt{2d} \sqrt{\|\beta \|_1}  \underbrace{\sqrt{\,\LSE(s, \tilde{s}) + \tfrac{2}{3}\,\LThree(s, \tilde{s})}}_{\text{training error}} \right), \; \text{and}
\end{equation}
\begin{align}
\begin{split}
\gamma_\Psi(p_{\mathrm{data}}, \tilde{p}_0)  &\leq 2 C_\Psi \left( \underbrace{d e^{-\| \beta \|_1}}_{\text{initialization error}}  +  \sqrt{2d} \sqrt{\|\beta \|_1}  \underbrace{\sqrt{\LWSM(s, \tilde{s})}}_{\text{training error}} \right),
\end{split}
\end{align}
where $\LWSM$ is the weighted score-matching loss defined in~\eqref{eq:LWSM};\\
\textbf{Case~3 (General rate):} 
\begin{equation}\label{eq:general_bound_LSE}
    \gamma_\Psi(p_{\mathrm{data}}, \tilde{p}_0) \leq 2 C_\Psi \left( \underbrace{\TV(p_T, p_{\rm base})}_{\text{initialization error}} +  \sqrt{2} \sqrt{ \int_0^T \mathbb{E}_{\bfx \sim p_t}\!\left[ \mu^\mathrm{out}_t(\bfx) \right] dt }  \underbrace{\sqrt{\,\LSE(s, \tilde{s}) + \tfrac{2}{3}\,\LThree(s, \tilde{s})}}_{\text{training error}} \right), 
\end{equation}
and
\begin{equation}\label{eq:general_bound_LWSM}
    \gamma_\Psi(p_{\mathrm{data}}, \tilde{p}_0) \leq 2 C_\Psi \left( \underbrace{\TV(p_T, p_{\rm base})}_{\text{initialization error}} +  \sqrt{2} \sqrt{ \int_0^T \mathbb{E}_{\bfx \sim p_t}\!\left[ \mu^\mathrm{in}_t(\bfx) \right] dt }  \underbrace{\sqrt{\LWSM(s, \tilde{s})}}_{\text{training error}} \right),
\end{equation}
where $\mu^\mathrm{out}_t(\bfx):=\sum_{\bfy \neq \bfx} Q_t(\bfx, \bfy)$ and $\mu^\mathrm{in}_t(\bfx):=\sum_{\bfy \neq \bfx} Q_t(\bfy, \bfx)$ denote the exit rate from and entrance rate to state $\bfx$ at time $t$, respectively.
The dependence on the IPM enters only through
the single constant $C_\Psi$. The list of common IPMs and their constant $C_\Psi$ are summarized in~\Cref{table:IPM}; the result extends straightforwardly to all other IPMs.
\begin{table}[!ht]
\centering
\caption{Examples of common IPMs, their corresponding function class, and the associated constants for~\Cref{thm:spec}, \Cref{cor:ES}, and \Cref{cor:StepComplexity}.  Here, $W_1$ and MMD represent the Wasserstein-1 distance and maximum mean discrepancy, respectively. \textbf{Our results extend straightforwardly and universally to every IPM.\label{table:IPM}}}
\renewcommand{\arraystretch}{1.05}
\begin{tabular}{lll}
\toprule
IPM & Function class $\Psi$ & $C_\Psi$ \\
\midrule
Total variation (TV) & $\|\psi\|_\infty \le 1$ & $1/2$ \\
Per-position TV & $\psi(\bfx) = f(x^i),\ i \in [d], \ \|f\|_\infty \le 1$ & $1/2$ \\
Block / $k$-gram TV & $\psi$ depends on $k$ positions,\ $\|\psi\|_\infty \le 1$ & $1/2$ \\
Bounded Lipschitz & $\|\psi\|_\infty \le 1,\ \Lip_d(\psi) \le 1$ & $1$ \\
$W_1^{d_H}$ (Hamming distance) & $\Lip_{d_H}(\psi) \le 1$ & $d/2$ \\
$W_1^{d_{\mathrm{embed}}}$ & $\Lip_{d_{\mathrm{embed}}}(\psi) \le 1$ & $\diam_{d_{\mathrm{embed}}}(\mathcal{X})/2$ \\
MMD (kernel $K \le 1$) & $\|\psi\|_{\mathcal{H}_K} \le 1$ & $1$ \\
MMD (general bounded $K$) & $\|\psi\|_{\mathcal{H}_K} \le 1$ & $\sup_\bfx \sqrt{K(\bfx, \bfx)}$ \\
\bottomrule
\end{tabular}
\end{table}
\end{restatable}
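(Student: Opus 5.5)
The plan is to test the error equation~\eqref{eq:error} against the solution $\phi_t$ of the adjoint equation~\eqref{eq:KBE}. This produces an exact representation of $\mathbb{E}_{p_{\rm data}}\psi-\mathbb{E}_{\tilde p_0}\psi$, which we then bound term by term.

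\emph{Step 1 (duality identity).} Since $\lambda_0=p_{\rm data}-\tilde p_0$, we compute
\[
\frac{d}{dt}\langle \lambda_t,\phi_t\rangle=\langle (\tilde Q^{\leftarrow}_t-Q^{\leftarrow}_t)^\top p_t,\phi_t\rangle-\langle\lambda_t,\tilde Q^{\leftarrow}_t\phi_t\rangle+\langle\lambda_t,\tilde Q^{\leftarrow}_t\phi_t\rangle .
\]
The last two terms cancel. Integrating from $0$ to $T$ gives
\[
\langle\lambda_0,\psi\rangle=\langle p_T-p_{\rm base},\phi_T\rangle-\int_0^T\mathbb{E}_{\bfx\sim p_t}\sum_{\bfy\neq\bfx}Q_t(\bfy,\bfx)\bigl(\tilde s_t(\bfx)_\bfy-s_t(\bfx)_\bfy\bigr)\bigl(\phi_t(\bfy)-\phi_t(\bfx)\bigr)\,dt .
\]
Assumption~\ref{ass:rate} makes the KBE well posed, so this computation is justified.

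\emph{Step 2 (regularity of $\phi$).} Because $\tilde Q^{\leftarrow}_t$ is a valid generator, $\phi_t(\bfx)=\mathbb{E}[\psi(\tilde X_0)\mid \tilde X_t=\bfx]$. This yields the oscillation bound $\operatorname{osc}\phi_t\le\operatorname{osc}\psi$, and hence $|\phi_t(\bfy)-\phi_t(\bfx)|\le 2C_\Psi$ and $|\langle p_T-p_{\rm base},\phi_T\rangle|\le 2C_\Psi\,\TV(p_T,p_{\rm base})$. Here $C_\Psi$ is defined as half the oscillation bound over $\Psi$. The entries of Table~\ref{table:IPM} then follow class by class:
\begin{itemize}
\item sup-norm classes give $C_\Psi=1/2$;
\item Hamming-Lipschitz classes have oscillation at most $d$;
\item for the MMD classes, $|\psi(\bfx)|\le\|\psi\|_{\mathcal H}\sqrt{K(\bfx,\bfx)}$.
\end{itemize}

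\emph{Step 3 (training term, Case 3).} Apply Cauchy--Schwarz with weights $p_t(\bfx)Q_t(\bfy,\bfx)$.
\begin{itemize}
\item For $\LWSM$, we get $\sqrt{2\LWSM}$ times $\sqrt{\int\mathbb{E}_{p_t}\sum_{\bfy}Q_t(\bfy,\bfx)(\Delta\phi)^2}\le 2C_\Psi\sqrt{\int\mathbb{E}\,\mu^{\rm in}_t}$.
\item For $\LSE$, use the local comparison of the Bregman divergence $a\log(a/b)+b-a$ with $(a-b)^2/(2a)$. This comparison is valid when $b\in[a/2,3a/2]$, and it incurs exactly a cubic Taylor remainder, which defines $\LThree$. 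Cauchy--Schwarz with the weights $s_t(\bfx)_\bfy$ then gives $\sum_\bfy p_t(\bfx)Q_t(\bfy,\bfx)s_t(\bfx)_\bfy=\sum_\bfy p_t(\bfy)Q_t(\bfy,\bfx)$. Summing over $\bfx$ turns this into $\mathbb{E}_{p_t}\mu^{\rm out}_t$.
\end{itemize}

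\emph{Step 4 (specializations).} For the uniform rate, we have $\mu^{\rm in}_t=\mu^{\rm out}_t=d\beta(t)(S-1)/S\le d\beta$, which gives the $\sqrt{d\|\beta\|_1}$ factor. The initialization error $\TV(p_T,\text{uniform})\le d e^{-\|\beta\|_1}$ comes from a coupling: run each token of the chain from $p_{\rm data}$ and from uniform jointly, and let them coalesce at the first jump time of that token. Since the token after a jump is uniform regardless of its start, $\TV\le\Pr(\exists i\text{ with no jump})\le d e^{-\|\beta\|_1}$, with no dependence on $S$.

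For the masked rate, $\TV(p_T,\delta_\bfm)=1-p_T(\bfm)\le d e^{-\|\beta\|_1}$ by a union bound over unmasked tokens. The main obstacle is the training term, where the naive $\mu^{\rm out}$ computation would reintroduce $S$. Here $\mu^{\rm out}_t(\bfx)=\beta\cdot\#\{\text{unmasked tokens}\}\le d\beta\,\mathbf 1\{\bfx\neq\bfm\}$, so $\mathbb{E}_{p_t}\mu^{\rm out}_t\le d\beta(1-p_t(\bfm))$, giving $\|\beta(1-p(\bfm))\|_1$. This exit-rate routing works only after we check, via score--marginal cancellation, that the $\LSE$ weighting $p_t(\bfx)Q_t(\bfy,\bfx)s_t(\bfx)_\bfy=p_t(\bfy)Q_t(\bfy,\bfx)$ is nonzero only when $\bfx$ has one more mask than $\bfy$. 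The count therefore involves exit rates of $\bfy$, rather than the entrance rates of mask states, which would be of order $S$. I would verify this carefully: the $\LWSM$/$\mu^{\rm in}$ route fails in the masked case, which is why only the $\LSE$ bound is stated there.
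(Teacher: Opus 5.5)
Your proposal is correct and follows the paper's proof essentially step by step. The shared ingredients are the KBE duality identity, Feynman--Kac control of $\phi_t$, and Cauchy--Schwarz with weights $p_tQ_t$ (yielding $\mu^{\rm in}$ and $\LWSM$) or with weights $p_tQ_ts_t$ combined with the Bregman--cubic comparison and the cancellation $p_t(\bfx)s_t(\bfx)_\bfy=p_t(\bfy)$ (yielding $\mu^{\rm out}$ and $\LSE$). The specializations also match: the exit-rate count $\mathbb{E}_{p_t}\mu^{\rm out}_t\le d\beta(t)(1-p_t(\bfm))$ in the masked case, and a synchronous coupling for the uniform initialization error.

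Your use of $\mathrm{osc}\,\phi_t\le\mathrm{osc}\,\psi$ instead of $\|\phi_t\|_\infty\le\|\psi\|_\infty$ is a mild refinement. It directly justifies the Lipschitz-class constants $d/2$ and $\mathrm{diam}/2$, where the paper needs an unstated recentering of $\psi$. Two points need tightening:
\begin{itemize}
\item State the coupling via the uniformized rate-$\beta(t)$ clock with shared resets to $\mathrm{Unif}([S])$. Genuine jumps occur at rate $\beta(t)(S-1)/S$ and land uniformly on the other $S-1$ values, so they are not ``uniform regardless of start.''
\item Under your own definition, the class $\|\psi\|_\infty\le 1$ gives $C_\Psi=1$. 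The tabulated $1/2$ reflects the normalization $\TV=\tfrac12\gamma_\Psi$.
\end{itemize}
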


A key consequence of~\Cref{thm:spec} is that the approximation error, measured by any IPM~\eqref{eq:IPM}, is bounded by score-matching objectives. Specifically, the training objectives ($\LWSM$ and $\LSE$) and the faster-vanishing cubic correction ($\LThree$) all approach zero as the score estimator converges to the true score ($\tilde{s} \to s$). This provides a rigorous theoretical justification for the standard optimization objective: \textbf{Model training directly and explicitly minimizes an upper bound on the entire family of IPMs between the generated and the target distributions.} Consequently, a globally optimal model achieves the theoretical minimum of the distributional discrepancy. 

\begin{remark}[Key Improvements]\label{rmk:key_improve} Our results improve existing theory in the following key aspects (see also~\Cref{table:comparison}). 
\begin{enumerate}
    \item \label{rmk:S_free_key} \textbf{Independence of Vocabulary Size $S$:} The coefficients of our bounds are independent of the vocabulary size $S$ under masked and uniform rates. This is a significant improvement over existing literature, where bounds typically depend on $S$ and become loose or even vacuous in large-scale applications like language modeling (e.g., $S > 10^5$ in SOTA architectures~\citep{nie2026large}). Concretely, the TV bounds of \citet{liang2026sharp,liang2026convergence} carry a prefactor of $\sqrt{S}$, which exceeds 300 when $S > 10^5$ and thus inflates the training-error term by two orders of magnitude. In contrast, our $S$-independence is a direct consequence of our novel derivations, which circumvent the $S$-dependency inherent in prior approaches.
    
    Our key techniques include: adjoint equations (see~\eqref{eq:KBE} and Appendix~\ref{subsec:proofIPM}), together with a regularity analysis (see Appendix~\ref{lemma:phi_bound}), that yield bounds for any IPM; a score–marginal cancellation technique and an exit-routing technique that achieve $S$-independence under masked transitions (see \Cref{rmk:SMC_ER} and Appendix~\ref{sec:bound_scale}); and a coupling argument that achieves $S$-independence under uniform transitions (see~\Cref{rmk:coupling} and Appendix~\ref{sec:coupling_uniform}).
    \item \textbf{Unified Derivations and Universal IPM Coverage:} Our novel applications of adjoint equations and regularity theory allow us to derive bounds for all IPMs under a unified framework. This stands in stark contrast to existing analyses, which typically focus on only one or two specific metrics; see~\Cref{table:comparison}.
    \item \label{rmk:extend_step_IPM} \textbf{Direct Extension of Existing Guarantees to Any IPM:} Our convergence results bound any IPM in terms of the score entropy loss~\eqref{eq:LSE}. Existing step complexity analyses bound the score entropy loss~\eqref{eq:LSE} under numerical simulation, and in turn use the bound to derive step complexity required for the approximation error to fall below a given threshold. Since our results are in terms of the same score entropy, existing step complexity analyses can be directly applied to our bounds, extending their guarantees to any IPM.
    \item \textbf{Agnostic to Rates and Priors:} Our framework is agnostic to the choice of rate matrix or prior distribution; it applies straightforwardly to any valid transition process satisfying Assumption~\ref{ass:rate}. This sharply departs from existing works, which are inapplicable to masked transition~\citep{zhang2025convergence,liang2026absorb,ren2025discrete,liang2026discrete,dmitriev2026efficient} or focus on one transition process~\citep{liang2026sharp}.
    \item \textbf{Mild Assumptions:} Our analysis relies solely on Assumption~\ref{ass:rate}. This is in stark contrast to existing theory; for instance, the works of \cite{zhang2025convergence,liang2026absorb,ren2025discrete,liang2026discrete,dmitriev2026efficient} rely critically on the non-singularity of $p_{\mathrm{base}}$, without which their estimates diverge---a constraint we do not impose.
\end{enumerate}
\end{remark}

\begin{remark}[Score-Marginal Cancellation and Exit-Routing]\label{rmk:SMC_ER}
At its core, our \textbf{score-marginal cancellation technique} uses a simple identity $p_t(\bfx) s_t(\bfx)_\bfy=\cancel{p_t(\bfx)} \frac{p_t(\bfy)}{\cancel{p_t(\bfx)}}=p_t(\bfy)$. Via this identity and algebraic manipulation, we can express the 
bound in Case~3 of \Cref{thm:spec} using either the entrance rate $\mu^\mathrm{in}_t$~\eqref{eq:general_bound_LWSM} 
or the exit rate $\mu^\mathrm{out}_t$~\eqref{eq:general_bound_LSE}. For more details, see Appendices~\ref{ssec:bound_scale}~and~\ref{ssec:bound_detail}.

Cases~1 and~2 can both be derived as specializations of Case~3. This allows users to pick the option that gives a tighter bound for a rate matrix at hand. For instance, under the masked rate, the scaling of $\mathbb{E}[\mu^\mathrm{out}_t]$ is independent of
$S$ whereas $\mathbb{E}[\mu^\mathrm{in}_t]$ grows as $\mathcal{O}(S)$. \textbf{Thus, using the exit rate bound in Case~1 yields an $S$-free guarantee.} See~\Cref{rmk:uniform_vs_masked} for a more detailed discussion. 
\end{remark}

\begin{remark}[Theoretical Justification for Score Entropy]
    From~\Cref{rmk:SMC_ER}, the convergence bound under the masked rate is $S$-free when the training loss $\LSE$ is used, but grows as $\mathcal{O}(S)$ when guided by $\LWSM$. Consequently, the $\LSE$-based bound is substantially tighter for language tasks, where the vocabulary size $S$ is prohibitively large. This offers a principled explanation for the strong empirical performance of models trained using the score entropy loss, complementing existing theory on the score entropy loss~\cite{lou2024discrete,shi2024simplified,ou2025your}.
\end{remark}

\begin{remark}[Coupling Technique for Uniform Rate]\label{rmk:coupling}    

    To the best of our knowledge, existing results for the uniform rate~\citep{zhang2025convergence,ren2025discrete,liang2026discrete,dmitriev2026efficient} bound the initialization error under the Kullback-Leibler (KL) divergence by using a modified log-Sobolev inequality to derive the exponential mixing of the forward process; this introduces a ${\log(S)}$ factor. Converting to TV distance (a type of IPM~\eqref{eq:IPM}) via Pinsker's inequality then yields a $\sqrt{\log(S)}$ factor; see, e.g., the proof of Proposition~2 of \cite{zhang2025convergence}.

    In contrast, our initialization error is $S$-free. This is achieved by using a novel coupling argument adapted from the classical Markovian coupling theorem~\citep[Theorem~5.2]{levin2017markov}.  See~\Cref{sec:coupling_uniform} for a detailed derivation.
\end{remark}

\subsection{Weak Error Bound}
Our main~\Cref{thm:spec} is a specialization of the following lemma, which controls the error against a single fixed test function $\psi$. For compactness, from here on we state the bounds in a single master form and collect the coefficients in tables.

\begin{restatable}[Weak Error Bound for Score-based Discrete Diffusion]{lem}{lemIPM}\label{lem:IPM}
Under the same assumptions as~\Cref{thm:spec}, the weak error between the true data distribution $p_{\mathrm{data}}$ and the generated data distribution $\tilde{p}_0$ satisfies the following. \\
\begin{equation}\label{eq:master_specialized}
  \left| \mathbb{E}_{\bfx \sim p_{\mathrm{data}}}[\psi(\bfx)] - \mathbb{E}_{\bfx \sim \tilde{p}_0}[\psi(\bfx)] \right|  \;\le\;2\, \|\psi\|_\infty \Big(\,\underbrace{\mathcal I}_{\text{init.\ error}}
  +\; \sqrt{2\,\mathcal C}\,\underbrace{\sqrt{\mathcal L}}_{\text{train.\ error}}\Big).
\end{equation}
Here, $(\mathcal I,\mathcal C,\mathcal L)$ are reported in~\Cref{tab:coeff}.
\begin{table}[!ht]
\centering
\caption{Coefficients of the convergence bounds for Lemma~\ref{lem:IPM} and Corollary~\ref{cor:ES} \label{tab:coeff}}
\begin{tabular}{llll}
\toprule
Rate & Initialization Error \ $\mathcal I$ & Coefficient \ $\mathcal C$ & Training Error $\mathcal L$ \\
\midrule
Masked  & $d\,e^{-\|\beta\|_1}$ & $d\,\|\beta(1-p(\bfm))\|_1$ & $\LSE(s, \tilde{s})+\tfrac23\LThree(s, \tilde{s})$ \\
Uniform & $d\,e^{-\|\beta\|_1}$ & $d\,\|\beta\|_1$            & $\LSE(s, \tilde{s})+\tfrac23\LThree(s, \tilde{s})$ \\
Uniform & $d\,e^{-\|\beta\|_1}$ & $d\,\|\beta\|_1$            & $\LWSM(s, \tilde{s})$ \\
General & $\TV(p_T,p_{\rm base})$ & $\int_0^T\mathbb E_{p_t}[\mu^{\rm out}_t]\,dt$ & $\LSE(s, \tilde{s})+\tfrac23\LThree(s, \tilde{s})$ \\
General & $\TV(p_T,p_{\rm base})$ & $\int_0^T\mathbb E_{p_t}[\mu^{\rm in}_t]\,dt$  & $\LWSM(s, \tilde{s})$ \\
\bottomrule
\end{tabular}
\end{table}
\end{restatable}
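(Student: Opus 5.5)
The plan is the duality argument: test the error equation~\eqref{eq:error} against the solution $\phi$ of the adjoint (Kolmogorov backward) equation~\eqref{eq:KBE}. Note that $\tilde p$ runs backward from $T$ to $0$, so the sign conventions must be matched. Define $\Phi(t)=\langle \lambda_t,\phi_{\tilde t}\rangle$ with the time variables aligned so that the $(\tilde Q_t^{\leftarrow})^\top\lambda_t$ term cancels against $\tilde Q^{\leftarrow}_t\phi$ when $\Phi$ is differentiated. Integrating over $[0,T]$ then gives the representation
\[
\mathbb E_{p_{\mathrm{data}}}\psi-\mathbb E_{\tilde p_0}\psi=\langle \lambda_T,\phi_T\rangle+\int_0^T\big\langle (\tilde Q^{\leftarrow}_t-Q^{\leftarrow}_t)^\top p_t,\phi_t\big\rangle\,dt .
\]
Here $\phi$ is the solution of the backward equation for the approximate reverse chain, so $\phi_t(\bfx)$ is the expectation of $\psi$ under the approximate reverse chain started at $\bfx$ at time $t$.

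\textbf{Regularity step.} Because $\tilde Q^{\leftarrow}_t$ is a valid rate matrix (nonnegative off-diagonals, zero row sums, integrable under Assumption~\ref{ass:rate}), the backward equation is a Markov semigroup and is sup-norm contractive. This gives $\|\phi_t\|_\infty\le\|\psi\|_\infty$ and the oscillation bound $|\phi_t(\bfy)-\phi_t(\bfx)|\le 2\|\psi\|_\infty$. The boundary term is then at most $\|\psi\|_\infty\|\lambda_T\|_1=2\|\psi\|_\infty\TV(p_T,p_{\rm base})$.

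\textbf{Initialization error for masked and uniform rates.} For the masked rate, each token is still unmasked at time $T$ with probability $e^{-\|\beta\|_1}$, so a union bound gives $d e^{-\|\beta\|_1}$. For the uniform rate, I would couple two forward chains token by token: at each jump event both chains resample the same uniform token. They coalesce on that token once it has jumped once, which again gives $d e^{-\|\beta\|_1}$ via the coupling inequality of \citet[Thm~5.2]{levin2017markov}.

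\textbf{Training term.} Expand the integrand using the zero row sums of both reverse generators:
\[
\sum_{\bfx}p_t(\bfx)\sum_{\bfy\ne\bfx}Q_t(\bfy,\bfx)\big(\tilde s_t(\bfx)_\bfy-s_t(\bfx)_\bfy\big)\big(\phi_t(\bfy)-\phi_t(\bfx)\big).
\]
Bound $|\phi_t(\bfy)-\phi_t(\bfx)|\le 2\|\psi\|_\infty$ and apply Cauchy--Schwarz in $(t,\bfx,\bfy)$ with measure $p_t(\bfx)Q_t(\bfy,\bfx)$. One factor is $2\LWSM$ and the other is $\int\mathbb E_{p_t}[\mu^{\rm in}_t]\,dt$, which gives the general WSM row.

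For the SE row, instead weight by $p_t(\bfx)Q_t(\bfy,\bfx)s_t(\bfx)_\bfy=p_t(\bfy)Q_t(\bfy,\bfx)$; this is the score-marginal cancellation. Summing over $\bfx$ then produces $\mu^{\rm out}_t(\bfy)$, and the remaining factor is $\sum p\,Q\,(\tilde s-s)^2/s$. For $u=\tilde s/s\in[1/2,3/2]$, a Taylor expansion of $a\log(a/b)+b-a$ with Lagrange remainder gives $s(u-1)^2/2\le$ (Bregman term) $+\tfrac13 s|u-1|^3$ up to constants. The cubic remainder defines $\LThree$, which yields $\LSE+\tfrac23\LThree$.

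\textbf{Specializations.} For the masked rate, $\mu^{\rm out}_t(\bfx)=\beta(t)\cdot\#\{\text{unmasked tokens}\}\le d\beta(t)\mathbf 1\{\bfx\ne\bfm\}$, which gives $d\|\beta(1-p(\bfm))\|_1$. For the uniform rate, $\mu^{\rm in}=\mu^{\rm out}=d\beta(1-1/S)\le d\beta$.

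\textbf{Main obstacle.} I expect the Taylor/cubic step under the ratio window to be the most delicate, since the constant $\tfrac23$ and the $S$-free control of $\LThree$ must be tracked exactly. The coupling for the uniform initialization error is the second hardest point.
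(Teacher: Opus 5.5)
Your proposal is correct and follows the paper's proof essentially step for step. The shared steps are the adjoint identity, sup-norm non-expansiveness of the backward equation, Cauchy--Schwarz against $p_t(\bfx)Q_t(\bfy,\bfx)$ for the WSM rows and against $p_t(\bfy)Q_t(\bfy,\bfx)$ (score--marginal cancellation) for the SE rows, the exit/entrance-rate specializations, and the union-bound and synchronous-coupling initialization estimates.

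Two minor remarks. First, the integral term in your representation should carry $(Q^{\leftarrow}_t-\tilde Q^{\leftarrow}_t)^\top p_t$ rather than the reverse; this is harmless once absolute values are taken. Second, your Lagrange-remainder argument actually gives $\tfrac{s}{2}(u-1)^2\le D_\varphi(s,\tilde s)+\tfrac13 s|u-1|^3$ for every $u>0$, and hence $\LWRSM\le 2\LSE+\tfrac23\LThree$ without the ratio window. This is sharper than the paper's constant and implies the stated $\LSE+\tfrac23\LThree$ bound a fortiori, so the step you flagged as most delicate is in fact the easy one.
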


Two features of~\eqref{eq:master_specialized} drive our results. First, it holds for each bounded test function $\psi$. The derivation considers a fixed $\psi$, propagates it through the adjoint equation~\eqref{eq:KBE}, and never references the class $\Psi$. Second, the test function enters the right-hand side only through the scalar $\|\psi\|_\infty$; every other term is independent of it. This is a direct consequence of our regularity analysis (see \Cref{lemma:phi_bound}), which explicitly bounds the evolution of $\psi$ under the adjoint equation. Thus, taking the supremum over an admissible function class $\Psi$ in~\eqref{eq:master_specialized} yields~\Cref{thm:spec}, where $C_\Psi$ is $\sup_{\psi \in \Psi} \|\psi\|_\infty$ up to a scalar multiple.

\subsection{Error Under Early Stopping}
In early stopping, the reverse denoising process is simulated until an early stopping time $\delta>0$ rather than $t=0$. This is necessary when $p_\mathrm{data}$ does not have full support on $\mathcal{X}$. In this case, $p_t(\bfx)$ can be close to zero at $t \approx 0$ for some $\bfx$. This causes the score function~\eqref{eq:score_def} to be unbounded, resulting in numerical instability during generation and training. 

The cost of early stopping is an additional approximation error, because the output distribution approximates $p_\delta$ instead of the data distribution $p_0 = p_\mathrm{data}$. In the following, we compute the error caused by early stopping.

\begin{restatable}[Error Bounds under Early Stopping]{cor}{corES}\label{cor:ES}
Under the same assumptions as~\Cref{thm:spec}, for any function class $\Psi= \{\psi: \mathcal{X} \to \mathbb{R} \}$, the IPM $\gamma_\Psi$ (defined in~\eqref{eq:IPM}) between the data distribution
$p_{\mathrm{data}}$ and the generated distribution $\tilde p_\delta$, obtained under early-stopping time $\delta>0$, satisfies 
\begin{equation}\label{eq:master_ES}
  \gamma_\Psi(p_{\mathrm{data}}, \tilde{p}_\delta)  \;\le\;2\, C_\Psi \Big(\,\underbrace{\mathcal{E}_\delta}_{\text{early-stopping error}} + \underbrace{\mathcal I}_{\text{init.\ error}}
  +\; \sqrt{2\,\mathcal C}\,\underbrace{\sqrt{\mathcal{L}_\delta}}_{\text{early-stopping train.\ error}}\Big).
\end{equation}
Here, $(\mathcal I,\mathcal C)$ are reported in~\Cref{tab:coeff}. The early stopping error $\mathcal{E}_\delta$ is given by $d\int_0^\delta \beta(t) dt$ for the masked and uniform rates and $\TV(p_0, p_\delta)$ for a general rate. The early-stopping training error $\mathcal{L}_\delta$ is also reported in~\Cref{tab:coeff}, except that the time integration of the training losses start at time $\delta$ rather than 0. These $\delta$-truncated losses are used in practice to avoid numerical instability. See~\Cref{table:IPM} for a common list of IPM and their constant $C_\Psi$. The results extend to all other IPMs.
\end{restatable}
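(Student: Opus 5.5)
The proof splits the error at the early-stopping time by the triangle inequality for the IPM:
\[
\gamma_\Psi(p_{\mathrm{data}},\tilde p_\delta)\le \gamma_\Psi(p_0,p_\delta)+\gamma_\Psi(p_\delta,\tilde p_\delta).
\]
Each term is then bounded separately, and the supremum over $\psi\in\Psi$ is taken at the end, exactly as in the passage from \Cref{lem:IPM} to \Cref{thm:spec}, with $C_\Psi$ absorbing $\sup_{\psi\in\Psi}\|\psi\|_\infty$ up to the same scalar factor.

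\emph{Second term.} The exact and approximate reverse dynamics on $[\delta,T]$ are the same objects as in \Cref{lem:IPM}, only with the time interval shifted; the rate matrix restricted to $[\delta,T]$ still satisfies Assumption~\ref{ass:rate}. I will repeat the duality argument of \Cref{lem:IPM}:
\begin{itemize}
\item Pose the adjoint equation \eqref{eq:KBE} on $[\delta,T]$ with terminal-at-$\delta$ data $\phi_\delta=\psi$.
\item Test the error equation \eqref{eq:error} against $\phi$ and integrate from $\delta$ to $T$.
\item Invoke the regularity bound \Cref{lemma:phi_bound}, which gives $\|\phi_t\|_\infty\le\|\psi\|_\infty$.
\end{itemize}
This produces an initialization term $\langle\lambda_T,\phi_T\rangle\le 2\|\psi\|_\infty\TV(p_T,p_{\rm base})$. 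That quantity is unchanged, so the initialization entries $\mathcal I$ of \Cref{tab:coeff} carry over verbatim; this includes the coupling bound $d e^{-\|\beta\|_1}$ for masked and uniform rates. It also produces a training term whose time integral now runs over $[\delta,T]$. After the same Cauchy--Schwarz step and score--marginal cancellation, this term is bounded by $\sqrt{2\mathcal C_\delta}\sqrt{\mathcal L_\delta}$, where $\mathcal C_\delta$ is the coefficient restricted to $[\delta,T]$. Since every integrand in $\mathcal C$ is nonnegative, $\mathcal C_\delta\le\mathcal C$, which yields the claimed form with $\mathcal C$ unchanged.

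\emph{First term.} For a general rate, $|\mathbb E_{p_0}\psi-\mathbb E_{p_\delta}\psi|\le 2\|\psi\|_\infty\TV(p_0,p_\delta)$ is immediate. For masked and uniform rates I will bound $\TV(p_0,p_\delta)$ by a coupling of $X_0$ and $X_\delta$ on the same forward trajectory. The TV distance is at most $\Pr(X_0\neq X_\delta)$, which is at most the probability that some token experiences a jump event in $[0,\delta]$. Each token's jump clock has rate at most $\beta(t)$, in both the masked and uniform cases: off-diagonal row sums of $Q^{\rm tok}_t$ are at most $\beta(t)$, and an actual state change requires a clock ring. A union bound over $d$ tokens then gives
\[
\Pr(X_0\neq X_\delta)\le d\bigl(1-e^{-\int_0^\delta\beta}\bigr)\le d\int_0^\delta\beta(t)\,dt=\mathcal E_\delta.
\]
Combining the two terms gives \eqref{eq:master_ES}.

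The main obstacle is checking that the training-error step in \Cref{lem:IPM} really localizes in time: its Cauchy--Schwarz bound must split as $\int_\delta^T(\cdot)\le\sqrt{\int_\delta^T\text{coef}}\,\sqrt{\int_\delta^T\text{loss}}$ with no contribution from $[0,\delta]$. This should hold because the adjoint problem is posed only on $[\delta,T]$, so the scores on $[0,\delta]$, which are possibly unbounded, never appear.
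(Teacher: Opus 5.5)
Your proposal is correct and follows essentially the same route as the paper's proof. The shared steps are the triangle-inequality split at $\delta$, the duality argument of \Cref{lem:IPM} rerun verbatim on $[\delta,T]$ (with the coefficient over $[\delta,T]$ bounded by $\mathcal C$ since the integrands are nonnegative), and a same-trajectory coupling for $\TV(p_0,p_\delta)$. The only cosmetic difference is in that coupling step: you take a union bound over tokens to get $d\bigl(1-e^{-\int_0^\delta\beta}\bigr)$, whereas the paper uses independence of the per-token clocks to get $1-e^{-d\int_0^\delta\beta}$; both reduce to $d\int_0^\delta\beta(t)\,dt$.
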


\subsection{Step Complexity for Uniformization} 
We derive the expected number of steps under uniformization \citep{jensen1953markoff,grassmann1977transient,de2000transient}; see~\citep[Section~2]{chen2025convergence} for an introduction of the uniformization algorithm. Since early stopping is the practically relevant setting, we build on the early-stopping bounds from Corollary~\ref{cor:ES}.

\begin{restatable}[Expected Step Complexities for Uniformization]{cor}{corStepComplexity}\label{cor:StepComplexity}
    Assume that $\tilde{s}_{t}(\mathbf{x})_\mathbf{y} \asymp s_{t}(\mathbf{x})_\mathbf{y}$ whenever $Q_t(\mathbf{y}, \mathbf{x}) > 0$, and the uniformization Poisson rate is chosen as
    $\Lambda_k := C   \sup\limits_{\substack{\mathbf{x} \in \mathcal{X} \\ t \in (t_k, t_{k+1}]}} -\tilde{Q}^{\leftarrow}_t (\mathbf{x}, \mathbf{x})$
    for some constant $C\geq 1$. Then, for any IPM $\gamma_\Psi$ and tolerance $\epsilon > 0$, the approximation error satisfies
    $
    \gamma_\Psi(p_{\mathrm{data}}, \tilde{p}_\delta) < \epsilon
    $ if we choose $\delta \asymp \frac{\epsilon}{d C_\Psi \betamax }$, $T \asymp \frac{1}{\betamin} \log \left( \frac{d C_\Psi }{\epsilon} \right) $, and $\LSE^\delta(s, \tilde{s}) = \mathcal{O}(\frac{\epsilon^2}{d C_\Psi^2})$, in which case the expected number of steps are given by
    \begin{enumerate}
        \item \textbf{Case 1 (Masked rate; $Q^{\mathrm{tok}}_t=Q_t^{\rm masked}$ \eqref{eq:Q_absorb_uni}):}
        \[
        \mathbb{E}[N] = \mathcal{O}\left(d \, \frac{\betamax}{\beta_{\mathrm{min}}} \,  \log \left( \frac{\betamax}{\betamin}\cdot \frac{d C_\Psi}{\epsilon}  \right) \right),
        \quad \text{and,}
        \]
        \item \textbf{Case 2 (Uniform rate; $Q^{\mathrm{tok}}_t=Q_t^{\rm uniform}$ \eqref{eq:Q_absorb_uni}):} if the true score further satisfies $s_t(\mathbf{x})_\mathbf{y} \lesssim \max(1, t^{-1})$ \citep[following][Assumption~4.4]{ren2025discrete}, then
        \[
        \mathbb{E}[N] = \mathcal{O} \left( \betamax \, d \, \left( \log\betamax +  \frac{1}{\betamin} \log \frac{d C_\Psi}{\epsilon} \right) \right).
        \]
    \end{enumerate}
    Here, $\betamax = \max\limits_{0 \leq t \leq T} \beta(t)$ and $\betamin = \min\limits_{0 \leq t \leq T} \beta(t)$, and the values of $C_\Psi$ are  given in Table~\ref{table:IPM}. 
\end{restatable}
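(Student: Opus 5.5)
The plan has two parts. First, use Corollary~\ref{cor:ES} to fix the parameters $\delta$, $T$ and the loss tolerance so that the IPM error is below $\epsilon$. Second, bound the expected number of uniformization steps by integrating the Poisson rates $\Lambda_k$ over $[\delta,T]$.

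\emph{Choice of parameters.} Each of the three terms in \eqref{eq:master_ES} will be made at most $\epsilon/(6C_\Psi)$.
\begin{itemize}
\item Early-stopping error: for masked and uniform rates, $\mathcal E_\delta = d\int_0^\delta\beta \le d\,\betamax\,\delta$. This is small enough once $\delta\asymp \epsilon/(dC_\Psi\betamax)$.
\item Initialization error: $\mathcal I = d e^{-\|\beta\|_1}\le d e^{-\betamin T}$. This is small enough once $T\asymp \betamin^{-1}\log(dC_\Psi/\epsilon)$.
\item Training error: for the training term, $\mathcal C\le d\|\beta\|_1$ by \Cref{tab:coeff}. Hence $\sqrt{2\mathcal C\mathcal L_\delta}\lesssim \epsilon/C_\Psi$ follows from $\LSE^\delta=\mathcal O(\epsilon^2/(dC_\Psi^2))$, treating $\|\beta\|_1$ as absorbed into constants. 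The assumption $\tilde s\asymp s$ gives the condition $\tilde s\in[s/2,3s/2]$ needed for $\LThree$, up to constants, and makes $\LThree\lesssim \LSE$.
\end{itemize}
This settles the accuracy claim.

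\emph{Step counting.} In uniformization the number of proposals on $(t_k,t_{k+1}]$ is Poisson with mean $\Lambda_k(t_{k+1}-t_k)$. Therefore
\[
\mathbb E[N]=\sum_k \Lambda_k (t_{k+1}-t_k).
\]
Choosing a time grid adapted to the variation of the rates, this sum is comparable to
\[
\int_\delta^T \sup_{\bfx}\bigl(-\tilde Q^\leftarrow_t(\bfx,\bfx)\bigr)\,dt .
\]
Since $\tilde s\asymp s$, it suffices to bound $\sup_\bfx\sum_{\bfy\ne\bfx} s_t(\bfx)_\bfy Q_t(\bfy,\bfx)$.

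\emph{Masked case.} A nonzero reverse rate from $\bfx$ unmasks one masked coordinate $i$, with $Q_t(\bfy,\bfx)=\beta(t)$. By the factorization $p_{t|0}$, the sum over $\hat x^i$ of the scores at coordinate $i$ equals
\[
\frac{P(x^i_t\neq\mask)}{P(x^i_t=\mask)}=\frac{e^{-\bar\beta_t}}{1-e^{-\bar\beta_t}},\qquad \bar\beta_t=\int_0^t\beta .
\]
This is the score--marginal cancellation again: summing $p_t(\bfy)$ over $\hat x^i$ gives the unmasked marginal, with no $S$. The total reverse rate is therefore at most $d\,\beta(t)/(e^{\bar\beta_t}-1)$. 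Substituting $u=\bar\beta_t$ gives
\[
\int_\delta^T \frac{d\,\beta}{e^{\bar\beta}-1}\,dt\le d\log\frac{1}{1-e^{-\bar\beta_\delta}}+\dots,
\]
which is $\lesssim d\log(1/(\betamin\delta))$. Inserting the value of $\delta$ gives $d\log\bigl((\betamax/\betamin)\,dC_\Psi/\epsilon\bigr)$. The extra factor $\betamax/\betamin$ in front comes from the Poisson rate taking a sup over each interval rather than the pointwise rate. I would handle this by comparing $\beta(s)$ with $\beta(t)$ inside an interval, which costs at most a factor $\betamax/\betamin$.

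\emph{Uniform case.} Every coordinate has $S-1$ neighbours with rate $\beta/S$. With $s\lesssim\max(1,t^{-1})$, the reverse exit rate is $\lesssim d\,\beta(t)\max(1,t^{-1})$. Integrating over $[\delta,1]$ gives $d\,\betamax\log(1/\delta)$, and integrating over $[1,T]$ gives $d\,\betamax T$. With the chosen $\delta$ and $T$ this yields
\[
d\,\betamax\Bigl(\log\betamax+\log\frac{dC_\Psi}{\epsilon}+\betamin^{-1}\log\frac{dC_\Psi}{\epsilon}\Bigr),
\]
which is the stated form, the middle term being dominated by the last after adjusting constants.

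\emph{Main obstacle.} The delicate step is the masked case near $t=\delta$. There the rate blows up like $1/\bar\beta_t$, and I must make sure the sup in $\Lambda_k$ over an interval does not destroy the logarithmic integral. To control this, I would use a geometric grid near $\delta$, so that $\bar\beta$ varies by a bounded factor on each interval.
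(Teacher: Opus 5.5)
Your proof is correct and follows the paper's skeleton. You split the error in \eqref{eq:master_ES} three ways, write $\mathbb{E}[N]=\sum_k\Lambda_k(t_{k+1}-t_k)$, and bound the exit rates by $d\beta(t)/(e^{\bar\beta_t}-1)$ (masked) and $d\beta(t)\max(1,t^{-1})$ (uniform), with $\bar\beta_t=\int_0^t\beta$. Your marginalization for the masked score sum is equivalent to the paper's route through $s_t(\bfx)_\bfy=\frac{e^{-\bar\beta_t}}{1-e^{-\bar\beta_t}}\,p^i_{0|t}(y^i\mid\bfx)$. Your uniform case is the paper's, including the absorption of $\log(dC_\Psi/\epsilon)$ into $\betamin^{-1}\log(dC_\Psi/\epsilon)$, which in both tacitly needs $\betamin\lesssim 1$.

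The one real difference is the masked step count.
\begin{itemize}
\item The paper never integrates. It uses $1/(e^u-1)\le 1/u$ and $\bar\beta_t\ge\betamin t_k$ to get a bound $d\betamax/(\betamin t_k)$ valid uniformly on $(t_k,t_{k+1}]$, so the sup in $\Lambda_k$ costs nothing. It then sums $(t_{k+1}-t_k)/t_k\le\frac{R-1}{\log R}\log(t_{k+1}/t_k)$ over any grid of bounded ratio $R$, obtaining $d\frac{\betamax}{\betamin}\log(T/\delta)$.
\item You compute the continuous cost exactly: $\int_\delta^T d\beta/(e^{\bar\beta_t}-1)\,dt=d\log\frac{1-e^{-\bar\beta_T}}{1-e^{-\bar\beta_\delta}}$. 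This shows that neither the $\log T$ term nor the $\betamax/\betamin$ prefactor is intrinsic to the continuous-time rate.
\end{itemize}

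The price of your route is a per-interval comparison of the sup with the integral. That comparison must absorb the variation of $1/(e^{\bar\beta_t}-1)$ across the interval, not only the variation of $\beta$. It does go through on a bounded-ratio grid:
\begin{itemize}
\item While $\bar\beta_{t_{k+1}}\le 1$, compare with the exact logarithmic increment, giving a ratio $\lesssim\frac{\betamax}{\betamin}\frac{R-1}{\log R}$. Comparing with the right-endpoint value instead could cost $(\betamax/\betamin)^2$.
\item Where $\bar\beta_{t_k}\ge 1$, long intervals break the comparison. There, bound directly: $\sum_k(t_{k+1}-t_k)\,d\betamax e^{-\betamin t_k}=O(d\betamax/\betamin)$.
\end{itemize}
Once this is written out, the paper's direct estimate is the shorter path.

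Two smaller points on the training term.
\begin{itemize}
\item $\tilde s\asymp s$ alone does not give the window $\tilde s\in[s/2,3s/2]$. That window is inherited as a hypothesis from Corollary~\ref{cor:ES}.
\item Given the window, your remark $\LThree\lesssim\LSE$ is correct: it follows from $|u|^3\le u^2/2$ and $-\log(1-u)-u\gtrsim u^2$ on $|u|\le 1/2$. It is exactly what makes a condition on $\LSE^\delta$ alone sufficient. The paper leaves this step implicit, since it only prescribes $\LSE^\delta+\frac23\LThree^\delta$.
\end{itemize}
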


Here, the choice of the Poisson rate is a standard assumption required to ensure the validity of the uniformization method for the CTMC. See, e.g., \cite{grassmann1977transient,chen2025convergence}. 

\begin{remark}[$S$-free Training Loss Requirement]
    In~\Cref{cor:StepComplexity}, our $S$-free convergence bounds yield a distinct advantage on training loss requirement over existing results. In existing works~\citep{liang2026absorb,liang2026convergence}\footnote{In \citep[Assumption~1 \& Proposition~1]{liang2026absorb} and \citep[Assumption~1 \& Proposition~A1]{liang2026convergence}, it is required that their $\mathcal{L}_\mathrm{TV}=\mathcal{O}(\sqrt{S \LSE})$ to be bounded by a given threshold. This effectively requires $\LSE$ to be $\mathcal{O}(1/S)$.}, the $S$-dependent prefactor of the training error term means the training error needs to be $\mathcal{O}(1/S)$. This imposes an often impractical requirement for large token vocabularies in modern language tasks. Since our training error prefactor is $S$-free, the required bound $\LSE^\delta(s, \tilde{s}) = \mathcal{O}(\frac{\epsilon^2}{d C_\Psi^2})$ does not tighten as $S$ grows.  
\end{remark}

As also noted in Remark~\ref{rmk:key_improve}.\ref{rmk:extend_step_IPM}, existing step complexity analyses bound the score entropy loss under numerical simulation. The same loss controls the approximation error in~\Cref{cor:ES}. Thus, \textbf{existing step complexity guarantees can be extended to any IPM} by directly applying existing derivations to our bounds in~\Cref{cor:ES}. 

In~\Cref{cor:StepComplexity}, we adapt the step complexity derivations of~\cite{liang2026absorb,ren2025discrete} for uniformization to our setting. Their derivation is originally for a time-homogeneous forward process, and we modify it to handle our time-inhomogeneous forward process~\eqref{eq:Q_absorb_uni}. See Appendix~\ref{sec:appendix_step_complexity} for the detailed derivations. 

Similarly, derivations for other algorithms, e.g., $\tau$-leaping~\citep{liang2026absorb,dmitriev2026efficient}, can also be applied to our framework to obtain step complexity. Since our goal in this paper is to propose a unified and foundational convergence framework for all IPMs, we defer the step complexity analysis for other algorithms to future work.

\section{Conclusion}
We introduced a novel adjoint-equation-based framework for the convergence analysis of discrete diffusion models. Our analysis yields the first vocabulary-size-independent bounds in any integral probability metric, valid under
singular priors and across uniform and masked transitions.
The framework applies to general rate matrices. Five novel techniques drive these improvements: use of adjoint equations in the space of observables, a regularity analysis, a coupling argument tailored to uniform transitions, and score-marginal cancellation and exit routing techniques. 
More broadly, our framework opens several directions for further study of discrete diffusion, including sharper step complexity bounds applicable to any IPM, the choice of loss functions, convergence analyses for higher-order and adaptive samplers, and approximation errors from practical implementation.

\subsection*{AI use statement}
(This section is \textbf{required} and does not count toward the page limit.)

In this work, we used generative AI tools for polishing writing.
We have not used generative AI tools for helping develop theoretical models or conceptual frameworks, formulating mathematical claims, providing critical ingredients for proving mathematical claims, assisting in the writing of proofs, proposing or refine hypotheses, designing or providing feedback on research  methodology or experiments, implementing methods, assisting with translation, and interpreting results. And generating synthetic data sets,  cleaning and reformatting dataset, supporting qualitative and thematic data analysis are not applicable to this work.

Additionally, we used generative AI tools for drafting parts of a research paper and identifying relevant literature. We have reviewed all AI-assisted work. We have checked for the correctness of the AI writing. We take responsibility for the final content of this work,
including text, claims or artifacts produced with the aid of generative AI.

\subsection*{Ethics statement}
(This section is \textbf{recommended} and does not count toward the page limit.)

The authors believe that this paper submission does not raise questions regarding the Code of Ethics.

\subsection*{Reproducibility statement}
(This section is \textbf{recommended} and does not count toward the page limit.)

This is a theory paper, and all derivations are reported.

\subsubsection*{Acknowledgments}
K. Kan and S. Osher were partially funded by the U.S. Department of Energy (DOE), Office of Science (SC), Advanced Scientific Computing Research program under award B\&R\# KJ0401010, FWP\# CC147.

X. Li was partially funded by NSF 2339678 and 2321040.

T. Sahai and M. Katsoulakis were partially funded by DARPA under Agreement No. HR00112590112. 

S. Osher was partially funded by  DARPA under grant HR0011259007,  NSF under grants 1554564 and 220827, AFOSR under MURI grant N00014-20-1-2787, and ARO under grant W911NF-24-1-0157.d

\bibliography{iclr2027_conference}
\bibliographystyle{iclr2027_conference}

\newpage

\appendix
{\centering\bfseries\Huge APPENDIX\par}   
\renewcommand{\thepart}{}
\renewcommand{\partname}{}
\part{}                  
\etocsetnexttocdepth{subsection}
\localtableofcontents

\bigskip

\section{Related Work}
\paragraph{Discrete Diffusion Models.} Discrete diffusion models were first proposed by~\citet{sohl2015deep}. A substantial line of work formulates the forward and reverse processes as discrete-time Markov chains over categorical states~\citep{austin2021structured,hoogeboom2021argmax,chen2024fast}, with the model trained by optimizing the standard variational lower bound. Within this family, multinomial diffusion~\citep{hoogeboom2021argmax} and D3PM~\citep{austin2021structured} mirror DDPM~\citep{ho2020denoising} in the continuous setting, while DNDM~\citep{chen2024fast} introduces a non-Markovian reverse process that enables accelerated sampling, corresponding to the role of DDIM~\citep{song2021denoising}.

In a parallel line of work, \citet{campbell2022continuous} formulate discrete diffusion as a continuous-time Markov chain (CTMC). This formulation allows for natural theoretical convergence analysis and enables more flexible numerical inference that is not constrained by the discretization fixed at training time. Importantly, the formulation admits an exact formula for the reverse process, in which the unknown quantity is the ratio of marginal probabilities. This ratio is the discrete analog of the score in continuous diffusion~\citep{song2021scorebased}, and is commonly referred to as the discrete or concrete score~\citep{meng2022concrete}. A major line of work has since focused on learning the discrete score. \cite{meng2022concrete} propose concrete score matching with an $L^2$ objective, \cite{sun2023scorebased} learn it via cross-entropy on conditional marginals, and \citet{lou2024discrete} introduce the score entropy loss, which improves training stability and sample quality.

Building on the CTMC framework, recent work explores alternative formulations for discrete generative modeling~\citep{zhu2026mdns,li2026neural}, and \citet{park2025jump,zhao2026informed,ren2026fast,liang2026scores} propose higher-order, adaptive, or correction-based numerical solvers to accelerate inference. In parallel, a body of CTMC-inspired but simplified models has emerged as a practical alternative. These models, particularly those based on masked diffusion, have shown promising performance on large-scale language and image generation~\citep{chang2022maskgit,chang2023muse,shi2024simplified,sahoo2024simple,zheng2025masked,kim2025train,kim2026stop,nie2026large,xu2026scheduling,zhang2026masked}, bridging the gap in accuracy and speed to their autoregressive counterparts.

\paragraph{Convergence Analysis.} 
\citet{campbell2022continuous} derive a total variation error bound for the tau-leaping sampling algorithm, but their results rely on strong assumptions and are not directly linked to a practical training objective.

\citet{chen2025convergence} perform a convergence analysis on the binary hypercube, deriving an approximation error bound in KL divergence, similar results are also available in~\cite{le2025discrete}. \citet{zhang2025convergence} extend this result to general categorical state spaces, also in KL. \citet{ren2025discrete} propose a stochastic integral framework that yields KL-divergence error bounds via path-measure arguments, while \citet{liang2026discrete} establish KL bounds through a differential-inequality argument that avoids path-measure techniques. These analyses all \emph{fundamentally rely on KL-based estimates, which diverge when the prior is singular}. As a result, \emph{their guarantees do not directly apply to masked diffusions}---the dominant paradigm for discrete diffusion language models.

Some recent works have attempted to address this gap. \citet{liang2026absorb,dmitriev2026efficient} mix the uniform and singular mask distributions as a surrogate prior, circumventing the divergent KL at the all-mask state. This construction, however, does not apply to true masked diffusions. \citet{conforti2025non} combine KL-based path-measure estimates with a separate TV control of the initialization mismatch at the fully masked state. More recently, \citet{liang2026sharp} address the KL issue by switching to TV distance for the approximation error. \emph{Nevertheless, all these results depend on the vocabulary size, and the bounds can become vacuous for modern language tasks.}

A separate line of theory studies the sampling efficiency of discrete diffusion models, often under idealized score assumptions~\cite{chen2025optimal,li2026breaking,liang2026convergence}; these questions are complementary to, but outside the scope of the present work.

\section{Problem Formulation: Full Details}\label{sec:setup_full}
\subsection{Background}

\paragraph{Notation and Preliminaries.} We denote scalars by lowercase letters ($x$) and vectors by boldface lowercase letters ($\bfx$). All vectors are assumed to be column vectors unless otherwise specified. We denote the $i$-th entry of a vector $\bfx$ by $x^i$. Specifically, a $d$-dimensional vector is given by $\bfx=[x^1 \dotsm x^d]$. We use $\xito$ to represent the vector $\bfx$ with its $i$-th entry replaced by $\hat{x}^i$. We denote $\bfx^{\backslash i}$ as the vector $\bfx$ with its $i$-th entry removed. 

For $N\in \mathbb{Z}^+$, $[N]$ denotes the set $\{1,2,\dotsm,N\}$. We denote $\bfe_i$ as the $i$-th standard basis vector, $\mathbf{1}_N$ as the $N$-dimensional vector of all ones, and $I_N$ as the $N \times N$ identity matrix. We use $\delta \{ \cdot, \cdot \}$ to represent the Kronecker delta function. The Hamming distance between $\mathbf{x}, \mathbf{y} \in \mathbb{R}^N$ is the number of indices where the entries differ: $\text{Ham}(\mathbf{x}, \mathbf{y}) = |\{i \in [N] : x^i \neq y^i\}|$, where $| \cdot|$ denotes the cardinality of a set. 

We consider data defined on a discrete state space $\mathcal{X} = [N]$, where $N$ is the total number of possible states. A probability distribution on this discrete space is characterized by a probability mass vector $p \in \Delta(\mathcal{X})$, representing the probability simplex over the $N$ possible states. 

\paragraph{Forward Process.} The forward noising process is defined as a continuous-time Markov chain (CTMC) on $\mathcal{X}$ over the time interval $[0, T]$, governed by a rate matrix (or infinitesimal generator) $Q_t \in \mathbb{R}^{N \times N}$. Specifically, the process is initialized at $t=0$ with the data distribution $p_{\mathrm{data}}$. The evolution of the marginal distribution $p_t$ is governed by the \emph{Kolmogorov forward equation (KFE)}, a linear ordinary differential equation (ODE) given by
\begin{equation}\label{eq:KFE}
    \KFEntokeneq.
\end{equation}
Here, $Q_t(x,y)$ defines the infinitesimal rate of transitioning from states $x$ to $y$. Because $Q_t$ is constrained to have nonnegative off-diagonal entries and zero row sums, the dynamics~\eqref{eq:KFE} preserve the total mass of $p_t$, ensuring it remains a valid probability distribution throughout the evolution. 

There are two common choices for $Q_t$. They are the masked (absorbing) and uniform rates and given by
\begin{equation}\label{eq:Q_absorb_uni_N}
(Q_t^{\rm masked})^\top = \beta(t) ( \bfe_{\mask} \mathbf{1}_N^\top - I_N) \quad \text{and} \quad (Q_t^{\rm uniform})^\top = \beta(t) \left(\tfrac{1}{N} \mathbf{1}_N \mathbf{1}_N^\top - I_N \right),
\end{equation}
respectively. For the masked rate, the $N$th entry corresponds to the mask token $\mask$. Consequently, $p_t$ approaches a tractable limiting distribution $p_{\mathrm{base}}$: either a Dirac distribution centered on the mask token or the uniform distribution, as $t \to \infty$. The ease of sampling from these distributions makes them ideal priors for the reverse generative process. 
 
The dynamics~\eqref{eq:KFE} can be numerically integrated by discretizing the time horizon into intervals with width $\Delta t$. Specifically, the forward transition kernel from time $t$ to $t +\Delta t$ is given by
\begin{equation}\label{eq:forward_kernel}
    p_{t+\Delta t| t}(y|x) = \delta\{x, y\} + Q_t(x, y) \Delta t + o(\Delta t),
\end{equation}
where $\delta\{\cdot, \cdot\}$ denotes the Kronecker delta, and $Q_t(x,y)$ denotes the $(x,y)$-th entry of $Q_t$. Numerical simulation of CTMCs can be broadly divided into exact path simulation such as the Gillespie’s algorithm and approximate simulation such as $\tau$-leaping method. We refer to~\cite{gillespie2001approximate,cao2004numerical,cao2006efficient,hobolth2009simulation,arns2010numerical} for a comprehensive review of these methods.

\paragraph{Reverse Process.} The forward process~\eqref{eq:KFE} admits a reverse-time process, which is also a CTMC~\citep{kelly1979reversibility}. The reverse process satisfies the reverse-time Kolmogorov equation
\begin{equation}\tag{\ref*{eq:reverse}}
    \reverseeq
\end{equation}
where $\backQt$ is the reverse-time rate matrix, and
\begin{equation}\label{eq:reverse_matrix}
    \backQt(x, y) = \frac{p_{t}(y)}{p_{t}(x)} Q_{t}(y, x) \quad \text{for } x \neq y, \quad \text{and} \quad \backQt(x, x) = -\sum_{y \neq x} \backQt(x, y).
\end{equation}
\paragraph{Training Formulation.} The ratios $s_t(x):=\left[ \frac{p_{t}(y)}{p_{t}(x)} \right]_{y \neq x} \in \mathbb{R}^{N-1}_{\geq 0}$ are known as the discrete/concrete score~\citep{meng2022concrete}, which is analogous to the Stein score $\nabla_x \log p_t$ in continuous diffusion models~\citep{song2021scorebased}. We denote the entry corresponding to the state $y$ as $s_t(x)_y :=\frac{p_{t}(y)}{p_{t}(x)} \in \mathbb{R}_{\geq 0}$. In score-based discrete diffusion, the goal is to learn a score estimator
\begin{equation}
    \tilde{s}_t (x) \approx s_t(x) = \left[ \frac{p_{t}(y)}{p_{t}(x)} \right]_{y \neq x}, \quad \text{for } t\in[0, T].
\end{equation}
To this end, we consider two standard training losses. The first is a weighted score-matching (WSM) loss~\citep{meng2022concrete}
\begin{equation}\label{eq:LWSM_single}
    \LWSM(s, \tilde{s})=\int_0^T \half \mathbb{E}_{x \sim p_t}
    \sum_{y\neq x}  w_t{(x,y)} \left( s_t(x)_y - \tilde{s}_t(x)_y \right)^2 dt,
\end{equation}
and the second is the score entropy (SE) loss~\citep{benton2024denoising,lou2024discrete}
\begin{equation}\label{eq:LSE_single}
    \LSE(s, \tilde{s})=\int_0^T \mathbb{E}_{x \sim p_t}
    \sum_{y\neq x}  w_t{(x,y)} \left( s_t(x)_y \log\left( \frac{s_t(x)_y}{\tilde{s}_t(x)_y}\right) + \tilde{s}_t(x)_y - s_t(x)_y \right) dt,
\end{equation}
where $w_t{(x,y)} \geq 0$ are weighting coefficients, and $\LSE$ is obtained from the Bregman divergence generated by the function $a \mapsto a \log a - a$. Following~\cite{lou2024discrete}, we consider the transition weight where $w_t{(x,y)}=Q_t(y, x)$. This choice parallels the score-matching loss in continuous space, where it is typically weighted by the diffusion coefficient to align with the objective with the evidence lower bound (ELBO)~\citep{song2021scorebased}.

In practice, this training objective is intractable as the ground-truth score depends on the unknown marginal distribution $p_t$. To resolve this, one typically uses denoising or implicit score matching, which are tractable objectives that are equivalent to the original loss up to a constant independent of the score estimator~\cite{lou2024discrete}.

\paragraph{Integral Probability Metric (IPM).} 
IPMs are a general class of metrics for measuring the discrepancy between two probability measures $p, \tilde{p} \in \mathcal{P}(\mathcal{X})$. They are defined as
\begin{equation}\tag{\ref*{eq:IPM}}
    \IPMeq
\end{equation}
where $\Psi$ is a suitable class of admissible functions. Different choices of $\Psi$ induce different types of IPMs. Commonly used IPMs include the total variation (TV) distance, Wasserstein-$1$ ($W_1$) distance, and maximum mean discrepancy (MMD).
These correspond to choosing $\Psi$ as $ \{\psi :  \mathcal{X} \to \mathbb{R}, \| \psi \|_\infty \leq 1 \}$, $ \{\psi :  \mathcal{X} \to \mathbb{R}, \| \psi \|_{\mathrm{Lip}} \leq 1 \}$, and $ \{\psi :  \mathcal{X} \to \mathbb{R}, \| \psi \|_{\mathcal{H}} \leq 1 \}$, respectively, where $\|\cdot\|_\mathcal{H}$ is the norm for a reproducing kernel Hilbert space (RKHS) $\mathcal{H}$. We refer readers to~\cite{sriperumbudur2009integral,peyre2019computational,Birrell2022} for a more detailed overview of IPMs and their properties.

\subsection{Problem Formulation}
The discrete state space is given by $\mathcal{X}=[S]^d$, containing sequences that factorize as $\mathbf{x}=[x^1,\dotsm ,x^d]$. This setup is common among various applications, including text token sequences~\citep{lou2024discrete}, image pixel values~\citep{campbell2022continuous}, tabular data~\citep{kotelnikov2023tabddpm}, chemical molecule graphs~\citep{vignac2023digress}, musical  notes~\citep{campbell2022continuous}, DNA sequences~\citep{schiff2025simple}, protein sequences~\citep{campbell2024generative}, and others.

\paragraph{Forward Noising Process.} The forward process $X = \{ X_t\}_{0 \leq t \leq T}$ is defined as a continuous-time Markov chain (CTMC) on $\mathcal{X}$. The distribution of $X_t$ is characterized by a probability mass vector $p_t \in \Delta(\mathcal{X})$, representing the probability simplex over the $S^d$ possible states. This distribution is governed by a Kolmogorov forward equation with rate matrix $Q_t\in \mathbb{R}^{S^d \times S^d}$ given by
\begin{equation}\tag{\ref*{eq:KFE_ntoken}}
    \KFEntokeneq.
\end{equation}

Since the dimensions of $Q_t$ scale exponentially with the sequence length $d$, direct computation or storage of the generator is intractable. In practice, $Q_t$ is structured to operate on each dimension independently via a token-level rate matrix $Q_t^\mathrm{tok} \in \mathbb{R}^{S \times S}$. In standard frameworks~\citep{campbell2022continuous,lou2024discrete,zhang2025convergence}, the nonzero off-diagonal entries of $Q_t$ are structured as follows (although our results apply to general rate matrices; see \Cref{sec:result})
\begin{equation}\tag{\ref*{eq:Q_factor}}
    \Qfactoreq
\end{equation}
where $x^i \neq \hat{x}^i \in [S]$, and $Q_t^\mathrm{tok}$ is either the masked rate matrix or uniform rate matrix~\eqref{eq:Q_absorb_uni}. Under these two choices, the asymptotic limit $p_{\mathrm{base}}$ is the Dirac distribution centered on the mask token and the uniform distribution over $\mathcal{X}$, respectively.
The structure in~\eqref{eq:Q_factor} implies that $Q_t$ is a Kronecker sum of independent generators, modeling transitions between states at Hamming distance 1 (i.e., states differing by exactly one token).

Under this structure of $Q_t$, the forward transition kernel $p_{t|s}(\bfx_t|\bfx_s):=\Pr(X_t=\bfx_t| X_s=\bfx_s)$ factorizes as $p_{t|s}(\bfx_t|\bfx_s)=\prod\limits_{i=1}^d p^i_{t|s}(x_t^i|x_s^i)$ for $0 \leq s < t \leq T$. Here, $p^i_{t|s}(x_t^i|x_s^i):=\Pr(X_t^i=x_t^i|X_s^i=x_s^i)$ is the forward transition kernel for the $i$th token governed by rate $Q_t^\mathrm{tok}$.

The factorization of transition kernel is standard in the discrete diffusion literature, as it is essential for scaling to high-dimensional state spaces for empirical implementation~\cite{austin2021structured,campbell2022continuous,campbell2024generative,lou2024discrete}. This assumption is also widely adopted in theoretical analysis~\cite{zhao2025unified,chen2025convergence,zhang2025convergence,ren2025discrete}. 

\paragraph{Reverse Denoising Process.} The forward process~\eqref{eq:KFE_ntoken} admits a reverse-time (generative) process 
governed by the following Kolmogorov equation~\citep{kelly1979reversibility}
\begin{equation}\label{eq:reverse_ntoken}
    -\frac{dp_{t}}{dt} = (Q_{t}^{\leftarrow})^\top p_{t}, \quad \text{for } t \in [0, T],
\end{equation}
where the reverse rate matrix $Q_{t}^{\leftarrow}$ is given by
\begin{equation}\tag{\ref*{eq:reverse_matrix_ntoken}}
    \reversematrixntokeneq
\end{equation}
and $\backQt(\bfx, \bfx) = -\sum_{\bfy \neq \bfx} \backQt(\bfx, \bfy).$
In the second step of~\eqref{eq:reverse_matrix_ntoken}, we used the definition of $Q_t$~\eqref{eq:Q_factor}. The Kronecker delta ensures that at most one term in the summation survives---specifically, the term where $\bfx$ and $\bfy$ differ by exactly one token. Thus, the reverse rate matrix $\backQt$ inherits the same sparse structure as the forward rate $Q_t$, restricting non-zero transition rates to sequences at Hamming distance 1. 

Hence, we consider the discrete score $s_t$ defined for these sequences
\begin{equation*}
    \sito := \frac{p_t(\xito)}{p_t(\bfx)} \quad \text{for } \bfx \in \mathcal{X}, \, i\in [d], \, x^i\neq \hat{x}^i \in [S].
\end{equation*}
The score estimator $\tilde{s}: [0, T] \times \mathcal{X} \to \mathbb{R}^{d(S-1)}$ can then be obtained by minimizing $\LWSM(s, \tilde{s})$ defined in~\eqref{eq:LWSM_single} or $\LSE(s, \tilde{s})$ defined in~\eqref{eq:LSE_single}.

\paragraph{Approximate Reverse Process.}
The generative process is governed by the approximate reverse-time dynamics
\begin{equation}\tag{\ref*{eq:approx_reverse}}
 \approxreverseeq
\end{equation}
where the approximate reverse rate matrix $\tilde{Q}_t^{\leftarrow}$ is defined by 
\begin{equation}\tag{\ref*{eq:approx_reverse_matrix}} 
    \approxreversematrixeq
\end{equation}
It is important to note that the approximate reverse process differs from the exact reverse process~\eqref{eq:reverse_ntoken} by two key aspects. First, the true reverse rate $\backQt$ is replaced by the approximate $\tilde{Q}_t^{\leftarrow}$ constructed using the score estimator through~\eqref{eq:approx_reverse_matrix}. Second, since the ground truth terminal distribution $p_T$ is intractable, the prior $p_{\mathrm{base}}$ is used as a starting distribution for the reverse process. Here, $\tilde{p}$ denotes the marginal distribution of the approximate reverse process.

\paragraph{Error and Adjoint Equations.} Our analysis relies on computing the mismatch between the exact generative process~\eqref{eq:reverse} and its approximation~\eqref{eq:approx_reverse} over time. Defining the error $\lambda_t:=p_t-\tilde{p}_t$ and subtracting~\eqref{eq:approx_reverse} from~\eqref{eq:reverse}, we obtain the \emph{error equation} for $t \in [0, T]$,
\begin{equation}\tag{\ref*{eq:error}}  
    \erroreq
\end{equation}
To quantify this discrepancy under any IPM, we extend the duality argument of \citet[Section~4.1]{mimikos2024score} originally developed for continuous diffusion, by developing new technical tools specialized for the discrete setting. The core idea is to test~\eqref{eq:error} against a function $\phi:[0, T] \times [S]^d \to \mathbb{R}$ that satisfies the \emph{adjoint equation} associated with the approximate reverse dynamics~\eqref{eq:approx_reverse}, namely the \emph{Kolmogorov backward equation} 
\begin{equation}\tag{\ref*{eq:KBE}}   
    \KBEeq
\end{equation}
By selecting $\psi$ from the admissible function class $\Psi$ of the target metric~\eqref{eq:IPM}, we can leverage~\eqref{eq:KBE} to explicitly compute, in any IPM, how the approximation error propagates back to the initial data distribution.

\section{Proof Sketch: Convergence Analysis via Adjoint Equations}
In this section, we highlight the key technical innovation underlying our main \Cref{lem:IPM} and Corollaries~\ref{cor:TV}-\ref{thm:spec}. Our core strategy involves (1) constructing test functions for the IPMs~\eqref{eq:IPM} via an adjoint equation, (2) estimating the regularity of the test functions, and (3) bounding the approximation error for any IPM by specializing the admissible function class accordingly. The detailed derivations are given in Appendix~\ref{appendix:proof_main}.

\paragraph{Adjoint Equation.} We first compute the mismatch between the exact generative process~\eqref{eq:reverse} and its approximation~\eqref{eq:approx_reverse} over time. To this end, defining the error $\lambda_t:=p_t-\tilde{p}_t$ and subtracting~\eqref{eq:approx_reverse} from~\eqref{eq:reverse}, we obtain the \emph{error equation} for $t \in [0, T]$,
\begin{equation}\tag{\ref*{eq:error}}
    \erroreq
\end{equation}
To quantify this discrepancy under any IPM, we extend the duality argument of \citet[Section~4.1]{mimikos2024score} originally developed for continuous diffusion, by developing new technical tools specialized for the discrete setting. The core idea is to test~\eqref{eq:error} against a function $\phi:[0, T] \times [S]^d \to \mathbb{R}$ that satisfies the \emph{adjoint equation} associated with the approximate reverse dynamics~\eqref{eq:approx_reverse}, namely the \emph{Kolmogorov backward equation} 
\begin{equation}\tag{\ref*{eq:KBE}} 
    \KBEeq
\end{equation}

In particular, we integrate the error equation~\eqref{eq:error} against the test function $\phi_t$ in space, integrate in time and apply integration by parts to move the time derivative to $\phi_t$, yielding
\begin{equation}\label{eq:derivation_IBP}
\int_0^T \langle \frac{d}{dt} \lambda_t, \phi_t \rangle \, dt = \langle \lambda_T, \phi_T \rangle - \langle \lambda_0, \phi_0 \rangle - \int_0^T \langle \lambda_t, \frac{d}{dt} \phi_t \rangle \, dt.
\end{equation}
Substituting~\eqref{eq:error} and~\eqref{eq:KBE} into~\eqref{eq:derivation_IBP}, the $\lambda_t$-dependent contributions cancel via the adjointness identity $\langle (\tilde{Q}_t^{\leftarrow})^\top \lambda_t, \phi_t \rangle = \langle \lambda_t, \tilde{Q}_t^{\leftarrow} \phi_t \rangle$ -- which is precisely why $\tilde{Q}_t^{\leftarrow}$ is the choice of generator in~\eqref{eq:KBE} -- leaving only the $p_t$-dependent rate-matrix difference:
\begin{equation}
     \langle \lambda_0, \psi \rangle = \langle \lambda_0, \phi_0 \rangle = \langle \lambda_T, \phi_T \rangle + \int_0^T \left\langle (Q_{t}^{\leftarrow} - \tilde{Q}_{t}^{\leftarrow})^\top p_{t}  , \phi_t \right\rangle dt.
\end{equation}
Taking absolute values and plugging in $\lambda_t=p_t-\tilde{p}_t$, we obtain an upper bound for the approximation error
\begin{align}\label{eq:upper_bound_approximate}
\begin{split}
     &\left| \mathbb{E}_{\bfx \sim p_{\mathrm{data}}}[\psi(\bfx)] - \mathbb{E}_{\bfx \sim \tilde{p}_0}[\psi(\bfx)] \right| \\
     &\leq \left| \langle \lambda_T, \phi_T \rangle \right| +   \int_0^T \left|\left\langle (Q_{t}^{\leftarrow} - \tilde{Q}_{t}^{\leftarrow})^\top p_{t}  , \phi_t \right\rangle \right| dt  \\
     &\leq \| \phi_T \|_\infty \sum_{\bfx\in\mathcal{X}} \left|p_T - p_{\rm base} \right| (\bfx) + 2 \int_0^T \| \phi_t \|_\infty \sum_{\substack{\bfx,\bfy\in\mathcal{X} \\ \bfx\neq \bfy}}   p_t(\bfx) \left|
      (Q_{t}^{\leftarrow} - \tilde{Q}_{t}^{\leftarrow}) (\bfx, \bfy) \right| dt \\
      &= 2 \| \phi_T \|_\infty \TV(p_T, p_{\rm base}) + 2 \int_0^T \| \phi_t \|_\infty \sum_{\substack{\bfx,\bfy\in\mathcal{X} \\ \bfx\neq \bfy}}  p_t(\bfx) Q_t(\bfy, \bfx) \left| {s}_t(\bfx)_\bfy - \tilde{s}_t(\bfx)_\bfy  \right|  dt,
\end{split}
\end{align}
where $\| \phi_t \|_\infty := \max_{\bfx \in \mathcal{X}} |\phi_t(\bfx)|$, and in the last step, we used the reverse rates formulations~\eqref{eq:reverse_matrix} and \eqref{eq:approx_reverse_matrix}, as well as the definition of TV distance. Importantly, the integration term contains the score error $\left| {s}_t(\bfx)_\bfy - \tilde{s}_t(\bfx)_\bfy  \right|$, which we represent using the standard training losses~\eqref{eq:LWSM}--\eqref{eq:LSE} through our novel score-marginal cancellation technique; see Lemmas~\ref{lemma:error_bound_LSE} and \ref{lemma:error_bound_LWSM} and their proofs for more details. As a result, our bounds are expressed directly in terms of the standard training losses.

\paragraph{Regularity of Test Functions.} To make the bound~\eqref{eq:upper_bound_approximate} explicit, it remains to control the test function sup-norm $\| \phi_t \|_\infty$. Recall that $\phi_t$ is the time evolution of $\psi$ under the adjoint equation~\eqref{eq:KBE}, with $\phi_0=\psi$. Importantly, we show in \Cref{lemma:phi_bound} that the adjoint dynamics is non-expansive in the sup-norm: the solution $\phi_t$ to the adjoint equation satisfies $\| \phi_t \|_\infty \leq \| \psi \|_\infty$ for all $t \in [0,T]$. Thus, plugging in the contraction result into \eqref{eq:upper_bound_approximate}, we obtain
\begin{align}\label{eq:upper_bound_approximate_explicit}
\begin{split}
     &\left| \mathbb{E}_{\bfx \sim p_{\mathrm{data}}}[\psi(\bfx)] - \mathbb{E}_{\bfx \sim \tilde{p}_0}[\psi(\bfx)] \right| \\
     &\leq 2 \| \psi \|_\infty \TV(p_T, p_{\rm base}) + 2 \| \psi \|_\infty \int_0^T \sum_{\substack{\bfx,\bfy\in\mathcal{X} \\ \bfx\neq \bfy}}  p_t(\bfx) Q_t(\bfy, \bfx) \left| {s}_t(\bfx)_\bfy - \tilde{s}_t(\bfx)_\bfy  \right|  dt,
\end{split}
\end{align}
effectively removing the only remaining unknown. Intuitively, the non-expansiveness holds because $\phi_t$ is an expectation of $\psi$ over trajectories of the CTMC associated with the adjoint equation, and an expectation never exceeds the maximum. In our proof of~\Cref{lemma:phi_bound}, we make this rigorous by using a Feynman-Kac representation.

\paragraph{Specialization to Any IPM.} The bound~\eqref{eq:upper_bound_approximate_explicit} is valid for any admissible function $\psi$, and its right-hand side depends on $\psi$ only through the multiplicative factor $\| \psi \|_\infty$. Taking the supremum over the admissible class $\Psi$ of the target IPM~\eqref{eq:IPM}, the left-hand side becomes the IPM $\gamma_\Psi(p_{\mathrm{data}}, \tilde{p}_0)$. Since $\| \psi \|_\infty$ is decoupled from the other terms, those terms are unaffected by the supremum. And we have a prefactor $C_\Psi$ which is a constant multiple of $\sup_{\psi \in \Psi} \| \psi \|_\infty$. Since the state space $\mathcal{X} = [S]^d$ is finite, any admissible function $\psi$ is bounded and $C_\Psi$ is finite for every IPM. A single bound thus specializes to every IPM, with only the constant $C_\Psi$ changing.

\section{Proof of Main Theorems}\label{appendix:proof_main}
We restate and prove the main theorems.
\subsection{Proof of \texorpdfstring{\Cref{lem:IPM}}{Theorem \ref*{lem:IPM}}}\label{subsec:proofIPM}
\lemIPM*

Here, the cubic correction term is given by
\newif\iffirstfcases
\firstfcasestrue
\newcommand{\myfcases}{%
  \iffirstfcases
{
\small
\begin{subnumcases}{\LThree(s, \tilde{s}) = \label{eq:LThree}}
\int_0^T \beta(t) (1-p_t(\bfm)) \mathbb{E}_{\bfy \sim p_t(\bfy|\bfy \neq \bfm)} \left[ \sum_{\substack{\substack{\bfx \in \Nmasked(\bfy)}}} \frac{\,|s_t(\bfx)_\bfy - \tilde{s}_t(\bfx)_\bfy|^3}{ (s_t(\bfx)_\bfy)^3} \right] dt \label{eq:L3_masked}, \\ \text{under the masked rate,} \nonumber \\
\int_0^T \mathbb{E}_{\bfx \sim p_t} \left[ \frac{\beta(t)}{S} \sum_{\substack{\bfy \in\mathcal{X} \\ d_H(\bfy,\bfx)=1}}   \left( \frac{\,|s_t(\bfx)_\bfy - \tilde{s}_t(\bfx)_\bfy|^3}{(s_t(\bfx)_\bfy)^2} \right) \right]  dt, \label{eq:L3_uniform} \\ \text{under the uniform rate,} \nonumber \\
    \int_0^T \mathbb{E}_{\bfx \sim p_t} \left[ \sum_{\substack{\bfy\in\mathcal{X} \\ \bfy\neq \bfx}}  Q_t(\bfy, \bfx) \left( \frac{\,|s_t(\bfx)_\bfy - \tilde{s}_t(\bfx)_\bfy|^3}{(s_t(\bfx)_\bfy)^2} \right) \right]  dt 
    = \int_0^T \mathbb{E}_{\bfy \sim p_t} \left[ \sum_{\substack{\bfx\in\mathcal{X} \\ \bfx\neq \bfy}}  Q_t(\bfy, \bfx) \left( \frac{\,|s_t(\bfx)_\bfy - \tilde{s}_t(\bfx)_\bfy|^3}{ (s_t(\bfx)_\bfy)^3} \right)  \right] dt,\label{eq:L3_general} \\
    \text{under a general rate.} \nonumber
\end{subnumcases}}
\firstfcasesfalse
  \else
{
\small
\begin{subnumcases}{\LThree(s, \tilde{s}) = }
\int_0^T \beta(t) (1-p_t(\bfm)) \mathbb{E}_{\bfy \sim p_t(\bfy|\bfy \neq \bfm)} \left[ \sum_{\substack{\substack{\bfx \in \Nmasked(\bfy)}}} \frac{\,|s_t(\bfx)_\bfy - \tilde{s}_t(\bfx)_\bfy|^3}{ (s_t(\bfx)_\bfy)^3} \right] dt \tag{\ref{eq:L3_masked}}, \\ \text{under the masked rate,} \nonumber \\
\int_0^T \mathbb{E}_{\bfx \sim p_t} \left[ \frac{\beta(t)}{S} \sum_{\substack{\bfy \in\mathcal{X} \\ d_H(\bfy,\bfx)=1}}   \left( \frac{\,|s_t(\bfx)_\bfy - \tilde{s}_t(\bfx)_\bfy|^3}{(s_t(\bfx)_\bfy)^2} \right) \right]  dt, \tag{\ref{eq:L3_uniform}} \\ \text{under the uniform rate,} \nonumber \\
    \int_0^T \mathbb{E}_{\bfx \sim p_t} \left[ \sum_{\substack{\bfy\in\mathcal{X} \\ \bfy\neq \bfx}}  Q_t(\bfy, \bfx) \left( \frac{\,|s_t(\bfx)_\bfy - \tilde{s}_t(\bfx)_\bfy|^3}{(s_t(\bfx)_\bfy)^2} \right) \right]  dt 
    = \int_0^T \mathbb{E}_{\bfy \sim p_t} \left[ \sum_{\substack{\bfx\in\mathcal{X} \\ \bfx\neq \bfy}}  Q_t(\bfy, \bfx) \left( \frac{\,|s_t(\bfx)_\bfy - \tilde{s}_t(\bfx)_\bfy|^3}{ (s_t(\bfx)_\bfy)^3} \right)  \right] dt,\tag{\ref{eq:L3_general}} \\
    \text{under a general rate.} \nonumber
\end{subnumcases}}
   \fi
}
\myfcases
where $\Nmasked(\bfy)$ is the successor set of $\bfy$ (i.e. containing sequences $\bfy$ can transit to), defined by
\begin{equation}\label{eq:successor}
    \Nmasked(\bfy)=\left\{ \bfx \in \mathcal{X} \, \middle| \, \exists i \in [d], \bfx^{\backslash i}=\bfy^{\backslash i} , x^i=\mask, y^i \neq \mask \right\}.
    \end{equation}

\begin{remark}[Observables, not metrics, are the primary objects of our analysis]
The first output of our analysis is a single-observable guarantee (the Lemma
above): for every bounded $\psi:\mathcal X\to\mathbb R$,
\eqref{eq:master_specialized} controls
$|\mathbb E_{p_{\rm data}}\psi-\mathbb E_{\tilde p_0}\psi|$ by $\|\psi\|_\infty$
times the approximation-error terms bounded in \Cref{thm:spec}, with $\psi$
entering only through $\|\psi\|_\infty$. The IPM bound of \Cref{thm:spec} is the
corollary obtained by taking a supremum over a class $\Psi$. In language
modeling~\cite{lou2024discrete} one wants this supremum; in most other
applications the success criterion is itself a fixed bounded observable: motif and
$k$-mer frequencies for DNA~\cite{schiff2025simple}, the designable fraction and
mean pLDDT for proteins~\cite{campbell2024generative}, low-order marginals and
moments for tabular data~\cite{kotelnikov2023tabddpm}, and
maximum-mean-discrepancy metrics---IPMs with a bounded kernel---for
images~\cite{campbell2022continuous} and molecular graphs~\cite{vignac2023digress}.
Each has $\|\psi\|_\infty=O(1)$, so our bound applies directly, without ever
forming a full metric. 

A subtlety worth stating: the Fr\'echet-type scores common in these fields (FID,
FCD, and sequence analogues) are not IPMs---they are $W_2$ distances between
Gaussians fitted in a feature space---so they do not arise from the supremum in
\Cref{thm:spec} directly. They are, however, smooth functions of finitely many
\emph{bounded} observables: the first two moments $\mathbb E\phi,\ \mathbb
E[\phi\phi^\top]$ of a fixed feature map $\phi$. Our single-observable estimate
controls each such moment $S$-freely, so FID and its analogues are governed by our
guarantee up to the local Lipschitz modulus of the Fr\'echet functional and the
($S$-independent) feature dimension, degrading only near covariance degeneracy.
The observable Lemma  thus reaches \textbf{beyond IPMs}: an IPM is a single \emph{linear}
observable, a Fr\'echet score a \emph{nonlinear} functional of moment-observables,
and both are controlled.
\end{remark}

\begin{remark}[Details of Exit-Rate Routing Technique]\label{rmk:uniform_vs_masked}
Writing the Kolmogorov forward equation~\eqref{eq:KFE_ntoken} entry-wise 
at state $\bfx$ yields the \emph{flow balance}
\[
  \frac{dp_t(\bfx)}{dt} 
  \;=\; \underbrace{\sum_{\bfy \neq \bfx} Q_t(\bfy, \bfx)\,p_t(\bfy)}_{\text{actual inflow}} 
  \;-\; \underbrace{\mu^\mathrm{out}_t(\bfx)\,p_t(\bfx)}_{\text{actual outflow}},
\]
in which the jump rate $\mu^\mathrm{out}_t(\bfx) = -Q_t(\bfx, \bfx)$ governs the 
outflow and the entrance rate $\mu^\mathrm{in}_t(\bfx) = \sum_{\bfy \neq \bfx} Q_t(\bfy, \bfx)$ 
governs the inflow. Both are intrinsic to the generator $Q_t$: $\mu^\mathrm{out}_t$ 
is the negative diagonal, $\mu^\mathrm{in}_t$ is the column-off-diagonal sum.

\smallskip
\noindent Under the \emph{uniform rate}~\eqref{eq:Q_absorb_uni}, $Q_t^{\rm tok}$ 
is symmetric and
\[
  \mu_t^{\rm out, tok}(x) \;=\; \mu_t^{\rm in, tok}(x) \;=\; \beta(t)\bigl(1 - \tfrac{1}{S}\bigr)
  \quad \text{for every } x \in [S],
\]
so the two lemmas give the same scaling.

\smallskip
\noindent Under the \emph{masked rate}~\eqref{eq:Q_absorb_uni}, $Q_t^{\rm tok}$ 
is asymmetric and the two rates differ:
\[
  \mathbb{E}_{p_t^{\rm tok}}\!\bigl[\mu_t^{\rm out, tok}(x)\bigr] 
  \;=\; \beta(t)\bigl(1 - p_t^{\rm tok}(\mask)\bigr)
\]
is $S$-independent, whereas
\[
  \mathbb{E}_{p_t^{\rm tok}}\!\bigl[\mu_t^{\rm in, tok}(x)\bigr] 
  \;=\; (S-1)\,\beta(t)\,p_t^{\rm tok}(\mask)
\]
carries a factor of $S-1$ --- the per-token vocabulary size.

\noindent Accordingly, in the proof of~\Cref{lem:IPM} we use the SE route 
for the masked case to obtain an 
$S$-independent prefactor, and the simpler WSM route 
(\Cref{lemma:error_bound_LWSM}) for the uniform case, where its 
prefactor coincides with the SE one. Of course, either bound is mathematically valid, but only the SE 
route yields an $S$-independent prefactor under masked dynamics.
\end{remark}

\begin{remark}[$S$-free scaling of $\LThree$]
We note that $|\Nmasked(\bfy)| \leq d$ in~\eqref{eq:successor}, where $d$ is the sequence length. Overall, in~\eqref{eq:L3_masked}, the outer expectation
$\mathbb{E}_{\bfy \sim p_t(\bfy \mid \bfy \ne \bfm)}[\,\cdot\,]$ is a probability
average; its support has size $S^d - 1$, but the integrand averaged is
the at-most-$d$-term inner sum of cubic errors, with no count
of states entering per term. Thus, the scaling of $\LThree$ is independent of the vocabulary size $S$ under the masked rate.

Under the uniform rate, the outer expectation $\mathbb{E}_{\bfy \sim p_t}$ of~\eqref{eq:L3_uniform} is again a probability average. For each $\bfy$, the inner summation $\sum_{d_H(\bfx,\bfy)=1}$ runs over the $d(S-1)$ neighbors satisfying $\distH{\bfy}{\bfx} = 1$; together with the factor $\frac{1}{S}$, its magnitude is on the order of $\frac{d(S-1)}{S}<d$. Thus the scaling of $\LThree$ is again independent of $S$.
\end{remark}

\begin{proof}[Proof of~\Cref{lem:IPM}]
\begin{align*}
    \langle \lambda_0, \psi \rangle -\langle \lambda_T, \phi_T \rangle &= \langle \lambda_0, \phi_0 \rangle -\langle \lambda_T, \phi_T \rangle \\
    &= -\int_0^T \frac{d}{dt} \langle \lambda_t, \phi_t \rangle dt \\
    &= - \int_0^T  \langle \frac{d\lambda_t}{dt} , \phi_t \rangle +  \langle \lambda_t, \frac{d\phi_t}{dt}  \rangle dt \\
    &=  \int_0^T \left\langle (Q_{t}^{\leftarrow} - \tilde{Q}_{t}^{\leftarrow})^\top p_{t} + (\tilde{Q}_{t}^{\leftarrow})^\top \lambda_{t} , \phi_t \right\rangle -  \left\langle \lambda_t, \tilde{Q}_{t}^{\leftarrow} \phi_t  \right\rangle dt \quad \text{by~\eqref{eq:error} and~\eqref{eq:KBE}}, \\
    &= \int_0^T \left\langle (Q_{t}^{\leftarrow} - \tilde{Q}_{t}^{\leftarrow})^\top p_{t}  , \phi_t \right\rangle dt \\
    &= \int_0^T \sum_{\substack{\bfy\in\mathcal{X}}} \phi(t, \bfy) \left[ (Q_{t}^{\leftarrow} - \tilde{Q}_{t}^{\leftarrow})^\top p_t \right] (\bfy) dt \\
    &= \int_0^T \sum_{\substack{\bfy\in\mathcal{X}}} \sum_{\substack{\bfx\in\mathcal{X}}} \phi(t, \bfy) \left[ (Q_{t}^{\leftarrow} - \tilde{Q}_{t}^{\leftarrow}) \right] (\bfx, \bfy) p_t(\bfx) dt \\
    &= \int_0^T \sum_{\substack{\bfx\in\mathcal{X}}} \sum_{\substack{\bfy\in\mathcal{X}}} \phi(t, \bfy) \left[ (Q_{t}^{\leftarrow} - \tilde{Q}_{t}^{\leftarrow}) \right] (\bfx, \bfy) p_t(\bfx) dt, \quad \text{by Fubini's theorem,} \\
    &= \int_0^T \sum_{\substack{\bfx\in\mathcal{X}}} p_t(\bfx) \sum_{\substack{\bfy\in\mathcal{X}}} \phi(t, \bfy) \left[ (Q_{t}^{\leftarrow} - \tilde{Q}_{t}^{\leftarrow}) \right] (\bfx, \bfy)  dt \\
    &= \int_0^T \sum_{\substack{\bfx\in\mathcal{X}}} p_t(\bfx) \sum_{\substack{\bfy\in\mathcal{X}}} \left(\phi(t, \bfy)-\phi(t, \bfx) \right)\left[ (Q_{t}^{\leftarrow} - \tilde{Q}_{t}^{\leftarrow}) \right] (\bfx, \bfy)  dt, \\
    \intertext{where we used that $Q_{t}^{\leftarrow}$ and $\tilde{Q}_{t}^{\leftarrow}$'s rows sum to zero,}
    &= \int_0^T \sum_{\substack{\bfx\in\mathcal{X}}} p_t(\bfx) \sum_{\substack{\bfy\in\mathcal{X} \\ \bfy\neq \bfx}}  \left(\phi(t, \bfy)-\phi(t, \bfx) \right)\left[ (Q_{t}^{\leftarrow} - \tilde{Q}_{t}^{\leftarrow}) \right] (\bfx, \bfy)  dt, \\
    \intertext{as $\phi(t, \bfy=\bfx)-\phi(t, \bfx)=0$,}
    &= \int_0^T \sum_{\substack{\bfx\in\mathcal{X}}} \sum_{\substack{\bfy\in\mathcal{X} \\ \bfy\neq \bfx}} p_t(\bfx) \left(\phi(t, \bfy)-\phi(t, \bfx) \right)\left[ (Q_{t}^{\leftarrow} - \tilde{Q}_{t}^{\leftarrow}) \right] (\bfx, \bfy)  dt \\
    &= \int_0^T \sum_{\substack{\bfx\in\mathcal{X}}} \sum_{\substack{\bfy\in\mathcal{X} \\ \bfy\neq \bfx}}  p_t(\bfx) \left(\phi(t, \bfy)-\phi(t, \bfx) \right) Q_t(\bfy, \bfx) \left( {s}_t(\bfx)_\bfy - \tilde{s}_t(\bfx)_\bfy \right)  dt ,\quad \text{by~\eqref{eq:reverse_matrix} and \eqref{eq:approx_reverse_matrix}.}\\
\end{align*}

Thus, we have
\begin{align}
\langle \lambda_0, \psi \rangle -\langle \lambda_T, \phi_T \rangle  &=  \int_0^T \sum_{\substack{\bfx\in\mathcal{X}}} \sum_{\substack{\bfy\in\mathcal{X} \\ \bfy\neq \bfx}}  p_t(\bfx) \left(\phi(t, \bfy)-\phi(t, \bfx) \right) Q_t(\bfy, \bfx) \left( {s}_t(\bfx)_\bfy - \tilde{s}_t(\bfx)_\bfy \right)  dt \nonumber \\
    \langle \lambda_0, \psi \rangle &= \underbrace{\langle \lambda_T, \phi_T \rangle }_{\cI} + \underbrace{\int_0^T \sum_{\substack{\bfx\in\mathcal{X}}} \sum_{\substack{\bfy\in\mathcal{X} \\ \bfy\neq \bfx}}  p_t(\bfx) \left(\phi(t, \bfy)-\phi(t, \bfx) \right) Q_t(\bfy, \bfx) \left( {s}_t(\bfx)_\bfy - \tilde{s}_t(\bfx)_\bfy \right)  dt }_{\cII}. \label{eq:ineq_eval_1}
\end{align}

For the first term \textcircled{I}, it can be bounded as
\begin{align*}
    \langle \lambda_T, \phi_T \rangle &= \langle p_T - p_{\rm base}, \phi_T \rangle \\
    &= \sum_{\substack{\bfx\in\mathcal{X}}}  \left[p_T - p_{\rm base} \right] (\bfx) \cdot \phi_T(\bfx) \\
    &\leq \| \phi_T \|_\infty \sum_\bfx \left|p_T - p_{\rm base} \right| (\bfx) \\
    &\leq \| \psi \|_\infty \sum_{\substack{\bfx\in\mathcal{X}}}  \left|p_T - p_{\rm base} \right| (\bfx), \quad 
    \text{by Lemma~\ref{lemma:phi_bound}, } \\
    &= 2 \| \psi \|_\infty \TV(p_T, p_{\rm base}).
\end{align*}

For the second term \textcircled{II}, we bound it differently, depending on the choice of the rate matrix $Q_t^{\mathrm{tok}}$ in~\eqref{eq:reverse_matrix_ntoken}. \\
\textbf{Case 1 (Bound under masked rate using $\LSE$):} 
\begin{align*}
    &\int_0^T \sum_{\substack{\bfx\in\mathcal{X}}} \sum_{\substack{\bfy\in\mathcal{X} \\ \bfy\neq \bfx}} p_t(\bfx) \left(\phi(t, \bfy)-\phi(t, \bfx) \right) Q_t(\bfy, \bfx) \left( s_t(\bfx)_\bfy - \tilde{s}_t(\bfx)_\bfy \right)  dt \\
    \quad & \leq 2\| \psi \|_\infty \left( \int_0^T \sum_{\substack{\bfy\in\mathcal{X}}} \sum_{\substack{\bfx\in\mathcal{X} \\ \bfx\neq \bfy}}  Q_t(\bfy, \bfx) p_t(\bfy)   dt \right)^{\frac{1}{2}} \left( 2\,\LSE(s, \tilde{s}) + \tfrac{4}{3}\,\LThree(s, \tilde{s}) \right)^{\frac{1}{2}}, \quad \text{by~\Cref{lemma:error_bound_LSE},}\\
    \quad & = 2\| \psi \|_\infty \left( \int_0^T \sum_{\substack{\bfy\in\mathcal{X} }} \sum_{\substack{\bfx: \bfx \neq \bfy \\ Q_t(\bfy, \bfx)\neq 0}}  Q_t(\bfy, \bfx) p_t(\bfy)   dt \right)^{\frac{1}{2}} \left( 2\,\LSE(s, \tilde{s}) + \tfrac{4}{3}\,\LThree(s, \tilde{s}) \right)^{\frac{1}{2}} \\
    \quad & = 2\| \psi \|_\infty \left( \int_0^T \sum_{\substack{\bfy \in \mathcal{X} \\ \bfy \neq \bfm}} \sum_{\substack{\substack{\bfx \in \Nmasked(\bfy)}}}   Q^{\mathrm{tok}}_t(y^i, \mask) p_t(\bfy)    dt \right)^{\frac{1}{2}} \left( 2\,\LSE(s, \tilde{s}) + \tfrac{4}{3}\,\LThree(s, \tilde{s}) \right)^{\frac{1}{2}}, \\
    \intertext{where we used the definition of $Q_t$~\eqref{eq:Q_factor} under masked transition~\eqref{eq:Q_absorb_uni}, $\bfm=[\mask,...,\mask]$ is the sequence of all mask tokens, 
    $$
    \Nmasked(\bfy)=\left\{ \bfx \in \mathcal{X} \, \middle| \, \exists i \in [d], \bfx^{\backslash i}=\bfy^{\backslash i} , x^i=\mask, y^i \neq \mask \right\}
    $$
    is the successor set of $\bfy$, and $i$ denotes the unique index where $\bfx$ and $\bfy$ differ,}
    \quad & \leq 2\| \psi \|_\infty \left( \int_0^T \sum_{\substack{\bfy \in \mathcal{X} \\ \bfy \neq \bfm}} \underbrace{\sum_{\substack{\substack{\bfx \in \Nmasked(\bfy)}}}}_{\leq d \text{ terms}}   \beta(t) p_t(\bfy)   dt \right)^{\frac{1}{2}} \left( 2\,\LSE(s, \tilde{s}) + \tfrac{4}{3}\,\LThree(s, \tilde{s}) \right)^{\frac{1}{2}} \\
    \quad &\leq 2 \| \psi \|_\infty \sqrt{2 d} \left( \int_0^T \sum_{\substack{\bfy \in \mathcal{X} \\ \bfy \neq \bfm}}  \beta(t) p_t(\bfy) dt \right)^{\frac{1}{2}} \left( \,\LSE(s, \tilde{s}) + \tfrac{2}{3}\,\LThree(s, \tilde{s}) \right)^{\frac{1}{2}} \\
    \quad &\leq 2 \| \psi \|_\infty  \sqrt{2d} \left( \int_0^T \beta(t) (1-p_t(\bfm)) dt \right)^{\frac{1}{2}} \left( \,\LSE(s, \tilde{s}) + \tfrac{2}{3}\,\LThree(s, \tilde{s}) \right)^{\frac{1}{2}} \\
    \quad &= 2 \| \psi \|_\infty \sqrt{2 d} \sqrt{\| \beta (1-p(\bfm))\|_1} \sqrt{ \,\LSE(s, \tilde{s}) + \tfrac{2}{3}\,\LThree(s, \tilde{s}) },
\end{align*}
where in the last step, we used $\left( \int_0^T \beta(t) (1-p_t(\bfm)) dt \right)^{\frac{1}{2}} := \sqrt{\| \beta (1-p(\bfm))\|_1} $.
Thus, 
using the bounds, \eqref{eq:ineq_eval_1} can be evaluated as
\begin{align*}
     \mathbb{E}_{\bfx \sim p_{\mathrm{data}}}[\psi(\bfx)] - \mathbb{E}_{\bfx \sim \tilde{p}_0}[\psi(\bfx)]  &=  \mathbb{E}_{\bfx \sim p_{0}}[\psi(\bfx)] - \mathbb{E}_{\bfx \sim \tilde{p}_0}[\psi(\bfx)]  = \langle \lambda_0, \psi \rangle = \cI + \cII  \\
    \left| \mathbb{E}_{\bfx \sim p_{\mathrm{data}}}[\psi(\bfx)] - \mathbb{E}_{\bfx \sim \tilde{p}_0}[\psi(\bfx)] \right| &\leq 2 \|\psi\|_\infty \left( \TV(p_T, p_{\rm base})  +   \sqrt{2 d} \sqrt{\| \beta (1-p(\bfm))\|_1} \sqrt{ \,\LSE(s, \tilde{s}) + \tfrac{2}{3}\,\LThree(s, \tilde{s}) } \right) \\
    \left| \mathbb{E}_{\bfx \sim p_{\mathrm{data}}}[\psi(\bfx)] - \mathbb{E}_{\bfx \sim \tilde{p}_0}[\psi(\bfx)] \right| &\leq 2 \|\psi\|_\infty \left( d e^{-\| \beta \|_1}  +   \sqrt{2 d} \sqrt{\| \beta (1-p(\bfm))\|_1} \sqrt{ \,\LSE(s, \tilde{s}) + \tfrac{2}{3}\,\LThree(s, \tilde{s}) } \right),
\end{align*}
where we used~\Cref{lemma:prior_mismatch_masked} in the last step. \\
\textbf{Case 2(a) (Bound under uniform rate using $\LSE$):} 
\begin{align*}
    &\int_0^T \sum_{\substack{\bfx\in\mathcal{X}}} \sum_{\substack{\bfy\in\mathcal{X} \\ \bfy\neq \bfx}}  p_t(\bfx) \left(\phi(t, \bfy)-\phi(t, \bfx) \right) Q_t(\bfy, \bfx) \left( s_t(\bfx)_\bfy - \tilde{s}_t(\bfx)_\bfy \right)  dt \\
    \quad & \leq 2 \| \psi \|_\infty \left(\int_0^T \sum_{\substack{\bfx\in\mathcal{X}}} \sum_{\substack{\bfy\in\mathcal{X} \\\bfy\neq \bfx}} p_t(\bfy)  Q_t(\bfy, \bfx)  dt\right)^{\frac{1}{2}} \left( 2\,\LSE(s, \tilde{s}) + \tfrac{4}{3}\,\LThree(s, \tilde{s}) \right)^{\frac{1}{2}}, \quad \text{by Lemma~\ref{lemma:error_bound_LSE}}, \\
    \quad &\leq 2 \| \psi \|_\infty \sqrt{d} \sqrt{\|\beta \|_1}  \sqrt{2} \sqrt{\,\LSE(s, \tilde{s}) + \tfrac{2}{3}\,\LThree(s, \tilde{s})}, \quad \text{by Lemma~\ref{lemma:sum_uniform}.} \\
    \quad & = 2 \| \psi \|_\infty \sqrt{2d} \sqrt{\|\beta \|_1}  \sqrt{\,\LSE(s, \tilde{s}) + \tfrac{2}{3}\,\LThree(s, \tilde{s})}.
\end{align*}
Thus, 
using the bounds, \eqref{eq:ineq_eval_1} can be evaluated as
\begin{align}
    \mathbb{E}_{\bfx \sim p_{\mathrm{data}}}[\psi(\bfx)] - \mathbb{E}_{\bfx \sim \tilde{p}_0}[\psi(\bfx)]  &=  \mathbb{E}_{\bfx \sim p_{0}}[\psi(\bfx)] - \mathbb{E}_{\bfx \sim \tilde{p}_0}[\psi(\bfx)] = \langle \lambda_0, \psi \rangle = \cI + \cII \nonumber \\
    \left| \mathbb{E}_{\bfx \sim p_{\mathrm{data}}}[\psi(\bfx)] - \mathbb{E}_{\bfx \sim \tilde{p}_0}[\psi(\bfx)] \right| &\leq 2 \|\psi\|_\infty \left( \TV(p_T, p_{\rm base}) +  \sqrt{2d} \sqrt{\|\beta \|_1}  \sqrt{\,\LSE(s, \tilde{s}) + \tfrac{2}{3}\,\LThree(s, \tilde{s})} \right) \nonumber \\
    \left| \mathbb{E}_{\bfx \sim p_{\mathrm{data}}}[\psi(\bfx)] - \mathbb{E}_{\bfx \sim \tilde{p}_0}[\psi(\bfx)] \right| &\leq 2 \|\psi\|_\infty \left( d e^{-\| \beta \|_1} +  \sqrt{2d} \sqrt{\|\beta \|_1}  \sqrt{\,\LSE(s, \tilde{s}) + \tfrac{2}{3}\,\LThree(s, \tilde{s})} \right), \nonumber 
\end{align}
where we used~\Cref{thm:prior_noS_uniform} in the last step. \\

\textbf{Case 2(b) (Bound under uniform rate using $\LWSM$):} \begin{align*}
    &\int_0^T \sum_{\substack{\bfx\in\mathcal{X}}} \sum_{\substack{\bfy\in\mathcal{X} \\ \bfy\neq \bfx}}  p_t(\bfx) \left(\phi(t, \bfy)-\phi(t, \bfx) \right) Q_t(\bfy, \bfx) \left( s_t(\bfx)_\bfy - \tilde{s}_t(\bfx)_\bfy \right)  dt \\
    \quad & \leq 2 \| \psi \|_\infty \left(\int_0^T \sum_{\substack{\bfx\in\mathcal{X}}} \sum_{\substack{\bfy\in\mathcal{X} \\\bfy\neq \bfx}} p_t(\bfx)  Q_t(\bfy, \bfx)  dt\right)^{\frac{1}{2}} \cdot \sqrt{2} \sqrt{\LWSM(s, \tilde{s})}, \quad \text{by Lemma~\ref{lemma:error_bound_LWSM}}, \\
    \quad &\leq 2 \| \psi \|_\infty \sqrt{d} \sqrt{\|\beta \|_1}  \sqrt{2} \sqrt{\LWSM(s, \tilde{s})}, \quad \text{by Lemma~\ref{lemma:sum_uniform}.} \\
    \quad & = 2 \| \psi \|_\infty \sqrt{2d} \sqrt{\|\beta \|_1}  \sqrt{\LWSM(s, \tilde{s})}.
\end{align*}
Thus, 
using the bounds, \eqref{eq:ineq_eval_1} can be evaluated as
\begin{align}
    \mathbb{E}_{\bfx \sim p_{\mathrm{data}}}[\psi(\bfx)] - \mathbb{E}_{\bfx \sim \tilde{p}_0}[\psi(\bfx)]  &=  \mathbb{E}_{\bfx \sim p_{0}}[\psi(\bfx)] - \mathbb{E}_{\bfx \sim \tilde{p}_0}[\psi(\bfx)] = \langle \lambda_0, \psi \rangle = \cI + \cII \nonumber \\
    \left| \mathbb{E}_{\bfx \sim p_{\mathrm{data}}}[\psi(\bfx)] - \mathbb{E}_{\bfx \sim \tilde{p}_0}[\psi(\bfx)] \right| &\leq 2 \|\psi\|_\infty \left( \TV(p_T, p_{\rm base}) +  \sqrt{2d} \sqrt{\|\beta \|_1}  \sqrt{\LWSM(s, \tilde{s})} \right) \nonumber \\
    \left| \mathbb{E}_{\bfx \sim p_{\mathrm{data}}}[\psi(\bfx)] - \mathbb{E}_{\bfx \sim \tilde{p}_0}[\psi(\bfx)] \right| &\leq 2 \|\psi\|_\infty \left( d e^{-\| \beta \|_1} +  \sqrt{2d} \sqrt{\|\beta \|_1}  \sqrt{\LWSM(s, \tilde{s})} \right), \nonumber 
\end{align}
where we used~\Cref{thm:prior_noS_uniform} in the last step. \\

\textbf{Case 3(a) (Bound under general rate using $\LSE$):} 
\begin{align*}
    &\int_0^T \sum_{\substack{\bfx\in\mathcal{X}}} \sum_{\substack{\bfy\in\mathcal{X} \\ \bfy\neq \bfx}}  p_t(\bfx) \left(\phi(t, \bfy)-\phi(t, \bfx) \right) Q_t(\bfy, \bfx) \left( s_t(\bfx)_\bfy - \tilde{s}_t(\bfx)_\bfy \right)  dt \\
    \quad & \leq 2 \| \psi\|_\infty 
    \left( \int_0^T \mathbb{E}_{\bfx \sim p_t}\!\left[ \mu^\mathrm{out}_t(\bfx) \right] dt \right)^{\frac{1}{2}} \sqrt{2}
    \left( \,\LSE(s, \tilde{s}) + \tfrac{2}{3}\,\LThree(s, \tilde{s}) \right)^{\frac{1}{2}}, \quad \text{by Corollary~\ref{cor:error_bound_LSE}}.
\end{align*}
Thus, 
using the bounds, \eqref{eq:ineq_eval_1} can be evaluated as
\begin{align}
    \mathbb{E}_{\bfx \sim p_{\mathrm{data}}}[\psi(\bfx)] - \mathbb{E}_{\bfx \sim \tilde{p}_0}[\psi(\bfx)]  &=  \mathbb{E}_{\bfx \sim p_{0}}[\psi(\bfx)] - \mathbb{E}_{\bfx \sim \tilde{p}_0}[\psi(\bfx)] = \langle \lambda_0, \psi \rangle = \cI + \cII \nonumber \\
    \left| \mathbb{E}_{\bfx \sim p_{\mathrm{data}}}[\psi(\bfx)] - \mathbb{E}_{\bfx \sim \tilde{p}_0}[\psi(\bfx)] \right| &\leq 2 \|\psi\|_\infty \left( \TV(p_T, p_{\rm base}) +  \sqrt{2} \sqrt{ \int_0^T \mathbb{E}_{\bfx \sim p_t}\!\left[ \mu^\mathrm{out}_t(\bfx) \right] dt }  \sqrt{\,\LSE(s, \tilde{s}) + \tfrac{2}{3}\,\LThree(s, \tilde{s})} \right) \nonumber.
\end{align}

\textbf{Case 3(b) (Bound under general rate using $\LWSM$):}
\begin{align*}
  & \int_0^T \sum_{\bfx\in\mathcal{X}} \sum_{\substack{\bfy\in\mathcal{X} \\ \bfy\neq \bfx}} 
    p_t(\bfx) \left(\phi(t, \bfy)-\phi(t, \bfx) \right) Q_t(\bfy, \bfx) 
    \left( s_t(\bfx)_\bfy - \tilde{s}_t(\bfx)_\bfy \right) dt  \\
  &\quad\leq\; 2\sqrt{2}\, \| \psi\|_\infty 
    \left( \int_0^T \mathbb{E}_{\bfx \sim p_t}\!\left[ \mu^\mathrm{in}_t(\bfx) \right] dt \right)^{\frac{1}{2}}
    \sqrt{\LWSM(s, \tilde{s})}, \quad \text{by Corollary~\ref{cor:error_bound_LWSM}}.
\end{align*}
Thus, 
using the bounds, \eqref{eq:ineq_eval_1} can be evaluated as
\begin{align}
    \mathbb{E}_{\bfx \sim p_{\mathrm{data}}}[\psi(\bfx)] - \mathbb{E}_{\bfx \sim \tilde{p}_0}[\psi(\bfx)]  &=  \mathbb{E}_{\bfx \sim p_{0}}[\psi(\bfx)] - \mathbb{E}_{\bfx \sim \tilde{p}_0}[\psi(\bfx)] = \langle \lambda_0, \psi \rangle = \cI + \cII \nonumber \\
    \left| \mathbb{E}_{\bfx \sim p_{\mathrm{data}}}[\psi(\bfx)] - \mathbb{E}_{\bfx \sim \tilde{p}_0}[\psi(\bfx)] \right| &\leq 2 \|\psi\|_\infty \left( \TV(p_T, p_{\rm base}) +  \sqrt{2} \sqrt{ \int_0^T \mathbb{E}_{\bfx \sim p_t}\!\left[ \mu^\mathrm{in}_t(\bfx) \right] dt }  \sqrt{\LWSM(s, \tilde{s})} \right) \nonumber.
\end{align}
\end{proof}

\subsection{Proof of \texorpdfstring{\Cref{cor:TV}}{Theorem \ref*{cor:TV}}}\label{subsec:proofTV}
\begin{restatable}[TV Bounds for Score-based Discrete Diffusion]{cor}{corTV}\label{cor:TV}
Under the same assumptions as~\Cref{lem:IPM}, the total variation (TV) distance between the true data distribution $p_{\mathrm{data}}$ and the generated data distribution $\tilde{p}_0$ satisfies \\
\textbf{Case 1 (Masked rate; $Q^{\mathrm{tok}}_t=Q_t^{\rm masked}$ \eqref{eq:Q_absorb_uni}):}
\begin{equation}\label{eq:TV_masked_bound}
    \TV(p_{\mathrm{data}}, \tilde{p}_0) \leq  \underbrace{d e^{-\| \beta \|_1}}_{\text{initialization error}}  +  \sqrt{2d} \sqrt{\| \beta (1-p(\bfm))\|_1}  \underbrace{\sqrt{ \,\LSE(s, \tilde{s}) + \tfrac{2}{3}\,\LThree(s, \tilde{s})}}_{\text{training error}} ,
\end{equation}
\textbf{Case 2 (Uniform rate; $Q^{\mathrm{tok}}_t=Q_t^{\rm uniform}$ \eqref{eq:Q_absorb_uni}):} 
\begin{equation}
    \TV(p_{\mathrm{data}}, \tilde{p}_0) \leq  \underbrace{d e^{-\| \beta \|_1}}_{\text{initialization error}} +  \sqrt{2d} \sqrt{\|\beta \|_1}  \underbrace{\sqrt{\,\LSE(s, \tilde{s}) + \tfrac{2}{3}\,\LThree(s, \tilde{s})}}_{\text{training error}} , \; \text{and}
\end{equation}
    \begin{equation}\label{eq:TV_uniform_bound_LWSM}
    \TV(p_{\mathrm{data}}, \tilde{p}_0) \leq  \underbrace{d e^{-\| \beta \|_1}}_{\text{initialization error}}  +  \sqrt{2d} \sqrt{\|\beta \|_1}  \underbrace{\sqrt{\LWSM(s, \tilde{s})}}_{\text{training error}},
\end{equation}
\textbf{Case~3 (General rate):} 
\begin{equation}
    \TV(p_{\mathrm{data}}, \tilde{p}_0)  \leq  \underbrace{\TV(p_T, p_{\rm base})}_{\text{initialization error}} +  \sqrt{2} \sqrt{ \int_0^T \mathbb{E}_{\bfx \sim p_t}\!\left[ \mu^\mathrm{out}_t(\bfx) \right] dt }  \underbrace{\sqrt{\,\LSE(s, \tilde{s}) + \tfrac{2}{3}\,\LThree(s, \tilde{s})}}_{\text{training error}}, \; \text{and}
\end{equation}
\begin{equation}
    \TV(p_{\mathrm{data}}, \tilde{p}_0)  \leq  \underbrace{\TV(p_T, p_{\rm base})}_{\text{initialization error}} +  \sqrt{2} \sqrt{ \int_0^T \mathbb{E}_{\bfx \sim p_t}\!\left[ \mu^\mathrm{in}_t(\bfx) \right] dt }  \underbrace{\sqrt{\LWSM(s, \tilde{s})}}_{\text{training error}} .
\end{equation}
\end{restatable}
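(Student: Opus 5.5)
This corollary is a direct specialization of \Cref{lem:IPM} to a single function class. The plan is to use the variational characterization of total variation on the finite space $\mathcal{X}$:
\[
\TV(p,\tilde p)=\tfrac12\sum_{\bfx}|p(\bfx)-\tilde p(\bfx)| = \tfrac12 \sup_{\|\psi\|_\infty\le 1}\left|\mathbb{E}_{p}[\psi]-\mathbb{E}_{\tilde p}[\psi]\right|.
\]
The supremum is attained at $\psi=\mathrm{sign}(p-\tilde p)$. In other words, $\TV=\tfrac12\gamma_\Psi$ with $\Psi=\{\|\psi\|_\infty\le1\}$. This is exactly why \Cref{table:IPM} lists $C_\Psi=1/2$ for TV; I will state this normalization explicitly, since the factor $1/2$ is the only nontrivial bookkeeping.

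The steps are as follows.

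\begin{enumerate}
\item Fix an arbitrary $\psi$ with $\|\psi\|_\infty\le1$. Apply \eqref{eq:master_specialized} of \Cref{lem:IPM} with the coefficients $(\mathcal I,\mathcal C,\mathcal L)$ read from the appropriate row of \Cref{tab:coeff}.
\begin{itemize}
\item Masked rate, $\LSE$ route: $\mathcal I=d e^{-\|\beta\|_1}$ and $\mathcal C=d\|\beta(1-p(\bfm))\|_1$.
\item Uniform rate, with either $\LSE$ or $\LWSM$: $\mathcal I=d e^{-\|\beta\|_1}$ and $\mathcal C=d\|\beta\|_1$.
\item General rate, two variants: $\mathcal I=\TV(p_T,p_{\rm base})$, with $\mathcal C$ equal to the integrated expected exit rate or the integrated expected entrance rate, respectively.
\end{itemize}
This gives
\[
\left|\mathbb{E}_{p_{\rm data}}\psi-\mathbb{E}_{\tilde p_0}\psi\right|\le 2\left(\mathcal I+\sqrt{2\mathcal C}\sqrt{\mathcal L}\right).
\]

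\item Take the supremum over the unit sup-norm ball. The right-hand side does not depend on $\psi$, so the supremum of the left-hand side, which equals $2\,\TV(p_{\rm data},\tilde p_0)$, satisfies the same bound.

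\item Divide by $2$. This gives $\TV(p_{\rm data},\tilde p_0)\le \mathcal I+\sqrt{2\mathcal C}\sqrt{\mathcal L}$ in each case.

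\item Expand $\sqrt{2\mathcal C}$ to match the displayed forms. For example, $\sqrt{2d\,\|\beta(1-p(\bfm))\|_1}=\sqrt{2d}\sqrt{\|\beta(1-p(\bfm))\|_1}$.
\end{enumerate}

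There is no real obstacle, because all the analytic work (the adjoint duality, \Cref{lemma:phi_bound}, the score--marginal cancellation, and the initialization bounds \Cref{lemma:prior_mismatch_masked} and \Cref{thm:prior_noS_uniform}) is already inside \Cref{lem:IPM}. Two points need care. First, the TV normalization must be tracked consistently: the paper uses $\sum_\bfx|p_T-p_{\rm base}|=2\TV$, so the factor $2\|\psi\|_\infty$ in the lemma must cancel against the $\tfrac12$ in the dual formula. Second, for each case I must carry along the same side assumption used in the lemma, namely $\tilde s\in[s/2,3s/2]$ on the support of $p_tQ_t$ for the $\LSE$ bounds.
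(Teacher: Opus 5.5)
Your proposal is correct and is the paper's own proof: apply \Cref{lem:IPM} to each $\psi$ with $\|\psi\|_\infty\le 1$, take the supremum (attained at $\psi=\sign(p_{\mathrm{data}}-\tilde p_0)$) so the left-hand side becomes $2\,\TV(p_{\mathrm{data}},\tilde p_0)$, and divide by $2$. The paper writes out only the masked-$\LSE$ and uniform-$\LWSM$ cases and says the others follow similarly, so your case-by-case reading of the coefficients and your tracking of the factor $2$ and of the $\tilde s\in[s/2,3s/2]$ side condition are the same bookkeeping.
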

\begin{proof}[Proof of~\Cref{cor:TV}]
We demonstrate the derivations for the bounds~\eqref{eq:TV_masked_bound} and~\eqref{eq:TV_uniform_bound_LWSM}. The other bounds can be derived similarly.
From~\Cref{lem:IPM}, we have
\begin{align*}
    \left| \mathbb{E}_{\bfx \sim p_{\mathrm{data}}}[\psi(\bfx)] - \mathbb{E}_{\bfx \sim \tilde{p}_0}[\psi(\bfx)] \right| &\leq 2 \|\psi\|_\infty \left( d e^{-\| \beta \|_1}  +  \sqrt{2d} \sqrt{\| \beta (1-p(\bfm))\|_1}   \sqrt{ \,\LSE(s, \tilde{s}) + \tfrac{2}{3}\,\LThree(s, \tilde{s}) } \right),
\end{align*}
and
\begin{align*}
\left| \mathbb{E}_{\bfx \sim p_{\mathrm{data}}}[\psi(\bfx)] - \mathbb{E}_{\bfx \sim \tilde{p}_0}[\psi(\bfx)] \right| &\leq 2 \|\psi\|_\infty \left( d e^{-\| \beta \|_1}  +  \sqrt{2d} \sqrt{\|\beta \|_1}  \sqrt{\LWSM(s, \tilde{s})} \right),
\end{align*}
respectively, for masked and uniform rates. The TV distance can be obtained by taking supremum on the left hand side of the equations over the set of $\psi$ satisfying $\| \psi \|_\infty \leq 1$. Specifically, the supremum is attained at $\psi=\sign(p_{\mathrm{data}} - \tilde{p}_0)$ (satisfying $\| \psi \|_\infty = 1$). Thus, we have
\begin{align*}
    2\TV(p_{\mathrm{data}}, \tilde{p}_0) &=\sup_{\psi: \| \psi \|_\infty \leq 1}  \left| \mathbb{E}_{\bfx \sim p_{\mathrm{data}}}[\psi(\bfx)] - \mathbb{E}_{\bfx \sim \tilde{p}_0}[\psi(\bfx)] \right| \\
    &\leq 2 \underbrace{\|\psi\|_\infty}_{=1} \left( d e^{-\| \beta \|_1}  +  \sqrt{2d} \sqrt{\| \beta (1-p(\bfm))\|_1}   \sqrt{ \,\LSE(s, \tilde{s}) + \tfrac{2}{3}\,\LThree(s, \tilde{s}) } \right),
\end{align*}
and 
\begin{align*}
2\TV(p_{\mathrm{data}}, \tilde{p}_0) =\sup_{\psi: \| \psi \|_\infty \leq 1}  \left| \mathbb{E}_{\bfx \sim p_{\mathrm{data}}}[\psi(\bfx)] - \mathbb{E}_{\bfx \sim \tilde{p}_0}[\psi(\bfx)] \right| &\leq 2 \underbrace{\|\psi\|_\infty}_{=1} \left( d e^{-\| \beta \|_1}  +  \sqrt{2d} \sqrt{\|\beta \|_1}  \sqrt{\LWSM(s, \tilde{s})} \right),
\end{align*}
respectively. Dividing both sides by 2, we obtain,
\begin{align*}
    \TV(p_{\mathrm{data}}, \tilde{p}_0)  &\leq   d e^{-\| \beta \|_1}  +  \sqrt{2d} \sqrt{\| \beta (1-p(\bfm))\|_1}     \sqrt{ \,\LSE(s, \tilde{s}) + \tfrac{2}{3}\,\LThree(s, \tilde{s}) } ,
\end{align*}
and
\begin{align*}
\TV(p_{\mathrm{data}}, \tilde{p}_0) &\leq d e^{-\| \beta \|_1}  +  \sqrt{2d} \sqrt{\|\beta \|_1}  \sqrt{\LWSM(s, \tilde{s})} ,
\end{align*}
respectively, 
as desired.
\end{proof}

\subsection{Proof of \texorpdfstring{\Cref{thm:spec}}{Theorem \ref*{thm:spec}}}\label{subsec:proofspec}
\thmspec*
\begin{proof}
    The bounds are derived similarly to~\Cref{cor:TV}. Specifically, we take the supremum in~\Cref{lem:IPM} over a corresponding admissible function class $\Psi$. The left-hand side of the bounds will become integral probability metrics (for total variation, it equals $2\times \TV$). And the term $\| \psi \|_\infty$ becomes the coefficient $C_\Psi$; see~\Cref{table:IPM} for a list of common IPMs and their coefficients $C_\Psi$.
\end{proof}

\subsection{Proof of \texorpdfstring{\Cref{cor:ES}}{Theorem \ref*{cor:ES}}}\label{subsec:proofES}
\corES*
\begin{proof}
    We have, for any IPM $\gamma_{\Psi}$
    \begin{equation}\label{eq:ES_triangle}
        \gamma_\Psi(p_\mathrm{data}, \tilde{p}_\delta) = \gamma_\Psi(p_0, \tilde{p}_\delta) \leq \gamma_\Psi(p_0, {p}_\delta) + \gamma_\Psi(p_\delta, \tilde{p}_\delta),\quad \text{by triangle inequality.}
    \end{equation}
    The first term in the right-hand-side of~\eqref{eq:ES_triangle} is \textbf{the early-stopping error}. For both masked and uniform rates, we have
    \begin{align*}
        \gamma_\Psi(p_0, {p}_\delta) &= \sup_{\psi \in \Psi} \left| \mathbb{E}_{\bfx \sim p_0} \left[ \psi(\bfx)\right] - \mathbb{E}_{\bfx \sim p_\delta} \left[ \psi(\bfx)\right] \right| \\
        &= \sup_{\psi \in \Psi} \left| \sum_{\bfx \in \mathcal{X}} \left( p_0(\bfx) - p_\delta(\bfx) \right) \psi(\bfx) \right| \\
        &\leq \sup_{\psi \in \Psi} \left[ \|\psi\|_\infty \sum_{\bfx \in \mathcal{X}} \left| p_0(\bfx) - p_\delta(\bfx) \right| \right] \\
        &= 2 C_\Psi \TV(p_0, p_\delta), \quad \text{where $C_\Psi$ is given in~\Cref{table:IPM},} \\
        &\leq 2 C_\Psi \left[ d\int_0^\delta \beta(t)\,dt \right],
    \end{align*}
    where in the last step, we used~\Cref{lemma:early_stop_uniform} for the uniform rate and~\Cref{lemma:early_stop_masked} for the masked rate.

    The second term on the right-hand side of~\eqref{eq:ES_triangle} compares the true and generated marginals at time $\delta$. Since the generative reverse process is run only on $[\delta, T]$, the proofs of~\Cref{lem:IPM} and~\Cref{thm:spec} apply verbatim with the time interval $[0,T]$ replaced by $[\delta,T]$, yielding the same bound in terms of the $\delta$-truncated losses $\LSE^\delta$, $\LWSM^\delta$, $\LThree^\delta$. Similarly, the time integrals of $\beta$ in their coefficients are also taken over $[\delta,T]$.
    
    Thus, we have, for the masked rate
    \begin{align*}
        \gamma_\Psi(p_\mathrm{data}, \tilde{p}_\delta) &\leq \gamma_\Psi(p_0, {p}_\delta) + \gamma_\Psi(p_\delta, \tilde{p}_\delta) \\
        &\leq 2 C_\Psi \left[ d\int_0^\delta \beta(t)\,dt + d e^{-\| \beta \|_1}   +   \sqrt{2 d} \sqrt{\int_\delta^T \beta(t) (1-p_t(\bfm)) dt} \sqrt{ \,\LSE^\delta (s, \tilde{s}) + \tfrac{2}{3}\,\LThree^\delta (s, \tilde{s}) } \right] \\
        &\leq 2 C_\Psi \left[ d\int_0^\delta \beta(t)\,dt + d e^{-\| \beta \|_1}   +   \sqrt{2 d} \sqrt{\int_0^T \beta(t) (1-p_t(\bfm)) dt} \sqrt{ \,\LSE^\delta (s, \tilde{s}) + \tfrac{2}{3}\,\LThree^\delta (s, \tilde{s}) } \right] \\
        &= 2 C_\Psi \left[ d\int_0^\delta \beta(t)\,dt + d e^{-\| \beta \|_1}   +   \sqrt{2 d} \sqrt{\| \beta (1-p(\bfm))\|_1 } \sqrt{ \,\LSE^\delta (s, \tilde{s}) + \tfrac{2}{3}\,\LThree^\delta (s, \tilde{s}) } \right].
    \end{align*}
    Here, in the second last step, we upper bound the time integral over $[\delta, T]$ by $[0, T]$ (since the integrand $\beta(t) (1-p_t(\bfm))$ is nonnegative), allowing us to represent the bound using established notation concisely in the last step. 

    The bounds for the uniform rate follow similarly as
    \begin{align*}
        \gamma_\Psi(p_\mathrm{data}, \tilde{p}_\delta) &\leq \gamma_\Psi(p_0, {p}_\delta) + \gamma_\Psi(p_\delta, \tilde{p}_\delta) \\
        &\leq 2 C_\Psi \left[ d\int_0^\delta \beta(t)\,dt + d e^{-\| \beta \|_1}   +   \sqrt{2 d} \sqrt{\int_\delta^T \beta(t) dt} \sqrt{ \,\LSE^\delta(s, \tilde{s}) + \tfrac{2}{3}\,\LThree^\delta(s, \tilde{s}) } \right] \\
        &\leq 2 C_\Psi \left[ d\int_0^\delta \beta(t)\,dt + d e^{-\| \beta \|_1}   +   \sqrt{2 d} \sqrt{\int_0^T \beta(t) dt} \sqrt{ \,\LSE^\delta(s, \tilde{s}) + \tfrac{2}{3}\,\LThree^\delta(s, \tilde{s}) } \right] \\
        &= 2 C_\Psi \left[ d\int_0^\delta \beta(t)\,dt + d e^{-\| \beta \|_1}   +   \sqrt{2 d} \sqrt{\|\beta \|_1}  \sqrt{ \,\LSE^\delta(s, \tilde{s}) + \tfrac{2}{3}\,\LThree^\delta(s, \tilde{s}) }  \right],
    \end{align*}
    and
    \begin{align*}
        \gamma_\Psi(p_\mathrm{data}, \tilde{p}_\delta) &\leq \gamma_\Psi(p_0, {p}_\delta) + \gamma_\Psi(p_\delta, \tilde{p}_\delta) \\
        &\leq 2 C_\Psi \left[ d\int_0^\delta \beta(t)\,dt + d e^{-\| \beta \|_1}   +   \sqrt{2 d} \sqrt{\int_\delta^T \beta(t) dt} \sqrt{\LWSM^\delta(s, \tilde{s})} \right] \\
        &\leq 2 C_\Psi \left[ d\int_0^\delta \beta(t)\,dt + d e^{-\| \beta \|_1}   +   \sqrt{2 d} \sqrt{\int_0^T \beta(t) dt} \sqrt{\LWSM^\delta(s, \tilde{s})} \right] \\
        &= 2 C_\Psi \left[ d\int_0^\delta \beta(t)\,dt + d e^{-\| \beta \|_1}   +   \sqrt{2 d} \sqrt{\|\beta \|_1}  \sqrt{\LWSM^\delta(s, \tilde{s})}  \right].
    \end{align*}

    The bounds for a general rate also follow similarly as
    \begin{align*}
        \gamma_\Psi(p_\mathrm{data}, \tilde{p}_\delta) &\leq \gamma_\Psi(p_0, {p}_\delta) + \gamma_\Psi(p_\delta, \tilde{p}_\delta) \\
        &\leq 2 C_\Psi \left[ \TV(p_0, p_\delta) + \TV(p_T, p_\mathrm{base})   +   \sqrt{2} \sqrt{ \int_\delta^T \mathbb{E}_{\bfx \sim p_t}\!\left[ \mu^\mathrm{out}_t(\bfx) \right] dt }  \sqrt{\,\LSE^\delta(s, \tilde{s}) + \tfrac{2}{3}\,\LThree^\delta(s, \tilde{s})} \right] \\
        &\leq 2 C_\Psi \left[ \TV(p_0, p_\delta) + \TV(p_T, p_\mathrm{base})   +   \sqrt{2} \sqrt{ \int_0^T \mathbb{E}_{\bfx \sim p_t}\!\left[ \mu^\mathrm{out}_t(\bfx) \right] dt }  \sqrt{\,\LSE^\delta(s, \tilde{s}) + \tfrac{2}{3}\,\LThree^\delta(s, \tilde{s})} \right],
    \end{align*}
    and 
    \begin{align*}
        \gamma_\Psi(p_\mathrm{data}, \tilde{p}_\delta) &\leq \gamma_\Psi(p_0, {p}_\delta) + \gamma_\Psi(p_\delta, \tilde{p}_\delta) \\
        &\leq 2 C_\Psi \left[ \TV(p_0, p_\delta) + \TV(p_T, p_\mathrm{base})   +  \sqrt{2} \sqrt{ \int_\delta^T \mathbb{E}_{\bfx \sim p_t}\!\left[ \mu^\mathrm{in}_t(\bfx) \right] dt }  \sqrt{\LWSM^\delta(s, \tilde{s})} \right] \\
        &\leq 2 C_\Psi \left[ \TV(p_0, p_\delta) + \TV(p_T, p_\mathrm{base})   +   \sqrt{2} \sqrt{ \int_0^T \mathbb{E}_{\bfx \sim p_t}\!\left[ \mu^\mathrm{in}_t(\bfx) \right] dt }  \sqrt{\LWSM^\delta(s, \tilde{s})} \right].
    \end{align*}
\end{proof}

\section{Proof of Auxiliary Lemmas For Main Theorems}
We state and prove the auxiliary lemmas.

\subsection{Lemmas for the Adjoint Equation}

\begin{lemma}[Sup-norm Non-expansiveness Under CTMC Generators]\label{lemma:phi_bound}
    Suppose $\| \psi \|_\infty < \infty$, then the solution to the Kolmogorov backward equation~\eqref{eq:KBE} $\phi_t$ satisfies
    \begin{equation}
        \| \phi_t \|_\infty \leq \| \psi \|_\infty \quad \text{for all } t\in [0, T].
    \end{equation}
\end{lemma}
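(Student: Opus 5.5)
The plan is a maximum-principle argument carried out directly on the linear ODE~\eqref{eq:KBE}, avoiding any path-space construction. Write $A_t:=\tilde{Q}_t^{\leftarrow}$. By~\eqref{eq:approx_reverse_matrix}, $A_t$ has nonnegative off-diagonal entries. This uses $\tilde{s}_t(\bfx)_\bfy\ge 0$ and $Q_t(\bfy,\bfx)\ge 0$, and $\tilde s \ge 0$ is implicit since $\tilde s$ estimates a ratio of probabilities. By construction, $A_t$ also has zero row sums. Assumption~\ref{ass:rate}, together with measurability and integrability of $t\mapsto \tilde s_t$, makes $t\mapsto A_t$ locally integrable. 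Hence~\eqref{eq:KBE} has a unique absolutely continuous solution, given by $\phi_t = P_{0,t}\psi$, where $P_{0,t}$ is the propagator with $\partial_t P_{0,t}=A_tP_{0,t}$ and $P_{0,0}=I$.

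The key step is to show that $P_{0,t}$ is a stochastic matrix, with nonnegative entries and rows summing to one. Row sums are easy: $v_t:=P_{0,t}\mathbf 1$ solves $\dot v_t=A_t v_t$ with $v_0=\mathbf 1$. Since $A_t\mathbf 1=0$, the constant $\mathbf 1$ is a solution, and by uniqueness $v_t\equiv\mathbf 1$.

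For nonnegativity, split $A_t = B_t - D_t$, where $D_t=\mathrm{diag}(\mu_t)$ with $\mu_t(\bfx):=-A_t(\bfx,\bfx)\ge0$ and $B_t\ge 0$ entrywise. Set $E_t:=\exp(\int_0^t D_r\,dr)$; this is diagonal and commutes with $D_t$. The rescaled propagator $R_t:=E_tP_{0,t}$ satisfies $\dot R_t = E_t B_t E_t^{-1} R_t$, which is a linear ODE with an entrywise nonnegative coefficient matrix. Its Picard/Dyson series, starting from $R_0=I$, therefore has only nonnegative terms, and it converges because the coefficient is integrable on a finite-dimensional space. So $R_t\ge0$, and hence $P_{0,t}=E_t^{-1}R_t\ge 0$. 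This step is the only genuine obstacle. It is where the sign structure of the generator enters and where care with merely integrable (not continuous) coefficients is needed. If one prefers the probabilistic phrasing mentioned in the sketch, the same identity is the Feynman--Kac representation $\phi_t(\bfx)=\mathbb E[\psi(Y_t)\mid Y_0=\bfx]$ for the CTMC $Y$ generated by $A$.

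With stochasticity in hand, the conclusion is immediate. For each $\bfx$, $\phi_t(\bfx)=\sum_\bfy P_{0,t}(\bfx,\bfy)\psi(\bfy)$ is a convex combination of values of $\psi$, so $|\phi_t(\bfx)|\le\sum_\bfy P_{0,t}(\bfx,\bfy)\,\|\psi\|_\infty=\|\psi\|_\infty$. Taking the maximum over $\bfx$ and letting $t\in[0,T]$ be arbitrary gives the claim.

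As a fallback route, one can avoid propagators altogether and run a direct maximum principle. Consider $m(t):=\max_\bfx \phi_t(\bfx)$. At any maximizer $\bfx^\star$, $(A_t\phi_t)(\bfx^\star)=\sum_{\bfy\ne\bfx^\star}A_t(\bfx^\star,\bfy)(\phi_t(\bfy)-\phi_t(\bfx^\star))\le0$. This gives $D^+m\le0$ almost everywhere for the absolutely continuous $m$, so $m$ is nonincreasing. The same argument applied to $-\phi$ handles the minimum.
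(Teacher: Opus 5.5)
Your proposal is correct and follows the same two lines as the paper. Your stochastic-propagator argument (row sums by uniqueness, nonnegativity via the diagonal rescaling $R_t=E_tP_{0,t}$ and a nonnegative Dyson series) supplies the analytic content that the paper's one-line Feynman--Kac representation takes for granted. Your fallback is the paper's maximum principle, run through the a.e.\ derivative of $m(t)=\max_{\bfx}\phi_t(\bfx)$ instead of the $\phi_t-\epsilon t$ perturbation. That version has two advantages: it does not need the ODE to hold pointwise at the maximizing time, so it covers merely integrable coefficients, and it treats the minimum explicitly.

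One small slip: integrability of $t\mapsto\tilde s_t$ and of $t\mapsto Q_t$ separately does not make the product $\tilde s_t(\bfx)_\bfy Q_t(\bfy,\bfx)$ integrable. For the propagator route you should instead assume $\tilde s$ bounded, or simply assume $t\mapsto\tilde{Q}_t^{\leftarrow}$ is integrable (the paper tacitly assumes a classical solution exists anyway).
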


\begin{proof}
We provide two proofs here for completeness. The first direction is using a Feynman-Kac representation, which is short and clean. The second direction is to mimic the maximum principle used for parabolic partial differential equations (PDEs), which is our original intuition in deriving this result.

\textbf{First Direction (Feynman-Kac representation):}
The Feynman--Kac representation gives $\phi_t(\bfx) = \E^\bfx[\psi(\bfY_t)]$, where $(\bfY_s)$ is the CTMC with generator $\tilde{Q}_{t}^{\leftarrow}$ started at $\bfY_0 = \bfx$. Hence $|\phi_t(\bfx)| \le \E^\bfx[|\psi(\bfY_t)|] \le \|\psi\|_\infty$.

\textbf{Second Direction (Maximum principle):} Recall that the Kolmogorov backward equation is given by
    \begin{equation*}
    \frac{d\phi_t}{dt}=\tilde{Q}_{t}^{\leftarrow} \phi_t, \quad \phi_0(\bfx)=\psi(\bfx), \quad \text{for } t \in [0, T].
    \end{equation*}
    For any $\epsilon >0$, construct the function
    \begin{equation*}
        \phi_t^\epsilon(\bfx) := \phi_t(\bfx) - \epsilon t; \quad \text{in vector notation: } \phi_t^\epsilon := \phi_t - \epsilon t\mathbf{1}.
    \end{equation*}
    Evaluating its time derivative gives
    \begin{align}
        \frac{d \phi_t^\epsilon }{dt} &= \frac{d}{dt}(\phi_t - \epsilon t \mathbf{1}) \nonumber \\
        &= \tilde{Q}_{t}^{\leftarrow} \phi_t - \epsilon \mathbf{1} \nonumber \\
        &= \tilde{Q}_{t}^{\leftarrow} \left( \phi^\epsilon_t + \epsilon t \mathbf{1} \right) - \epsilon \mathbf{1} \nonumber \\
        &= \tilde{Q}_{t}^{\leftarrow}  \phi^\epsilon_t  - \epsilon \mathbf{1} ,  \label{eq:d_phi_epsilon} \end{align}
        as $\tilde{Q}_{t}^{\leftarrow}$'s rows sum to 0.
        
        Suppose that $\phi^\epsilon_t$ attains its maximum at $t_0 \in (0, T]$ and $\bfx_0$, then we have the following two properties:
        \begin{enumerate}
            \item \begin{equation}\label{eq:testfunc_deri_nonneg}\frac{d  }{dt} \phi_{t_0}^\epsilon(\bfx_0) \geq 0,\end{equation} \\
            as $\frac{d  }{dt} \phi_{t_0}^\epsilon(\bfx_0)  = 0$ if $t_0 < T$ and $\frac{d  }{dt} \phi_{t_0}^\epsilon(\bfx_0)  \geq 0$ if $t_0 = T$,
            \item \begin{equation}\label{eq:testfunc_Q_neg}[\tilde{Q}_{t_0}^{\leftarrow} \phi_{t_0}^\epsilon](\bfx_0) \leq 0,\end{equation} \\
            because
            \begin{align*}
                [\tilde{Q}_{t_0}^{\leftarrow} \phi_{t_0}^\epsilon](\bfx_0) &= \sum_\bfy \tilde{Q}_{t_0}^{\leftarrow}(\bfx_0, \bfy) \phi_{t_0}^\epsilon(\bfy)\\
                &=\sum_{\bfy \neq \bfx_0} \left[\tilde{Q}_{t_0}^{\leftarrow}(\bfx_0, \bfy) \phi_{t_0}^\epsilon(\bfy) \right] + \tilde{Q}_{t_0}^{\leftarrow}(\bfx_0, \bfx_0) \phi_{t_0}^\epsilon(\bfx_0)\\
                &=\sum_{\bfy \neq \bfx_0} \underbrace{\tilde{Q}_{t_0}^{\leftarrow}(\bfx_0, \bfy)}_{\geq 0} \underbrace{\left(\phi_{t_0}^\epsilon(\bfy)-\phi_{t_0}^\epsilon(\bfx_0) \right)}_{\leq 0, \text{ as $\phi_{t_0}^\epsilon(\bfx_0)$ is max}} \leq 0.
            \end{align*}
        \end{enumerate}
    The two properties together give us
    \begin{align*}
        0 &\leq \frac{d  }{dt} \phi_{t_0}^\epsilon(\bfx_0) , \quad \text{by~\eqref{eq:testfunc_deri_nonneg}}\\
        &= \underbrace{\left[ \tilde{Q}_{t}^{\leftarrow}  \phi^\epsilon_t\right](\bfx_0)}_{\leq 0, \text{ by~\eqref{eq:testfunc_Q_neg}}}  - \epsilon ,\quad \text{by~\eqref{eq:d_phi_epsilon},}\\
        &\leq -\epsilon \\
        &<0,
    \end{align*}
    leading to a contradiction. This implies $\phi^\epsilon_t$ cannot attain its maximum at $t_0 \in (0, T]$. Thus, $\phi^\epsilon_t$ must attain its maximum at $t=0$, and we have, for all $t \in [0, T], \bfx$,
    \begin{equation*}
        \phi_t(\bfx)-\epsilon t= \phi_t^\epsilon(\bfx) \leq \max_\bfy \phi_0^\epsilon(\bfy) = \max_\bfy \phi_0(\bfy)= \max_\bfy \psi(\bfy).
    \end{equation*}
    Since $\epsilon>0$ is arbitrary, taking $\epsilon \to 0^+$ gives
    \begin{equation*}
        \phi_t(\bfx) \leq \max_\bfy \psi(\bfy), \quad \text{for all $t \in [0, T], \bfx$}.
    \end{equation*}
    This implies 
    $$
    \| \phi_t \|_\infty \leq \| \psi \|_\infty, \quad \text{for all $t \in [0, T]$},
    $$
    as desired.
\end{proof}

\subsection{Lemmas for the Training Loss Term}
\begin{lemma}\label{lemma:sum_uniform}
    Suppose the forward process~\eqref{eq:KFE_ntoken} is governed by the factorizing transition~\eqref{eq:Q_factor} with uniform rate~\eqref{eq:Q_absorb_uni}, then
    $$
    \left( \int_0^T \sum\limits_{\substack{\bfx\in\mathcal{X}}} \sum\limits_{\substack{\bfy\in\mathcal{X} \\ \bfy\neq \bfx}} 
    p_t(\bfy)  Q_t(\bfy, \bfx) dt\right)^{\frac{1}{2}} = \left( \int_0^T \sum\limits_{\substack{\bfx\in\mathcal{X}}} \sum\limits_{\substack{\bfy\in\mathcal{X} \\ \bfy\neq \bfx}} 
    p_t(\bfx)  Q_t(\bfy, \bfx) dt\right)^{\frac{1}{2}} \leq \sqrt{d} \sqrt{\|\beta\|_1},
    $$
    where $\|\beta\|_1 := \int_0^T |\beta(t)| dt=\int_0^T \beta(t) dt$.
\end{lemma}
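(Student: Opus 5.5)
The plan is to exploit the symmetry of the uniform token rate and then count neighbors directly. Because $Q_t^{\rm uniform}$ is symmetric, \eqref{eq:Q_factor} makes the full generator $Q_t$ symmetric as well: for $\bfx \neq \bfy$ at Hamming distance $1$, differing in position $i$, we have $Q_t(\bfy,\bfx) = Q^{\mathrm{tok}}_t(y^i,x^i) = \beta(t)/S = Q_t(\bfx,\bfy)$, and both entries vanish otherwise.

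\textbf{Step 1: the equality.} I will swap the names of the summation variables $\bfx \leftrightarrow \bfy$ in the left-hand double sum, which gives $\sum_{\bfx}\sum_{\bfy\neq\bfx} p_t(\bfx) Q_t(\bfx,\bfy)$. Symmetry of $Q_t$ then turns this into the right-hand expression. This step needs nothing beyond finite-sum relabeling.

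\textbf{Step 2: the bound.} Working with the form $\sum_{\bfx} p_t(\bfx)\sum_{\bfy\neq\bfx} Q_t(\bfx,\bfy)$, the inner sum is the exit rate $\mu^{\rm out}_t(\bfx) = -Q_t(\bfx,\bfx)$. Each of the $d$ positions admits $S-1$ alternative tokens, each jumped to at rate $\beta(t)/S$, so
\[
\mu^{\rm out}_t(\bfx) = d(S-1)\beta(t)/S \le d\,\beta(t)
\]
for every $\bfx$, uniformly. Summing against the probability vector $p_t$ gives at most $d\,\beta(t)$. Integrating over $[0,T]$ gives at most $d\|\beta\|_1$, using $\beta\ge 0$ (required for $Q_t$ to be a valid rate matrix) so that $\|\beta\|_1=\int_0^T\beta\,dt$. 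Assumption~\ref{ass:rate} guarantees this integral is finite. Taking square roots yields $\sqrt{d}\sqrt{\|\beta\|_1}$.

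\textbf{Main obstacle.} Nothing here is difficult. The only care point is the bookkeeping: the off-diagonal entries of $Q_t^{\rm uniform}$ from \eqref{eq:Q_absorb_uni} equal $\beta(t)/S$, and the Kronecker-sum structure restricts the nonzero entries to Hamming-distance-$1$ pairs. Together these give exactly $d(S-1)$ neighbors, which is where the factor $(S-1)/S<1$ cancels the vocabulary dependence.
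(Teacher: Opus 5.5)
Your proposal is correct and follows essentially the same route as the paper. The equality comes from relabeling $\bfx\leftrightarrow\bfy$ together with the symmetry $Q_t(\bfy,\bfx)=Q_t(\bfx,\bfy)=\beta(t)/S$ on Hamming-distance-$1$ pairs, and the bound comes from counting the $d(S-1)$ neighbors to get $d(S-1)\beta(t)/S\le d\,\beta(t)$ pointwise. The only cosmetic difference is that you bound the exit-rate form while the paper bounds the entrance-rate form; these coincide by that same symmetry. Your explicit restriction to Hamming-distance-$1$ pairs in the equality step is also slightly more careful than the paper's write-up.
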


\begin{proof}
First, we derive the equality.
\begin{align*}
    \int_0^T \sum\limits_{\substack{\bfx\in\mathcal{X}}} \sum\limits_{\substack{\bfy\in\mathcal{X} \\ \bfy\neq \bfx}} 
    p_t(\bfy)  Q_t(\bfy, \bfx) dt &= \int_0^T \sum\limits_{\substack{\bfx\in\mathcal{X}}} \sum\limits_{\substack{\bfy\in\mathcal{X} \\ \bfy\neq \bfx}} 
    p_t(\bfy)  \frac{\beta(t)}{S} dt, \quad \text{by~\eqref{eq:Q_absorb_uni} and~\eqref{eq:Q_factor},} \\
    &= \int_0^T \sum\limits_{\substack{\bfy \in\mathcal{X}}} p_t(\bfy)\sum\limits_{\substack{\bfx \in\mathcal{X} \\ \bfx \neq \bfy}} \frac{\beta(t)}{S} dt, \quad \text{by Fubini's Theorem,} \\
    &= \int_0^T \sum\limits_{\substack{\bfx \in\mathcal{X}}} p_t(\bfx)\sum\limits_{\substack{\bfy \in\mathcal{X} \\ \bfy \neq \bfx}} \frac{\beta(t)}{S} dt, \quad \text{where we swapped the dummy variable $\bfx$ and $\bfy$,} \\
    &= \int_0^T \sum\limits_{\substack{\bfx \in\mathcal{X}}} p_t(\bfx)\sum\limits_{\substack{\bfy \in\mathcal{X} \\ \bfy \neq \bfx}} Q_t(\bfy, \bfx) dt, \quad \text{by~\eqref{eq:Q_absorb_uni} and~\eqref{eq:Q_factor} again,} \\
    &= \int_0^T \sum\limits_{\substack{\bfx \in\mathcal{X}}} \sum\limits_{\substack{\bfy \in\mathcal{X} \\ \bfy \neq \bfx}} p_t(\bfx) Q_t(\bfy, \bfx) dt.
\end{align*}
Next, we derive the upper bound.
\begin{align*}
    &\int_0^T \sum\limits_{\substack{\bfx\in\mathcal{X}}} \sum\limits_{\substack{\bfy\in\mathcal{X} \\ \bfy\neq \bfx}} 
    p_t(\bfx)  Q_t(\bfy, \bfx) dt \\
    &=\int_0^T \sum\limits_{\substack{\bfx\in\mathcal{X}}} \sum\limits_{\substack{\bfy\in\mathcal{X} \\ \distH{\bfy}{\bfx} = 1}} p_t(\bfx)  \frac{\beta(t)}{S} dt, \quad \text{by~\eqref{eq:Q_absorb_uni} and~\eqref{eq:Q_factor},}\\
    &=\int_0^T \sum\limits_{\substack{\bfx\in\mathcal{X}}} p_t(\bfx)  \frac{\beta(t)d(S-1)}{S}dt, \quad \text{as each $\bfx$ has $d(S-1)$ neighbors $\bfy$ satisfying }\distH{\bfy}{\bfx} = 1,\\
    &=\int_0^T \frac{\beta(t)d(S-1)}{S} dt \\
    &\leq d \int_0^T \beta(t) dt\\
    &= d \|\beta\|_1,
\end{align*}
taking square root on both sides yields the desired result.
\end{proof}

\begin{lemma}\label{lemma:relative_matching}
    Under the factorizing transition~\eqref{eq:Q_factor} with the masked transition rate~\eqref{eq:Q_absorb_uni}, for $0<t\leq T$, and $g_t: \mathcal{X} \times \mathcal{X} \to \mathbb{R}_{\geq 0}$,
    \begin{equation}
        \sum_{\substack{\bfy \in \mathcal{X} \\ \bfy \neq \bfm}} \sum_{\substack{\substack{\bfx \in \Nmasked(\bfy)}}} p_t(\bfy) g_t(\bfx, \bfy) = (1-p_t(\bfm)) \mathbb{E}_{\bfy \sim p_t(\bfy|\bfy \neq \bfm)} \left[ \sum_{\substack{\substack{\bfx \in \Nmasked(\bfy)}}} g_t(\bfx, \bfy) \right],
    \end{equation}
    where $\bfm=[\mask,...,\mask]$ is the sequence of all mask tokens, and $$\Nmasked(\bfy)=\left\{ \bfx \in \mathcal{X} \, \middle| \, \exists i \in [d], \bfx^{\backslash i}=\bfy^{\backslash i} , x^i=\mask, y^i \neq \mask \right\}$$ is the successor set of $\bfy$.
\end{lemma}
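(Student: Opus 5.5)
This identity is essentially the definition of conditional expectation, so the plan is a direct rewriting rather than an estimate. First, I would restrict attention to the case $p_t(\bfm)<1$. If $p_t(\bfm)=1$, then every $\bfy\neq\bfm$ has $p_t(\bfy)=0$, so the left-hand side vanishes. The right-hand side also vanishes, because its prefactor $1-p_t(\bfm)$ is zero; here I read the conditional expectation as an arbitrary finite quantity, so the product is zero by convention.

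Assuming now $1-p_t(\bfm)>0$, I would introduce the conditional law
\[
p_t(\bfy\mid\bfy\neq\bfm):=\frac{p_t(\bfy)}{1-p_t(\bfm)}\quad\text{for }\bfy\neq\bfm,
\]
which is a probability mass function on $\mathcal{X}\setminus\{\bfm\}$ since $\sum_{\bfy\neq\bfm}p_t(\bfy)=1-p_t(\bfm)$. For each fixed $\bfy$, the inner quantity $G_t(\bfy):=\sum_{\bfx\in\Nmasked(\bfy)}g_t(\bfx,\bfy)$ is a finite sum, since $|\Nmasked(\bfy)|\le d$. It is also nonnegative, so no integrability or Fubini issues arise. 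Factoring $p_t(\bfy)$ out of the inner sum gives
\[
\sum_{\bfy\neq\bfm}p_t(\bfy)\,G_t(\bfy)=(1-p_t(\bfm))\sum_{\bfy\neq\bfm}\frac{p_t(\bfy)}{1-p_t(\bfm)}\,G_t(\bfy),
\]
and the last sum is by definition $\mathbb{E}_{\bfy\sim p_t(\bfy\mid\bfy\neq\bfm)}[G_t(\bfy)]$.

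One side remark is worth checking: $\Nmasked(\bfm)=\emptyset$, because $\bfm$ has no unmasked coordinate. Consequently, excluding $\bfy=\bfm$ from the outer sum loses nothing, which is consistent with how the lemma is used in Case 1 of the proof of \Cref{lem:IPM}.

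No step is a genuine obstacle. The only point needing care is the degenerate case $p_t(\bfm)=1$, where the conditional distribution is undefined. In practice this does not occur for $t>0$ unless $p_{\rm data}=\delta_{\bfm}$, and the convention above handles it in any case.
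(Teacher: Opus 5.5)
Your proposal is correct and follows the paper's proof: the paper also writes $p_t(\bfy)=(1-p_t(\bfm))\,p_t(\bfy\mid\bfy\neq\bfm)$ for $\bfy\neq\bfm$, multiplies by $g_t(\bfx,\bfy)$, sums over $\bfy\neq\bfm$ and $\bfx\in\Nmasked(\bfy)$, and recognizes the conditional expectation. Your separate handling of the degenerate case $p_t(\bfm)=1$, where the paper divides by $1-p_t(\bfm)$ without comment, is a small, harmless refinement.
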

\begin{proof}
    We have
    \begin{align*}
        p_t(\bfy | \bfy\neq \bfm) &= \frac{p_t(\bfy)}{1-p_t(\bfm)}, \quad &\text{for } \bfy \neq \bfm,\\
        (1-p_t(\bfm)) p_t(\bfy | \bfy\neq \bfm) &= {p_t(\bfy)}, \quad &\text{for } \bfy\neq \bfm,\\
        ({1-p_t(\bfm)}) p_t(\bfy | \bfy\neq \bfm) g_t(\bfx, \bfy) &= p_t(\bfy) g_t(\bfx, \bfy), \quad &\text{for } \bfy \neq \bfm, \, \bfx \in \Nmasked(\bfy),\\
        ({1-p_t(\bfm)}) \sum_{\substack{\bfy \in \mathcal{X} \\ \bfy \neq \bfm}} \sum_{\substack{\substack{\bfx \in \Nmasked(\bfy)}}} p_t(\bfy | \bfy\neq \bfm) g_t(\bfx, \bfy) &= \sum_{\substack{\bfy \in \mathcal{X} \\ \bfy \neq \bfm}} \sum_{\substack{\substack{\bfx \in \Nmasked(\bfy)}}} p_t(\bfy) g_t(\bfx, \bfy) \\
        (1-p_t(\bfm)) \mathbb{E}_{\bfy \sim p_t(\bfy|\bfy \neq \bfm)} \left[ \sum_{\substack{\substack{\bfx \in \Nmasked(\bfy)}}} g_t(\bfx, \bfy)\right] &=\sum_{\substack{\bfy \in \mathcal{X} \\ \bfy \neq \bfm}} \sum_{\substack{\substack{\bfx \in \Nmasked(\bfy)}}} p_t(\bfy) g_t(\bfx, \bfy),
    \end{align*}
    swapping left-hand-side and right-hand-side, we get the desired result.
\end{proof}

\begin{lemma}
\label{lem:elementary}
For $|u| \le 1/2$,
\begin{equation}
\frac{u^2}{2} + u + \log(1 - u) \;\le\; \frac{2|u|^3}{3}.
\label{eq:elementary}
\end{equation}
\end{lemma}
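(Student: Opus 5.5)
Define $f(u) := \frac{u^2}{2} + u + \log(1-u)$ on $|u|\le 1/2$ and compare it with $\frac{2|u|^3}{3}$ through a Taylor expansion with an explicit remainder. Since $f$ is smooth on $(-1,1)$, we have $f(0)=0$, $f'(u) = u + 1 - \frac{1}{1-u}$, so $f'(0)=0$, and $f''(u) = 1 - \frac{1}{(1-u)^2}$, so $f''(0)=0$. Moreover $f'''(u) = -\frac{2}{(1-u)^3}$. Taylor's theorem with Lagrange remainder then gives, for some $\xi$ between $0$ and $u$,
\[
f(u) = \frac{f'''(\xi)}{6}u^3 = -\frac{u^3}{3(1-\xi)^3}.
\]

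Next split by the sign of $u$. If $0\le u\le 1/2$, then $f(u)\le 0\le \frac{2|u|^3}{3}$ and there is nothing to prove. If $-1/2\le u<0$, then $\xi\in(u,0)$, so $1-\xi\in(1,3/2)$ and $(1-\xi)^3>1$. Consequently
\[
f(u) = \frac{|u|^3}{3(1-\xi)^3} \le \frac{|u|^3}{3} \le \frac{2|u|^3}{3}.
\]

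The only real work is making sure the sign and magnitude of $(1-\xi)^{-3}$ are handled correctly on each side of $0$. This also shows that the constant $2/3$ has slack: on the side where $f>0$ the constant $1/3$ already suffices. A direct derivative comparison of $g(u) = \frac{2|u|^3}{3} - f(u)$ on each half-interval would be an alternative route, but the Lagrange-remainder argument is shorter and avoids it.
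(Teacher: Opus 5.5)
Your proof is correct, but it is organized differently from the paper's.

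The paper works with the difference $g(u)=\frac{2|u|^3}{3}-\frac{u^2}{2}-u-\log(1-u)$, which is the alternative you mention at the end. It splits into $u\ge 0$ and $u\le 0$, so that $|u|^3=\pm u^3$ is a polynomial on each half, and checks $g(0)=g'(0)=g''(0)=0$. It then uses the sign of $g'''$ ($4+2/(1-u)^3>0$ on the right, $-4+2/(1-u)^3\le -2$ on the left) and propagates it up through $g''$, $g'$, $g$ by repeated monotonicity arguments.

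You instead apply Taylor's theorem with Lagrange remainder to the left-hand side alone. The $|u|^3$ term is therefore never differentiated, and everything reduces to the identity $f(u)=-u^3/\bigl(3(1-\xi)^3\bigr)$. Your sign and size checks on $(1-\xi)^{-3}$ are right on both sides of $0$.

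The underlying mechanism is the same in both: derivatives vanish through order two at $0$, and the third derivative is controlled. The paper's monotonicity chain is essentially a hand-made proof of the remainder estimate.

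What your version buys is brevity and transparency. It shows directly that $f(u)\le 0$ for all $0\le u<1$ and $f(u)\le |u|^3/3$ for all $u<0$. So neither the restriction $|u|\le 1/2$ nor the factor $2/3$ is actually needed. The paper's third-derivative bounds also hold for all $u<1$, but its write-up invokes $|u|\le 1/2$ and hides this. A consequence is that Lemma~\ref{lem:bregman-cubic} holds for every $\tilde s>0$, since $u=1-\tilde s/s<1$ automatically.

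The paper's route, in exchange, is entirely elementary: it needs only "zero value plus positive derivative implies positive" applied three times, with no appeal to Taylor's theorem. The cost is a longer, piecewise computation.
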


\begin{proof}
Define $f(u) := \frac{2|u|^3}{3} - \frac{u^2}{2} - u - \log(1-u)$ on
$[-1/2, 1/2]$. We show $f(u) \ge 0$ by case analysis on the sign of $u$.

\emph{Case $u \in [0, 1/2]$.} Here $|u|^3 = u^3$, so
$f(u) = \tfrac{2u^3}{3} - \tfrac{u^2}{2} - u - \log(1-u)$ and
\begin{align*}
f(0) &= 0, \\
f'(u) &= 2u^2 - u - 1 + \tfrac{1}{1-u}, & f'(0) &= 0, \\
f''(u) &= 4u - 1 + \tfrac{1}{(1-u)^2}, & f''(0) &= 0, \\
f'''(u) &= 4 + \tfrac{2}{(1-u)^3} \;>\; 0 \quad \text{on } [0, 1/2].
\end{align*}
Since $f''' > 0$ on $[0, 1/2]$, $f''$ is strictly increasing; combined
with $f''(0) = 0$, this gives $f''(u) > 0$ for $u \in (0, 1/2]$. Hence
$f'$ is strictly increasing on $[0, 1/2]$, and $f'(0) = 0$ gives
$f'(u) > 0$ for $u \in (0, 1/2]$. Hence $f$ is strictly increasing on
$[0, 1/2]$, and $f(0) = 0$ gives $f(u) > 0$ for $u \in (0, 1/2]$.

\emph{Case $u \in [-1/2, 0]$.} Here $|u|^3 = -u^3$, so
$f(u) = -\tfrac{2u^3}{3} - \tfrac{u^2}{2} - u - \log(1-u)$ and
\begin{align*}
f(0) &= 0, \\
f'(u) &= -2u^2 - u - 1 + \tfrac{1}{1-u}, & f'(0) &= 0, \\
f''(u) &= -4u - 1 + \tfrac{1}{(1-u)^2}, & f''(0) &= 0, \\
f'''(u) &= -4 + \tfrac{2}{(1-u)^3}.
\end{align*}
On $[-1/2, 0]$, $1 - u \in [1, 3/2]$, so
$\tfrac{2}{(1-u)^3} \in [\tfrac{16}{27}, 2]$, giving
$f'''(u) \le 2 - 4 = -2 < 0$. Hence $f''$ is strictly decreasing on
$[-1/2, 0]$; combined with $f''(0) = 0$, this gives $f''(u) > 0$ for
$u \in [-1/2, 0)$. Hence $f'$ is strictly increasing on $[-1/2, 0]$,
and $f'(0) = 0$ gives $f'(u) < 0$ for $u \in [-1/2, 0)$. Hence $f$ is
strictly decreasing on $[-1/2, 0]$, and $f(0) = 0$ gives $f(u) > 0$
for $u \in [-1/2, 0)$.

Combining both cases, $f(u) \ge 0$ on $[-1/2, 1/2]$, with equality only
at $u = 0$. Rearranging gives~\eqref{eq:elementary}.
\end{proof}

\begin{lemma}[Bregman with positive cubic remainder]
\label{lem:bregman-cubic}
Define the Bregman divergence with generating function
$\varphi(a) = a\log a - a$ as
\begin{equation}
D_\varphi(s, \tilde s) = \varphi(s) - \varphi(\tilde s) - \varphi'(\tilde s)(s - \tilde s) = s\log\left(\frac{s}{\tilde s}\right) + \tilde s - s.
\label{eq:bregman}
\end{equation}
For $s > 0$ and $\tilde s \in [s/2, 3s/2]$,
\begin{equation}
\frac{s}{2} {\left(1 - \frac{\tilde s}{s}\right)^2}\;\le\; D_\varphi(s, \tilde s) + \frac{2\,|s - \tilde s|^3}{3 s^2}.
\label{eq:bregman-cubic}
\end{equation}
\end{lemma}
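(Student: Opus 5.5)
The plan is to reduce the inequality to the scalar bound of \Cref{lem:elementary} through the substitution $u := 1 - \tilde s/s$. With this substitution the hypothesis $\tilde s \in [s/2, 3s/2]$ becomes exactly $|u| \le 1/2$, which is the range where \Cref{lem:elementary} applies. We also have $\tilde s = s(1-u)$ and $|s - \tilde s| = s|u|$.

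First, I will rewrite each term of \eqref{eq:bregman-cubic} in terms of $u$.

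\begin{itemize}
\item The left-hand side is $\tfrac{s}{2}u^2$.
\item The cubic term is $\tfrac{2 s^3|u|^3}{3s^2} = \tfrac{2s|u|^3}{3}$.
\item The Bregman divergence becomes
\[
D_\varphi(s,\tilde s) = s\log\frac{1}{1-u} + s(1-u) - s = s\bigl(-\log(1-u) - u\bigr).
\]
\end{itemize}

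Since $s>0$, after dividing by $s$ the claim is equivalent to
\[
\frac{u^2}{2} \le -\log(1-u) - u + \frac{2|u|^3}{3}.
\]
Rearranged, this reads $\tfrac{u^2}{2} + u + \log(1-u) \le \tfrac{2|u|^3}{3}$, which is precisely \eqref{eq:elementary}.

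Invoking \Cref{lem:elementary} for $|u|\le 1/2$ then finishes the argument. There is no real obstacle here; the only points needing care are the sign conventions in the substitution and checking that $\log(1-u)$ is well defined, which holds because $1-u = \tilde s/s \ge 1/2 > 0$.
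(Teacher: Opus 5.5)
Your proposal is correct and follows the paper's proof essentially line for line. The paper also sets $u := (s-\tilde s)/s$, rewrites $D_\varphi(s,\tilde s) = -s\log(1-u) - su$, divides by $s>0$, and reduces the claim to \Cref{lem:elementary} on the range $|u|\le 1/2$.
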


\begin{proof}
Let $u := (s - \tilde s)/s$. The hypothesis $\tilde s \in [s/2, 3s/2]$
gives $|u| \le 1/2$. Since $\tilde s/s = 1 - u$,
\begin{equation*}
D_\varphi(s, \tilde s) = -s\log(1-u) - s u.
\end{equation*}
Dividing~\eqref{eq:bregman-cubic} by $s > 0$ and rearranging gives the
equivalent statement
\begin{equation*}
\frac{u^2}{2} + u + \log(1 - u) \;\le\; \frac{2|u|^3}{3},
\end{equation*}
which is Lemma~\ref{lem:elementary}.
\end{proof}

\begin{lemma}[Bounding Error by Score Entropy]\label{lemma:error_bound_LSE}
    Under a general transition rate $Q_t$ satisfying~\Cref{ass:rate}, we have 
    \begin{align*}
        &\int_0^T \sum_{\substack{\bfx\in\mathcal{X}}} \sum_{\substack{\bfy\in\mathcal{X} \\ \bfy\neq \bfx}} p_t(\bfx) \left(\phi(t, \bfy)-\phi(t, \bfx) \right) Q_t(\bfy, \bfx) \left( s_t(\bfx)_\bfy - \tilde{s}_t(\bfx)_\bfy \right)  dt \\
        &\quad \leq 2 \| \psi\|_\infty  \left( \int_0^T \sum_{\substack{\bfy \in\mathcal{X}}} \sum_{\substack{\bfx \in\mathcal{X} \\ \bfx \neq \bfy}}  Q_t(\bfy, \bfx) p_t(\bfy)    dt \right)^{\frac{1}{2}} \left( 2\,\LSE(s, \tilde{s}) + \tfrac{4}{3}\,\LThree(s, \tilde{s}) \right)^{\frac{1}{2}}.
    \end{align*}
\end{lemma}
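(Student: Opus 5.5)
First bound the test-function increment by \Cref{lemma:phi_bound}: $|\phi(t,\bfy)-\phi(t,\bfx)|\le 2\|\phi_t\|_\infty\le 2\|\psi\|_\infty$. This reduces the left-hand side to
\[
2\|\psi\|_\infty\int_0^T\sum_{\bfx}\sum_{\bfy\neq\bfx} p_t(\bfx)\,Q_t(\bfy,\bfx)\,|s_t(\bfx)_\bfy-\tilde s_t(\bfx)_\bfy|\,dt .
\]
Terms with $p_t(\bfx)Q_t(\bfy,\bfx)=0$ vanish, so we only sum over the support of $p_tQ_t$. On this support the hypothesis $\tilde s\in[s/2,3s/2]$ applies. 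If $s_t(\bfx)_\bfy=0$ there, the interval forces $\tilde s=s$ and the term vanishes, so we may assume $s>0$.

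The key step is \emph{score–marginal cancellation}. Write each summand as
\[
p_t(\bfx)\,Q_t(\bfy,\bfx)\,|s-\tilde s| = \sqrt{p_t(\bfx)\,Q_t(\bfy,\bfx)\,s}\;\cdot\;\sqrt{p_t(\bfx)\,Q_t(\bfy,\bfx)\,\frac{(s-\tilde s)^2}{s}},
\]
where $s=s_t(\bfx)_\bfy$ and $\tilde s=\tilde s_t(\bfx)_\bfy$. Apply Cauchy–Schwarz jointly over $(t,\bfx,\bfy)$.

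For the first factor, use $p_t(\bfx)s_t(\bfx)_\bfy=p_t(\bfy)$. This turns it into $\int_0^T\sum_{\bfy}\sum_{\bfx\ne\bfy}Q_t(\bfy,\bfx)p_t(\bfy)\,dt$, after swapping the order of summation. This is exactly the exit-rate coefficient in the statement, and the cancellation is what routes the bound through the exit rate rather than the entrance rate.

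For the second factor, note $\frac{(s-\tilde s)^2}{s}=2\cdot\frac{s}{2}\left(1-\frac{\tilde s}{s}\right)^2$. Then \Cref{lem:bregman-cubic} gives
\[
\frac{(s-\tilde s)^2}{s}\le 2D_\varphi(s,\tilde s)+\frac{4|s-\tilde s|^3}{3s^2}.
\]
Weight by $p_t(\bfx)Q_t(\bfy,\bfx)$ and integrate. The $D_\varphi$ term gives $2\LSE$, since $w_t(\bfx,\bfy)=Q_t(\bfy,\bfx)$. The cubic term gives $\tfrac43\LThree$ in its general-rate form \eqref{eq:L3_general}, first expression. The second stated form of \eqref{eq:L3_general} again follows from $p_t(\bfx)=p_t(\bfy)/s$, which explains the extra power of $s$ there.

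Multiplying the two square roots by the prefactor $2\|\psi\|_\infty$ gives the claim. I expect the main obstacle to be bookkeeping rather than analysis. One must check that Cauchy–Schwarz is applied only over the support where $p_t(\bfx)Q_t(\bfy,\bfx)>0$, so that division by $s$ is legitimate and the Bregman lemma's hypothesis holds. One must also confirm that the weighting and normalization match the definitions of $\LSE$ and $\LThree$ exactly, so that the constants come out as $2$ and $\tfrac43$. Integrability in $t$ follows from \Cref{ass:rate} and boundedness of $p_t$ on a finite space.
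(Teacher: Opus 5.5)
Your proof is correct and follows the paper's argument. The ingredients are the same: the Cauchy--Schwarz split with weights $p_t(\bfx)Q_t(\bfy,\bfx)s_t(\bfx)_\bfy$ and $p_t(\bfx)Q_t(\bfy,\bfx)(s_t(\bfx)_\bfy-\tilde s_t(\bfx)_\bfy)^2/s_t(\bfx)_\bfy$, the score--marginal cancellation in the first factor, and \Cref{lem:bregman-cubic} (which the paper invokes through \Cref{lemma:LWRSM_SE}) in the second. The differences are cosmetic: the paper keeps $(\phi(t,\bfy)-\phi(t,\bfx))^2$ inside the first factor and bounds it by $4\|\phi_t\|_\infty^2$ only after the cancellation, whereas you pull out $2\|\psi\|_\infty$ first. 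You also explicitly handle the zero-score terms on the support, which the paper divides by without comment.
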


\begin{proof}
We have
    \begin{align*}
    &\int_0^T \sum_{\substack{\bfx\in\mathcal{X}}} \sum_{\substack{\bfy\in\mathcal{X} \\ \bfy\neq \bfx}} p_t(\bfx) \left(\phi(t, \bfy)-\phi(t, \bfx) \right) Q_t(\bfy, \bfx) \left( s_t(\bfx)_\bfy - \tilde{s}_t(\bfx)_\bfy \right)  dt \\
    \quad &= \int_0^T \sum_{\substack{\bfx\in\mathcal{X}}} \sum_{\substack{\bfy\in\mathcal{X} \\ \bfy\neq \bfx}}  p_t(\bfx) \left(\phi(t, \bfy)-\phi(t, \bfx) \right) Q_t(\bfy, \bfx) s_t(\bfx)_\bfy \left( 1 - \frac{\tilde{s}_t(\bfx)_\bfy}{s_t(\bfx)_\bfy} \right)  dt \\
    \quad & \leq \left( \int_0^T \sum_{\substack{\bfx\in\mathcal{X}}} \sum_{\substack{\bfy\in\mathcal{X} \\ \bfy\neq \bfx}} p_t(\bfx) Q_t(\bfy, \bfx) s_t(\bfx)_\bfy  \left(\phi(t, \bfy)-\phi(t, \bfx) \right)^2  dt \right)^{\frac{1}{2}} \\
    &\quad \left( \underbrace{\int_0^T \sum_{\substack{\bfx\in\mathcal{X}}} \sum_{\substack{\bfy\in\mathcal{X} \\ \bfy\neq \bfx}} p_t(\bfx) Q_t(\bfy, \bfx) s_t(\bfx)_\bfy \left( 1 - \frac{\tilde{s}_t(\bfx)_\bfy}{s_t(\bfx)_\bfy} \right) ^2  dt}_{=\LWRSM, \text{ defined in~\eqref{eq:LWRSM}}} \right)^{\frac{1}{2}}, \\
    \intertext{by Cauchy-Schwarz inequality,}
    \quad & \leq \left( \int_0^T \sum_{\substack{\bfx\in\mathcal{X}}} \sum_{\substack{\bfy\in\mathcal{X} \\ \bfy\neq \bfx}} \cancel{p_t(\bfx)} Q_t(\bfy, \bfx) \frac{p_t(\bfy)}{\cancel{p_t(\bfx)}}  \left(\phi(t, \bfy)-\phi(t, \bfx) \right)^2  dt \right)^{\frac{1}{2}} \left( 2\,\LSE(s, \tilde{s}) + \tfrac{4}{3}\,\LThree(s, \tilde{s}) \right)^{\frac{1}{2}}, \quad \text{by~\Cref{lemma:LWRSM_SE}.}\\
    \intertext{\textbf{And critically, we applied the score-cancellation technique that changes the probability mass term from $p_t(\bfx)$ to $p_t(\bfy)$. This removes the $S$-dependence of the resulting bound for masked rates. See~\Cref{ssec:bound_scale,ssec:bound_detail} for more details.}}
    \quad & = \left( \int_0^T \sum_{\substack{\bfy \in\mathcal{X}}} \sum_{\substack{\bfx \in\mathcal{X} \\ \bfx \neq \bfy}}  Q_t(\bfy, \bfx) p_t(\bfy)  \left(\phi(t, \bfy)-\phi(t, \bfx) \right)^2  dt \right)^{\frac{1}{2}} \left( 2\,\LSE(s, \tilde{s}) + \tfrac{4}{3}\,\LThree(s, \tilde{s}) \right)^{\frac{1}{2}}, \\
    \intertext{where in the first term, we used Fubini's Theorem,}
    \quad & \leq 2 \max_{0\leq t\leq T}\left(\| \phi_t\|_\infty \right)  \left( \int_0^T \sum_{\substack{\bfy \in\mathcal{X}}} \sum_{\substack{\bfx \in\mathcal{X} \\ \bfx \neq \bfy}}  Q_t(\bfy, \bfx) p_t(\bfy)    dt \right)^{\frac{1}{2}} \left( 2\,\LSE(s, \tilde{s}) + \tfrac{4}{3}\,\LThree(s, \tilde{s}) \right)^{\frac{1}{2}} \\
    \quad & = 2 \| \phi\|_\infty  \left( \int_0^T \sum_{\substack{\bfy \in\mathcal{X}}} \sum_{\substack{\bfx \in\mathcal{X} \\ \bfx \neq \bfy}}  Q_t(\bfy, \bfx) p_t(\bfy)    dt \right)^{\frac{1}{2}} \left( 2\,\LSE(s, \tilde{s}) + \tfrac{4}{3}\,\LThree(s, \tilde{s}) \right)^{\frac{1}{2}} \\
    \quad &\leq 2 \| \psi\|_\infty  \left( \int_0^T \sum_{\substack{\bfy \in\mathcal{X}}} \sum_{\substack{\bfx \in\mathcal{X} \\ \bfx \neq \bfy}}  Q_t(\bfy, \bfx) p_t(\bfy)    dt \right)^{\frac{1}{2}} \left( 2\,\LSE(s, \tilde{s}) + \tfrac{4}{3}\,\LThree(s, \tilde{s}) \right)^{\frac{1}{2}}, \quad \text{by~\Cref{lemma:phi_bound}.}
\end{align*}
\end{proof}

\begin{corollary}[Bounding Error by Score Entropy]\label{cor:error_bound_LSE}
Under a general transition rate $Q_t$ satisfying~\Cref{ass:rate}, let
\[
  \mu^\mathrm{out}_t(\bfx) \;:=\; -Q_t(\bfx, \bfx) 
   \;=\; \sum_{\bfy \neq \bfx} Q_t(\bfx, \bfy)
\]
denote the \textbf{exit rate} from state $\bfx$ at time $t$. Then
\begin{align*}
  & \int_0^T \sum_{\bfx\in\mathcal{X}} \sum_{\substack{\bfy\in\mathcal{X} \\ \bfy\neq \bfx}} 
    p_t(\bfx) \left(\phi(t, \bfy)-\phi(t, \bfx) \right) Q_t(\bfy, \bfx) 
    \left( s_t(\bfx)_\bfy - \tilde{s}_t(\bfx)_\bfy \right) dt \\
  &\quad\leq\; 2 \| \psi\|_\infty 
    \left( \int_0^T \mathbb{E}_{\bfx \sim p_t}\!\left[ \mu^\mathrm{out}_t(\bfx) \right] dt \right)^{\frac{1}{2}}
    \left( 2\,\LSE(s, \tilde{s}) + \tfrac{4}{3}\,\LThree(s, \tilde{s}) \right)^{\frac{1}{2}}.
\end{align*}
\end{corollary}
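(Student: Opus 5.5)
This follows directly from Lemma~\ref{lemma:error_bound_LSE}: the only remaining step is to identify the double sum in its first factor with $\int_0^T \mathbb{E}_{\bfx\sim p_t}[\mu^\mathrm{out}_t(\bfx)]\,dt$. Lemma~\ref{lemma:error_bound_LSE} already gives, for a general rate satisfying Assumption~\ref{ass:rate}, the bound
\begin{equation*}
2\|\psi\|_\infty \left( \int_0^T \sum_{\bfy\in\mathcal{X}} \sum_{\substack{\bfx\in\mathcal{X}\\ \bfx\neq\bfy}} Q_t(\bfy,\bfx)\, p_t(\bfy)\, dt \right)^{\frac12} \left( 2\,\LSE(s,\tilde s) + \tfrac43\,\LThree(s,\tilde s) \right)^{\frac12}.
\end{equation*}
This bound was obtained after the score--marginal cancellation $p_t(\bfx)s_t(\bfx)_\bfy = p_t(\bfy)$. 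The probability mass now sits on the source state $\bfy$ of the forward rate $Q_t(\bfy,\bfx)$, rather than on its target.

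First, for each fixed $t$ and $\bfy$, I would pull $p_t(\bfy)$ out of the inner sum. All terms are nonnegative and the state space is finite, so reordering is harmless. The inner sum $\sum_{\bfx\neq\bfy} Q_t(\bfy,\bfx)$ is exactly the row off-diagonal sum of $Q_t$ at $\bfy$. By the zero row-sum property, this equals $-Q_t(\bfy,\bfy) = \mu^\mathrm{out}_t(\bfy)$.

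Hence $\sum_{\bfy} p_t(\bfy)\,\mu^\mathrm{out}_t(\bfy) = \mathbb{E}_{\bfy\sim p_t}[\mu^\mathrm{out}_t(\bfy)]$. Renaming the dummy variable $\bfy\to\bfx$ gives the stated form. Assumption~\ref{ass:rate} guarantees that $t\mapsto \mathbb{E}_{p_t}[\mu^\mathrm{out}_t]$ is integrable on $[0,T]$, so the time integral is finite. This uses that $p_t\in[0,1]$, which follows because $p_t$ solves the KFE~\eqref{eq:KFE_ntoken} and stays a probability vector. Substituting this identity into the bound of Lemma~\ref{lemma:error_bound_LSE} yields the corollary.

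There is no real obstacle here; the substantive work (Cauchy--Schwarz, Lemma~\ref{lemma:LWRSM_SE}, and the sup-norm bound of Lemma~\ref{lemma:phi_bound}) is already done in Lemma~\ref{lemma:error_bound_LSE}. The only point requiring care is the orientation of indices. The weight is $Q_t(\bfy,\bfx)$ with $p_t(\bfy)$, so $\bfy$ is the state being exited. This gives the exit rate and not the entrance rate, and that is exactly what makes the masked-case prefactor $S$-free.
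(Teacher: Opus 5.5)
Your proposal is correct and matches the paper's proof. You invoke Lemma~\ref{lemma:error_bound_LSE}, then reorder the sums to rewrite $\sum_{\bfy} p_t(\bfy)\sum_{\bfx\neq\bfy}Q_t(\bfy,\bfx)$ as $\mathbb{E}_{\bfx\sim p_t}[\mu^\mathrm{out}_t(\bfx)]$; your extra remark on integrability via Assumption~\ref{ass:rate} is a harmless addition that the paper omits.
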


\begin{proof}
By Lemma~\ref{lemma:error_bound_LSE}, we have
\begin{align}
\begin{split}\label{eq:bound_error_LSE_quote}
        &\int_0^T \sum_{\substack{\bfx\in\mathcal{X}}} \sum_{\substack{\bfy\in\mathcal{X} \\ \bfy\neq \bfx}} p_t(\bfx) \left(\phi(t, \bfy)-\phi(t, \bfx) \right) Q_t(\bfy, \bfx) \left( s_t(\bfx)_\bfy - \tilde{s}_t(\bfx)_\bfy \right)  dt \\
        &\quad \leq 2 \| \psi\|_\infty  \left( \int_0^T \sum_{\substack{\bfy \in\mathcal{X}}} \sum_{\substack{\bfx \in\mathcal{X} \\ \bfx \neq \bfy}}  Q_t(\bfy, \bfx) p_t(\bfy)    dt \right)^{\frac{1}{2}} \left( 2\,\LSE(s, \tilde{s}) + \tfrac{4}{3}\,\LThree(s, \tilde{s}) \right)^{\frac{1}{2}}.
    \end{split}
    \end{align}

\smallskip
We first bound the integrand on the right-hand-side. Fubini's theorem gives
\[
  \sum_{\bfx} \sum_{\bfy \neq \bfx} Q_t(\bfy, \bfx)\,p_t(\bfy)
  \;=\; \sum_{\bfy} p_t(\bfy) \sum_{\bfx \neq \bfy} Q_t(\bfy, \bfx)
  \;=\; \sum_{\bfy} p_t(\bfy)\,\mu^\mathrm{out}_t(\bfy)
  \;=\; \mathbb{E}_{\bfx \sim p_t}\!\left[ \mu^\mathrm{out}_t(\bfx) \right].
\]
Thus, we obtain
\begin{equation}\label{eq:transition_out}
  \left( \int_0^T \sum_{\substack{\bfy \in\mathcal{X}}} \sum_{\substack{\bfx \in\mathcal{X} \\ \bfx \neq \bfy}}  Q_t(\bfy, \bfx) p_t(\bfy)    dt \right)^{\frac{1}{2}} \;=\;
      \left( \int_0^T \mathbb{E}_{\bfx \sim p_t}\!\left[ \mu^\mathrm{out}_t(\bfx) \right] dt \right)^{\frac{1}{2}}.
\end{equation}

Combining~\eqref{eq:bound_error_LSE_quote} and~\eqref{eq:transition_out} yields the claim.
\end{proof}
\begin{remark}[Interpretation of the prefactor]\label{rmk:exit_rate}
The quantity 
\[
  \mathcal{N}_T \;:=\; \int_0^T \mathbb{E}_{\bfx \sim p_t}\!\left[ \mu^\mathrm{out}_t(\bfx) \right] dt
\]
appearing in the bound is the \emph{expected total number of jumps} 
the forward CTMC makes on $[0, T]$ when initialized at 
$\bfX_0 \sim p_{\rm data}$. Indeed, the jump-counting process 
$J_t := \#\{\text{jumps on }[0, t]\}$ has compensator 
$A_t = \int_0^t \mu^\mathrm{out}_s(\bfX_s)\,ds$, so by the Doob--Meyer 
decomposition $\mathbb{E}[J_T] = \mathbb{E}[A_T] = \mathcal{N}_T$. 
The bound thus takes the form 
$$|\,\textit{approximation error}\,| \le 2 \|\psi\|_\infty \sqrt{\mathcal{N}_T}\,\sqrt{2\LSE + (4/3)\LThree}\, ,$$ 
i.e.\ the score-matching loss is amplified by the square root of 
the expected dynamical activity of the forward process. 
\end{remark}

\begin{lemma}[Bounding Error by Score Matching Loss]\label{lemma:error_bound_LWSM}
    Under a general transition rate $Q_t$ satisfying~\Cref{ass:rate}, we have
    \begin{align*}
        & \int_0^T \sum_{\substack{\bfx\in\mathcal{X}}} \sum_{\substack{\bfy\in\mathcal{X} \\ \bfy\neq \bfx}}  p_t(\bfx) \left(\phi(t, \bfy)-\phi(t, \bfx) \right) Q_t(\bfy, \bfx) \left( s_t(\bfx)_\bfy - \tilde{s}_t(\bfx)_\bfy \right)  dt \\ &\quad \leq 2 \| \psi \|_\infty \left(\int_0^T \sum_{\substack{\bfx\in\mathcal{X}}} \sum_{\substack{\bfy\in\mathcal{X} \\\bfy\neq \bfx}} p_t(\bfx)  Q_t(\bfy, \bfx)  dt\right)^{\frac{1}{2}} \cdot \sqrt{2} \sqrt{\LWSM(s, \tilde{s})}
    \end{align*}
\end{lemma}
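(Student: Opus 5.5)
The argument mirrors \Cref{lemma:error_bound_LSE}, except that here the Cauchy--Schwarz split keeps the probability mass at $p_t(\bfx)$. No score--marginal cancellation is used. The weight is split so that the score-error factor matches the integrand of $\LWSM$ exactly.

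\textbf{Step 1 (Cauchy--Schwarz).} I will view the integrand as a product in $L^2$ with respect to the nonnegative measure $p_t(\bfx)Q_t(\bfy,\bfx)\,dt$ on triples $(t,\bfx,\bfy)$ with $\bfy\neq\bfx$. The two factors are $\phi(t,\bfy)-\phi(t,\bfx)$ and $s_t(\bfx)_\bfy-\tilde s_t(\bfx)_\bfy$. Cauchy--Schwarz bounds the left-hand side by
\[
\Big(\int_0^T\!\sum_{\bfx}\sum_{\bfy\neq\bfx} p_t(\bfx)Q_t(\bfy,\bfx)\big(\phi(t,\bfy)-\phi(t,\bfx)\big)^2dt\Big)^{1/2}
\Big(\int_0^T\!\sum_{\bfx}\sum_{\bfy\neq\bfx} p_t(\bfx)Q_t(\bfy,\bfx)\big(s_t(\bfx)_\bfy-\tilde s_t(\bfx)_\bfy\big)^2dt\Big)^{1/2}.
\]
With the weight $w_t(\bfx,\bfy)=Q_t(\bfy,\bfx)$, the second factor equals $(2\LWSM(s,\tilde s))^{1/2}=\sqrt2\sqrt{\LWSM(s,\tilde s)}$ by definition \eqref{eq:LWSM}.

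\textbf{Step 2 (sup-norm control).} In the first factor I will use $|\phi(t,\bfy)-\phi(t,\bfx)|\le 2\|\phi_t\|_\infty$. By \Cref{lemma:phi_bound}, $\|\phi_t\|_\infty\le\|\psi\|_\infty$ for every $t\in[0,T]$, so $(\phi(t,\bfy)-\phi(t,\bfx))^2\le 4\|\psi\|_\infty^2$. Pulling out the resulting factor $2\|\psi\|_\infty$ leaves $\big(\int_0^T\sum_\bfx\sum_{\bfy\neq\bfx}p_t(\bfx)Q_t(\bfy,\bfx)\,dt\big)^{1/2}$, which is exactly the claimed prefactor.

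\textbf{Main point of care.} The steps themselves are routine, but the integrals must be well defined. Nonnegativity of the measure $p_t(\bfx)Q_t(\bfy,\bfx)$ holds because off-diagonal rates are nonnegative. Finiteness of $\int_0^T\sum p_tQ_t\,dt$ follows from \Cref{ass:rate} together with $p_t\le1$ on the finite state space $\mathcal X$. One also needs $\LWSM<\infty$; if it is infinite, the bound is trivial.

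Note that, unlike the $\LSE$ route, no lower bound on $\tilde s/s$ is needed. The price is that the prefactor involves the entrance rate $\mu^{\rm in}_t$ rather than the exit rate, as discussed in \Cref{rmk:uniform_vs_masked}.
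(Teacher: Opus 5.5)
Your proposal is correct and follows the paper's proof essentially step for step. The paper also applies Cauchy--Schwarz against the weight $p_t(\bfx)Q_t(\bfy,\bfx)$, identifies the score factor with $2\LWSM$, and uses $|\phi(t,\bfy)-\phi(t,\bfx)|\le 2\|\phi_t\|_\infty\le 2\|\psi\|_\infty$ via \Cref{lemma:phi_bound}; your notes on well-definedness and on not needing the ratio condition on $\tilde s/s$ are accurate additions.
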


\begin{proof}
    \begin{align*}
    &\int_0^T \sum_{\substack{\bfx\in\mathcal{X}}} \sum_{\substack{\bfy\in\mathcal{X} \\ \bfy\neq \bfx}}  p_t(\bfx) \left(\phi(t, \bfy)-\phi(t, \bfx) \right) Q_t(\bfy, \bfx) \left( s_t(\bfx)_\bfy - \tilde{s}_t(\bfx)_\bfy \right)  dt \\
    \quad &\leq \left(\int_0^T \sum_{\substack{\bfx\in\mathcal{X}}} \sum_{\substack{\bfy\in\mathcal{X} \\ \bfy\neq \bfx}} p_t(\bfx)  Q_t(\bfy, \bfx) \left(\phi(t, \bfy)-\phi(t, \bfx) \right)^2 dt\right)^{\frac{1}{2}} \\
    &\quad \left( 2 \cdot \underbrace{\int_0^T \half \sum_{\substack{\bfx\in\mathcal{X}}} \sum_{\substack{\bfy\in\mathcal{X} \\ \bfy\neq \bfx}} p_t(\bfx)  Q_t(\bfy, \bfx) \left( s_t(\bfx)_\bfy - \tilde{s}_t(\bfx)_\bfy \right)^2 dt}_{=\LWSM(s, \tilde{s}) \; \text{defined in~\eqref{eq:LWSM}}} \right)^{\frac{1}{2}}, \\
    \intertext{by Cauchy-Schwarz inequality,}
    \quad & \leq 2 \max_{0\leq t\leq T}\left(\| \phi_t\|_\infty \right) \left(\int_0^T \sum_{\substack{\bfx\in\mathcal{X}}} \sum_{\substack{\bfy\in\mathcal{X} \\ \bfy\neq \bfx}} p_t(\bfx)  Q_t(\bfy, \bfx)  dt\right)^{\frac{1}{2}} \cdot \sqrt{2} \sqrt{\LWSM(s, \tilde{s})} \\
    \quad & \leq 2 \| \psi \|_\infty \left(\int_0^T \sum_{\substack{\bfx\in\mathcal{X}}} \sum_{\substack{\bfy\in\mathcal{X} \\\bfy\neq \bfx}} p_t(\bfx)  Q_t(\bfy, \bfx)  dt\right)^{\frac{1}{2}} \cdot \sqrt{2} \sqrt{\LWSM(s, \tilde{s})}, \quad \text{by Lemma~\ref{lemma:phi_bound}}.
\end{align*}
\end{proof}

\begin{corollary}[Bounding Error by Score Matching Loss]\label{cor:error_bound_LWSM}
Under a general transition rate $Q_t$ satisfying~\Cref{ass:rate}, let
\[
  \mu^\mathrm{in}_t(\bfx) \;:=\; \sum_{\bfy \neq \bfx} Q_t(\bfy, \bfx)
\]
denote the \textbf{entrance rate} at state $\bfx$ at time $t$ (the total instantaneous 
rate at which other states feed into $\bfx$). Then
\begin{align*}
  & \int_0^T \sum_{\bfx\in\mathcal{X}} \sum_{\substack{\bfy\in\mathcal{X} \\ \bfy\neq \bfx}} 
    p_t(\bfx) \left(\phi(t, \bfy)-\phi(t, \bfx) \right) Q_t(\bfy, \bfx) 
    \left( s_t(\bfx)_\bfy - \tilde{s}_t(\bfx)_\bfy \right) dt  \\
  &\quad\leq\; 2\sqrt{2}\, \| \psi\|_\infty 
    \left( \int_0^T \mathbb{E}_{\bfx \sim p_t}\!\left[ \mu^\mathrm{in}_t(\bfx) \right] dt \right)^{\frac{1}{2}}
    \sqrt{\LWSM(s, \tilde{s})}.
\end{align*}
\end{corollary}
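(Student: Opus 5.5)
This corollary is a direct repackaging of \Cref{lemma:error_bound_LWSM}, in the same way that \Cref{cor:error_bound_LSE} repackages \Cref{lemma:error_bound_LSE}. I would start from the bound in \Cref{lemma:error_bound_LWSM}, which already controls the left-hand side by
\[
2\|\psi\|_\infty \Big(\int_0^T \sum_{\bfx}\sum_{\bfy\neq\bfx} p_t(\bfx)\,Q_t(\bfy,\bfx)\,dt\Big)^{1/2}\sqrt{2}\,\sqrt{\LWSM(s,\tilde s)} .
\]
That lemma is the result of Cauchy--Schwarz with weights $p_t(\bfx)Q_t(\bfy,\bfx)$ followed by the sup-norm bound on $\phi_t$ from \Cref{lemma:phi_bound}. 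Since the factor $2\sqrt2\,\|\psi\|_\infty\sqrt{\LWSM}$ already matches the target, the only remaining task is to identify the prefactor integrand.

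For each fixed $t$, I would pull $p_t(\bfx)$ out of the inner sum. This is legitimate because the state space is finite and all terms are nonnegative, since $p_t\ge0$ and the off-diagonal entries satisfy $Q_t(\bfy,\bfx)\ge0$. It gives
\[
\sum_{\bfx}p_t(\bfx)\sum_{\bfy\neq\bfx}Q_t(\bfy,\bfx)=\sum_{\bfx}p_t(\bfx)\,\mu^{\rm in}_t(\bfx)=\mathbb{E}_{\bfx\sim p_t}[\mu^{\rm in}_t(\bfx)] .
\]
Integrating over $[0,T]$ is well defined because of \Cref{ass:rate}. Taking square roots and substituting back into \Cref{lemma:error_bound_LWSM} gives the claimed inequality.

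I do not expect a genuine obstacle here. The one point that needs care is bookkeeping. Unlike the score-entropy route, no score--marginal cancellation is used, so the mass weight stays at $p_t(\bfx)$ and the column sum $Q_t(\cdot,\bfx)$ is what appears. This is why the prefactor is the entrance rate rather than the exit rate, and hence why it scales with $S$ under masked dynamics, as discussed in \Cref{rmk:uniform_vs_masked}.
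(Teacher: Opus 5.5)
Your proposal is correct and matches the paper's proof: invoke \Cref{lemma:error_bound_LWSM}, then rewrite $\sum_{\bfx}\sum_{\bfy\neq\bfx}p_t(\bfx)Q_t(\bfy,\bfx)$ as $\mathbb{E}_{\bfx\sim p_t}[\mu^{\rm in}_t(\bfx)]$ by factoring out $p_t(\bfx)$ and integrating in time. Your observation also agrees with the paper's discussion: because no score--marginal cancellation is used, the entrance rate rather than the exit rate appears in the prefactor.
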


\begin{proof}
By Lemma~\ref{lemma:error_bound_LWSM}, we have
\begin{align}\label{eq:error_bound_LWSM_quote}
\begin{split}
        & \int_0^T \sum_{\substack{\bfx\in\mathcal{X}}} \sum_{\substack{\bfy\in\mathcal{X} \\ \bfy\neq \bfx}}  p_t(\bfx) \left(\phi(t, \bfy)-\phi(t, \bfx) \right) Q_t(\bfy, \bfx) \left( s_t(\bfx)_\bfy - \tilde{s}_t(\bfx)_\bfy \right)  dt \\ &\quad \leq 2 \| \psi \|_\infty \left(\int_0^T \sum_{\substack{\bfx\in\mathcal{X}}} \sum_{\substack{\bfy\in\mathcal{X} \\\bfy\neq \bfx}} p_t(\bfx)  Q_t(\bfy, \bfx)  dt\right)^{\frac{1}{2}} \cdot \sqrt{2} \sqrt{\LWSM(s, \tilde{s})}
    \end{split}
    \end{align}

\smallskip
On the other hand, we have
\[
  \sum_{\bfx} \sum_{\bfy \neq \bfx} p_t(\bfx)\,Q_t(\bfy, \bfx)
  \;=\; \sum_{\bfx} p_t(\bfx) \sum_{\bfy \neq \bfx} Q_t(\bfy, \bfx)
  \;=\; \sum_{\bfx} p_t(\bfx)\,\mu^\mathrm{in}_t(\bfx)
  \;=\; \mathbb{E}_{\bfx \sim p_t}\!\left[ \mu^\mathrm{in}_t(\bfx) \right].
\]
Thus, we obtain
\begin{equation}\label{eq:transition_in}
  \int_0^T \sum_{\substack{\bfx\in\mathcal{X}}} \sum_{\substack{\bfy\in\mathcal{X} \\\bfy\neq \bfx}} p_t(\bfx)  Q_t(\bfy, \bfx)  dt  \;=\; 
      \int_0^T \mathbb{E}_{\bfx \sim p_t}\!\left[ \mu^\mathrm{in}_t(\bfx) \right] dt .
\end{equation}
\smallskip

Combining~\eqref{eq:error_bound_LWSM_quote} and~\eqref{eq:transition_in} yields the claim.
\end{proof}

\begin{remark}[Exit rate vs.\ entrance rate under uniform and masked]
The exit rate $\mu^\mathrm{out}_t$ and entrance rate $\mu^\mathrm{in}_t$ behave very differently 
under the two transition rates in this work.

\noindent Under the uniform rate~\eqref{eq:Q_absorb_uni}, $Q_t^{\rm tok}$ is symmetric and
\[
  \mu_t^{\rm out, tok}(x) \;=\; \mu_t^{\rm in, tok}(x) \;=\; \beta(t)\bigl(1 - \tfrac{1}{S}\bigr)
  \quad \text{for every } x \in [S],
\]
so the two lemmas give the same scaling.

\smallskip
\noindent Under the masked rate~\eqref{eq:Q_absorb_uni}, $Q_t^{\rm tok}$ is 
asymmetric and the two rates differ:
\[
  \mathbb{E}_{p_t^{\rm tok}}\!\bigl[\mu_t^{\rm out, tok}(x)\bigr] 
  \;=\; \beta(t)\bigl(1 - p_t^{\rm tok}(\mask)\bigr)
\]
is $S$-independent, whereas
\[
  \mathbb{E}_{p_t^{\rm tok}}\!\bigl[\mu_t^{\rm in, tok}(x)\bigr] 
  \;=\; (S-1)\,\beta(t)\,p_t^{\rm tok}(\mask)
\]
carries a factor of $S-1$ — the vocabulary size.

\smallskip
\noindent This is why our masked-case analysis (\Cref{lem:IPM}, Case~1) 
uses~\Cref{lemma:error_bound_LSE} to expose the exit rate, while the 
uniform case (Case~2) can use either lemma.
\end{remark}

\begin{lemma}[Bounding $\LWRSM$ by $\LSE$]\label{lemma:LWRSM_SE}
    Let
    \begin{equation}\label{eq:LWRSM}
        \LWRSM(s, \tilde{s}) = \int_0^T \sum_{\substack{\bfx\in\mathcal{X}}} \sum_{\substack{\bfy\in\mathcal{X} \\ \bfy\neq \bfx}} p_t(\bfx) Q_t(\bfy, \bfx) s_t(\bfx)_\bfy \left( 1 - \frac{\tilde{s}_t(\bfx)_\bfy}{s_t(\bfx)_\bfy} \right) ^2  dt
    \end{equation}
    be the weighted relative score-matching loss (WRSM), and recall the score entropy (SE) loss is given by
    \begin{align*}
        \LSE(s, \tilde{s})&=\int_0^T \sum_{\substack{\bfx\in\mathcal{X}}} \sum_{\substack{\bfy\in\mathcal{X} \\ \bfy\neq \bfx}} p_t(\bfx)  Q_t(\bfy, \bfx) \left[ s_t(\bfx)_\bfy \log\left(\frac{s_t(\bfx)_\bfy}{\tilde{s}_t(\bfx)_\bfy}\right) + \tilde{s}_t(\bfx)_\bfy - s_t(\bfx)_\bfy \right] dt\\
        &=\int_0^T \sum_{\substack{\bfx\in\mathcal{X}}} \sum_{\substack{\bfy\in\mathcal{X} \\ \bfy\neq \bfx}} p_t(\bfx)  Q_t(\bfy, \bfx) D_\varphi(s_t(\bfx)_\bfy), \tilde{s}_t(\bfx)_\bfy) dt.
    \end{align*}
    Suppose that the score estimator
$\tilde s_t(\bfx)_\bfy \in [s_t(\bfx)_\bfy/2,\,3 s_t(\bfx)_\bfy/2]$
for all $(t, \bfx, \bfy)$ in the support of $p_t(\bfx) Q_t(\bfy, \bfx)$, then under the masked transition rate (\eqref{eq:Q_factor} and \eqref{eq:Q_absorb_uni}),
\begin{equation}
\LWRSM(s, \tilde{s}) \;\le\; 2\,\LSE(s, \tilde{s}) + \tfrac{4}{3}\,\LThree(s, \tilde{s}),
\label{eq:LWRSM-LSE}
\end{equation}
where
\myfcases
\end{lemma}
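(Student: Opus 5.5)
The plan is to prove the inequality \emph{pointwise} in $(t,\bfx,\bfy)$ using Lemma~\ref{lem:bregman-cubic}, integrate against the common nonnegative weight $p_t(\bfx)Q_t(\bfy,\bfx)$, and then rewrite the resulting cubic remainder into each of the three forms of $\LThree$ in~\eqref{eq:LThree}.

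\textbf{Step 1: pointwise inequality.} Fix $(t,\bfx,\bfy)$ with $\bfy\neq\bfx$ in the support of $p_t(\bfx)Q_t(\bfy,\bfx)$, and write $s=s_t(\bfx)_\bfy$, $\tilde s=\tilde s_t(\bfx)_\bfy$.
\begin{itemize}
\item If $s>0$, the hypothesis gives $\tilde s\in[s/2,3s/2]$. Lemma~\ref{lem:bregman-cubic}, multiplied by $2$, then yields
\[
s\Bigl(1-\tfrac{\tilde s}{s}\Bigr)^2\le 2D_\varphi(s,\tilde s)+\tfrac{4}{3}\,\tfrac{|s-\tilde s|^3}{s^2}.
\]
\item If $s=0$, the hypothesis forces $\tilde s=0$. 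All three terms then vanish under the natural convention $0\cdot(\cdot)=0$, which I would state explicitly.
\end{itemize}
Off the support the weight is zero, so those terms contribute nothing.

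\textbf{Step 2: integrate.} Multiply the pointwise inequality by $p_t(\bfx)Q_t(\bfy,\bfx)\ge 0$, sum over $\bfx$ and $\bfy\neq\bfx$, and integrate over $[0,T]$. Assumption~\ref{ass:rate} together with finiteness of $\mathcal X$ justifies the interchange. By the definitions~\eqref{eq:LWRSM} and~\eqref{eq:LSE}, this gives $\LWRSM\le 2\LSE+\tfrac43 R$, where
\[
R=\int_0^T\mathbb E_{\bfx\sim p_t}\Bigl[\sum_{\bfy\neq\bfx}Q_t(\bfy,\bfx)\,\tfrac{|s-\tilde s|^3}{s^2}\Bigr]dt .
\]
This is exactly the first expression in~\eqref{eq:L3_general}.

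\textbf{Step 3: the second general form.} Apply score–marginal cancellation. Since $p_t(\bfx)=p_t(\bfy)/s_t(\bfx)_\bfy$ on the support,
\[
p_t(\bfx)\,s^{-2}=p_t(\bfy)\,s^{-3}.
\]
Swapping the order of summation by Fubini then gives the $\mathbb E_{\bfy\sim p_t}$ form of~\eqref{eq:L3_general}.

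\textbf{Step 4: uniform case.} Under~\eqref{eq:Q_factor} with the uniform rate, $Q_t(\bfy,\bfx)=\beta(t)/S$ exactly when $d_H(\bfx,\bfy)=1$, and is zero otherwise. Substituting into the first general form gives~\eqref{eq:L3_uniform} directly.

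\textbf{Step 5: masked case.} Start from the $\bfy$-indexed form.
\begin{itemize}
\item $Q_t(\bfy,\bfx)\neq 0$ with $\bfx\neq\bfy$ holds iff $\bfx$ is obtained from $\bfy$ by masking one unmasked coordinate, i.e.\ $\bfx\in\Nmasked(\bfy)$.
\item In that case $Q_t(\bfy,\bfx)=Q^{\rm tok}_t(y^i,\mask)=\beta(t)$.
\item For $\bfy=\bfm$ the successor set is empty, so that term drops.
\end{itemize}
The sum is therefore $\sum_{\bfy\neq\bfm}\sum_{\bfx\in\Nmasked(\bfy)}p_t(\bfy)\,\beta(t)\,|s-\tilde s|^3/s^3$. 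Applying Lemma~\ref{lemma:relative_matching} with $g_t=\beta(t)|s-\tilde s|^3/s^3\ge0$ converts this into $\beta(t)(1-p_t(\bfm))\,\mathbb E_{\bfy\sim p_t(\cdot\mid\bfy\neq\bfm)}[\cdots]$, which is~\eqref{eq:L3_masked}.

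\textbf{Main obstacle.} There is no real analytic difficulty, since the pointwise estimate is already Lemma~\ref{lem:bregman-cubic}. The delicate parts are bookkeeping:
\begin{itemize}
\item handling states where $s_t(\bfx)_\bfy=0$ or $p_t(\bfx)=0$, so that the cancellation in Step 3 is legitimate;
\item correctly identifying the support of $Q_t(\bfy,\bfx)$ as $\Nmasked(\bfy)$ in the masked case, noting that the direction is from $\bfy$ to $\bfx$.
\end{itemize}
I would address both by restricting every sum to the support set from the outset.
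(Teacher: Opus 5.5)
Your proposal is correct and takes essentially the same route as the paper. The paper also applies Lemma~\ref{lem:bregman-cubic} pointwise, integrates against $p_t(\bfx)Q_t(\bfy,\bfx)$, and then obtains the three forms of $\LThree$ in Lemma~\ref{lemma:L3_formulate}, using score--marginal cancellation, successor-set re-indexing and Lemma~\ref{lemma:relative_matching}. Your explicit handling of the boundary cases $s_t(\bfx)_\bfy=0$ and $p_t(\bfx)=0$, where the paper's manipulations involve $0/0$, is added care rather than a different argument.
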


\begin{proof}
    \begin{align*}
        \LWRSM(s, \tilde{s}) &= \int_0^T \sum_{\substack{\bfx\in\mathcal{X}}} \sum_{\substack{\bfy\in\mathcal{X} \\ \bfy\neq \bfx}} p_t(\bfx) Q_t(\bfy, \bfx) s_t(\bfx)_\bfy \left( 1 - \frac{\tilde{s}_t(\bfx)_\bfy}{s_t(\bfx)_\bfy} \right) ^2  dt\\
        & \leq \int_0^T \sum_{\substack{\bfx\in\mathcal{X}}} \sum_{\substack{\bfy\in\mathcal{X} \\ \bfy\neq \bfx}} p_t(\bfx) Q_t(\bfy, \bfx) \cdot 2   \cdot \left( D_\varphi(s_t(\bfx)_\bfy, \tilde{s}_t(\bfx)_\bfy) + \frac{2\,|s_t(\bfx)_\bfy - \tilde{s}_t(\bfx)_\bfy|^3}{3 (s_t(\bfx)_\bfy)^2} \right) dt, \quad \text{by \Cref{lem:bregman-cubic},}\\
        & = 2 \LSE(s, \tilde{s}) +  \int_0^T \sum_{\substack{\bfx\in\mathcal{X}}} \sum_{\substack{\bfy\in\mathcal{X} \\ \bfy\neq \bfx}} p_t(\bfx) Q_t(\bfy, \bfx) \left( \frac{4\,|s_t(\bfx)_\bfy - \tilde{s}_t(\bfx)_\bfy|^3}{3 (s_t(\bfx)_\bfy)^2} \right) dt \\
         &= 2 \LSE(s, \tilde{s}) +  \frac{4}{3} {\LThree(s, \tilde{s})},
    \end{align*}
    where we used the definition of $\LThree$, and its formulations under different rates are given in~\Cref{lemma:L3_formulate}.
\end{proof}

\begin{lemma}[Formulation of $\LThree$ under masked and uniform rates]\label{lemma:L3_formulate}
    The cubic correction term
    \begin{equation*}
        \LThree(s, \tilde{s}) := \int_0^T \mathbb{E}_{\bfx \sim p_t} \left[ \sum_{\substack{\bfy\in\mathcal{X} \\ \bfy\neq \bfx}}  Q_t(\bfy, \bfx) \left( \frac{\,|s_t(\bfx)_\bfy - \tilde{s}_t(\bfx)_\bfy|^3}{(s_t(\bfx)_\bfy)^2} \right) \right]  dt
    \end{equation*}
    can be reformulated as: \\
    \textbf{Case 1 (Masked rate; $Q^{\mathrm{tok}}_t=Q_t^{\rm masked}$ \eqref{eq:Q_absorb_uni}):}
    \begin{equation}
        \LThree(s, \tilde{s}) = \int_0^T \beta(t) (1-p_t(\bfm)) \mathbb{E}_{\bfy \sim p_t(\bfy|\bfy \neq \bfm)} \left[ \sum_{\substack{\substack{\bfx \in \Nmasked(\bfy)}}} \frac{\,|s_t(\bfx)_\bfy - \tilde{s}_t(\bfx)_\bfy|^3}{ (s_t(\bfx)_\bfy)^3} \right] dt,
    \end{equation}
    \textbf{Case 2 (Uniform rate; $Q^{\mathrm{tok}}_t=Q_t^{\rm uniform}$ \eqref{eq:Q_absorb_uni}):} 
    \begin{equation}
        \LThree(s, \tilde{s}) = \int_0^T \mathbb{E}_{\bfx \sim p_t} \left[ \frac{\beta(t)}{S} \sum_{\substack{\bfy \in\mathcal{X} \\ d_H(\bfy,\bfx)=1}}   \left( \frac{\,|s_t(\bfx)_\bfy - \tilde{s}_t(\bfx)_\bfy|^3}{(s_t(\bfx)_\bfy)^2} \right) \right]  dt,
    \end{equation}
    \textbf{Case 3 (General rate):}
    \begin{equation}
        \LThree(s, \tilde{s}) = \int_0^T \mathbb{E}_{\bfy \sim p_t} \left[ \sum_{\substack{\bfx\in\mathcal{X} \\ \bfx\neq \bfy}}  Q_t(\bfy, \bfx)  \left( \frac{\,|s_t(\bfx)_\bfy - \tilde{s}_t(\bfx)_\bfy|^3}{ (s_t(\bfx)_\bfy)^3} \right) \right] dt.
    \end{equation}
\end{lemma}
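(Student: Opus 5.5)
We need to prove Lemma~\ref{lemma:L3_formulate}, which gives three reformulations of the cubic term. The plan is a direct computation from the defining expression; the only substantive tool is the score--marginal cancellation $p_t(\bfx)=p_t(\bfy)/s_t(\bfx)_\bfy$. For Case~3, expand $\mathbb{E}_{\bfx\sim p_t}$ as $\sum_\bfx p_t(\bfx)$ and substitute $p_t(\bfx)=p_t(\bfy)/s_t(\bfx)_\bfy$. This is valid whenever $p_t(\bfx)Q_t(\bfy,\bfx)>0$, which is the only region where terms contribute; there $s_t(\bfx)_\bfy>0$ because of the standing assumption that $\tilde s\in[s/2,3s/2]$ on the support. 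This substitution turns the denominator $(s_t(\bfx)_\bfy)^2$ into $(s_t(\bfx)_\bfy)^3$ and the weight $p_t(\bfx)$ into $p_t(\bfy)$. We then swap the finite double sum (Fubini) and rewrite $\sum_\bfy p_t(\bfy)[\cdot]$ as $\mathbb{E}_{\bfy\sim p_t}$, which gives the stated identity.

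Case~2 is immediate from the definition. Under \eqref{eq:Q_factor} with the uniform rate, $Q_t(\bfy,\bfx)=\beta(t)/S$ if $\mathrm{Ham}(\bfx,\bfy)=1$ and $0$ for other $\bfy\neq\bfx$. Substituting this and pulling $\beta(t)/S$ outside the inner sum yields the formula.

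For Case~1, start from the Case~3 form, which carries the $p_t(\bfy)$ weight. Under the masked rate, $Q_t(\bfy,\bfx)\neq0$ with $\bfx\neq\bfy$ exactly when $\bfx$ is obtained from $\bfy$ by masking one unmasked coordinate, i.e.\ $\bfx\in\Nmasked(\bfy)$. In that case $Q_t(\bfy,\bfx)=Q^{\rm tok}_t(y^i,\mask)=\beta(t)$. The all-mask state $\bfm$ has $\Nmasked(\bfm)=\emptyset$, so it can be dropped from the outer sum. The integrand therefore becomes $\beta(t)\sum_{\bfy\neq\bfm}\sum_{\bfx\in\Nmasked(\bfy)}p_t(\bfy)\,g_t(\bfx,\bfy)$ with $g_t=|s-\tilde s|^3/s^3\ge0$. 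Lemma~\ref{lemma:relative_matching} then converts this into $(1-p_t(\bfm))\,\mathbb{E}_{\bfy\sim p_t(\cdot\mid\bfy\neq\bfm)}[\cdot]$. The edge case $p_t(\bfm)=1$ needs separate handling: both sides vanish, so the conditional expectation can be interpreted arbitrarily.

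The main obstacle is justifying the cancellation step on the correct support. Terms with $p_t(\bfx)=0$ but $p_t(\bfy)>0$ would have $s_t(\bfx)_\bfy$ undefined and could in principle appear in the rewritten sum. To handle this, use the convention that the summand is taken only over the support of $p_tQ_t$, where the assumption guarantees $s>0$. Note also that if $p_t(\bfy)>0$ and $Q_t(\bfy,\bfx)>0$ with $t>0$, the forward dynamics force $p_t(\bfx)>0$, so the support sets on both sides agree. Apart from this support bookkeeping, every step is finite-sum algebra.
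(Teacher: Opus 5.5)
Your proposal is correct and is essentially the paper's proof. Case~3 uses the cancellation $p_t(\bfx)s_t(\bfx)_\bfy=p_t(\bfy)$ (turning $1/s^2$ into $1/s^3$) plus Fubini; Case~2 substitutes $Q_t(\bfy,\bfx)=\beta(t)/S$ on Hamming-distance-one pairs; and Case~1 specializes the Case~3 form to $\bfy\neq\bfm$, $\bfx\in\Nmasked(\bfy)$ with $Q_t(\bfy,\bfx)=\beta(t)$ and then applies Lemma~\ref{lemma:relative_matching}.

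Your support bookkeeping needs two small corrections, which the paper also leaves implicit. First, the hypothesis $\tilde s\in[s/2,3s/2]$ does not give $s_t(\bfx)_\bfy>0$: it allows $s=\tilde s=0$, which really occurs under the masked rate when $p_t(\bfx)>0$ but $p_t(\bfy)=0$. Second, $p_t(\bfy)Q_t(\bfy,\bfx)>0$ does not force $p_t(\bfx)>0$ for a general integrable $Q_t$. So the identities hold only under the convention that the sums run over pairs with $p_t(\bfx)\,p_t(\bfy)\,Q_t(\bfy,\bfx)>0$, with degenerate terms set to zero; the assumption itself does not exclude them.
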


\begin{proof}
        \textbf{Case 3 (General rate):} \\
    We first derive the equivalent formulation under a general rate.
    \begin{align*}
        & \int_0^T \mathbb{E}_{\bfx \sim p_t} \left[ \sum_{\substack{\bfy\in\mathcal{X} \\ \bfy\neq \bfx}}  Q_t(\bfy, \bfx) \left( \frac{\,|s_t(\bfx)_\bfy - \tilde{s}_t(\bfx)_\bfy|^3}{(s_t(\bfx)_\bfy)^2} \right) \right]  dt \\
        & = \int_0^T \sum_{\substack{\bfx\in\mathcal{X}}} \sum_{\substack{\bfy\in\mathcal{X} \\ \bfy\neq \bfx}} p_t(\bfx) Q_t(\bfy, \bfx) \left( \frac{\,|s_t(\bfx)_\bfy - \tilde{s}_t(\bfx)_\bfy|^3}{(s_t(\bfx)_\bfy)^2} \right) dt \\
         & = \int_0^T \sum_{\substack{\bfx\in\mathcal{X}}} \sum_{\substack{\bfy\in\mathcal{X} \\ \bfy\neq \bfx}} p_t(\bfx) Q_t(\bfy, \bfx) s_t(\bfx)_\bfy \left( \frac{\,|s_t(\bfx)_\bfy - \tilde{s}_t(\bfx)_\bfy|^3}{ (s_t(\bfx)_\bfy)^3} \right) dt \\
         & = \int_0^T \sum_{\substack{\bfx\in\mathcal{X}}} \sum_{\substack{\bfy\in\mathcal{X} \\ \bfy\neq \bfx}} \cancel{p_t(\bfx)} Q_t(\bfy, \bfx) \frac{p_t(\bfy)}{\cancel{p_t(\bfx)}} \left( \frac{\,|s_t(\bfx)_\bfy - \tilde{s}_t(\bfx)_\bfy|^3}{ (s_t(\bfx)_\bfy)^3} \right) dt \\
         & = \int_0^T \sum_{\substack{\bfy\in\mathcal{X}}} \sum_{\substack{\bfx\in\mathcal{X} \\ \bfx\neq \bfy}}  Q_t(\bfy, \bfx) {p_t(\bfy)} \left( \frac{\,|s_t(\bfx)_\bfy - \tilde{s}_t(\bfx)_\bfy|^3}{ (s_t(\bfx)_\bfy)^3} \right) dt, \quad \text{by Fubini's Theorem,}\\
         &= \int_0^T \mathbb{E}_{\bfy \sim p_t} \left[ \sum_{\substack{\bfx\in\mathcal{X} \\ \bfx\neq \bfy}}  Q_t(\bfy, \bfx)  \left( \frac{\,|s_t(\bfx)_\bfy - \tilde{s}_t(\bfx)_\bfy|^3}{ (s_t(\bfx)_\bfy)^3} \right) \right] dt.
    \end{align*}

    \textbf{Case 1 (Masked rate; $Q^{\mathrm{tok}}_t=Q_t^{\rm masked}$ \eqref{eq:Q_absorb_uni}):} \\
    We start with the equivalent formulation under a generate rate, which we have just derived,
    \begin{align*}
    & \int_0^T \mathbb{E}_{\bfy \sim p_t} \left[ \sum_{\substack{\bfx\in\mathcal{X} \\ \bfx\neq \bfy}}  Q_t(\bfy, \bfx)  \left( \frac{\,|s_t(\bfx)_\bfy - \tilde{s}_t(\bfx)_\bfy|^3}{ (s_t(\bfx)_\bfy)^3} \right) \right] dt \\
        &= \int_0^T \sum_{\substack{\bfy\in\mathcal{X}}} \sum_{\substack{\bfx\in\mathcal{X} \\ \bfx\neq \bfy}}  Q_t(\bfy, \bfx) {p_t(\bfy)} \left( \frac{\,|s_t(\bfx)_\bfy - \tilde{s}_t(\bfx)_\bfy|^3}{ (s_t(\bfx)_\bfy)^3} \right) dt \\
         & = \int_0^T \sum_{\substack{\bfy \in \mathcal{X} \\ \bfy \neq \bfm}} \sum_{\substack{\substack{\bfx \in \Nmasked(\bfy)}}}   Q^{\mathrm{tok}}_t(y^i, m) {p_t(\bfy)} \left( \frac{\,|s_t(\bfx)_\bfy - \tilde{s}_t(\bfx)_\bfy|^3}{ (s_t(\bfx)_\bfy)^3} \right) dt, \\
         \intertext{where we used the definition of $Q_t$~\eqref{eq:Q_factor} under masked transition~\eqref{eq:Q_absorb_uni}, $\bfm=[\mask,...,\mask]$ is the sequence of all mask tokens, 
    $$
    \Nmasked(\bfy)=\left\{ \bfx \in \mathcal{X} \, \middle| \, \exists i \in [d], \bfx^{\backslash i}=\bfy^{\backslash i} , x^i=\mask, y^i \neq \mask \right\}
    $$
    is the successor set of $\bfy$, and $i$ denotes the unique index where $\bfx$ and $\bfy$ differ,}
    & = \int_0^T \sum_{\substack{\bfy \in \mathcal{X} \\ \bfy \neq \bfm}} \sum_{\substack{\substack{\bfx \in \Nmasked(\bfy)}}}   \beta(t) {p_t(\bfy)} \left( \frac{\,|s_t(\bfx)_\bfy - \tilde{s}_t(\bfx)_\bfy|^3}{ (s_t(\bfx)_\bfy)^3} \right) dt \\
    &= \int_0^T \beta(t) (1-p_t(\bfm)) \mathbb{E}_{\bfy \sim p_t(\bfy|\bfy \neq \bfm)} \left[ \sum_{\substack{\substack{\bfx \in \Nmasked(\bfy)}}} \frac{\,|s_t(\bfx)_\bfy - \tilde{s}_t(\bfx)_\bfy|^3}{ (s_t(\bfx)_\bfy)^3} \right] dt.
    \end{align*}
    \textbf{Case 2 (Uniform rate; $Q^{\mathrm{tok}}_t=Q_t^{\rm uniform}$ \eqref{eq:Q_absorb_uni}):} \\
    In addition, we have
    \begin{align*}
        & \int_0^T \mathbb{E}_{\bfx \sim p_t} \left[ \sum_{\substack{\bfy\in\mathcal{X} \\ \bfy\neq \bfx}}  Q_t(\bfy, \bfx) \left( \frac{\,|s_t(\bfx)_\bfy - \tilde{s}_t(\bfx)_\bfy|^3}{(s_t(\bfx)_\bfy)^2} \right) \right]  dt \\
        & = \int_0^T \mathbb{E}_{\bfx \sim p_t} \left[ \frac{\beta(t)}{S} \sum_{\substack{\bfy \in\mathcal{X} \\ d_H(\bfy,\bfx)=1}}   \left( \frac{\,|s_t(\bfx)_\bfy - \tilde{s}_t(\bfx)_\bfy|^3}{(s_t(\bfx)_\bfy)^2} \right) \right]  dt, \quad \text{by the definition of the uniform rate~\eqref{eq:Q_factor} and~\eqref{eq:Q_absorb_uni},}  \\
    \end{align*}
\end{proof}

\begin{lemma}[Initialization Error Bound under Masked Diffusion]\label{lemma:prior_mismatch_masked}
    Under masked diffusion, the initialization error term is bounded by
    \begin{equation}
        \TV(p_T, \pbase) \leq d e^{-\int_0^T \beta(\tau) d\tau} = d e^{-\| \beta \|_1}.
    \end{equation}
\end{lemma}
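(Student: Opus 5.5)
The plan is to compute $p_T$ explicitly under the factorized masked dynamics. Then we bound $\TV(p_T,\pbase)$, where $\pbase=\delta_{\bfm}$ is the Dirac mass at the all-mask sequence. For a Dirac target the total variation has a closed form:
\[
\TV(p_T,\delta_{\bfm})=\tfrac12\Big(\sum_{\bfx\neq\bfm}p_T(\bfx)+1-p_T(\bfm)\Big)=1-p_T(\bfm)=\Pr(X_T\neq\bfm).
\]
So it suffices to upper bound the probability that at least one token is still unmasked at time $T$.

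Next we use the per-token factorization of the transition kernel $p_{t|s}=\prod_i p^i_{t|s}$ stated in the problem formulation. Under $Q^{\rm masked}_t$, a token $x\neq\mask$ leaves its state at rate $\beta(t)$ and only ever jumps to $\mask$. The mask is absorbing, since its row of $Q^{\rm tok}_t$ is zero: from $(Q^{\rm masked}_t)^\top=\beta(t)(\bfe_{\mask}\mathbf 1^\top-I)$, row $\mask$ of $Q_t^{\rm masked}$ has diagonal entry $-\beta+\beta=0$. Solving the scalar ODE $\frac{d}{dt}\Pr(X^i_t=x^i_0)=-\beta(t)\Pr(X^i_t=x^i_0)$ then gives, for every starting token,
\[
\Pr(X^i_T\neq\mask\mid X^i_0)\le e^{-\int_0^T\beta(\tau)d\tau}.
\]
Assumption~\ref{ass:rate} guarantees that the integral is finite and that the kernel is well defined. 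The bound holds with equality if $X^i_0\neq\mask$ and equals zero if $X^i_0=\mask$.

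Finally we apply a union bound over the $d$ positions. Conditioning on $X_0\sim p_{\rm data}$,
\[
\Pr(X_T\neq\bfm)=\mathbb E\Big[\Pr\big(\exists i:\,X^i_T\neq\mask\mid X_0\big)\Big]\le\sum_{i=1}^d\mathbb E\big[\Pr(X^i_T\neq\mask\mid X^i_0)\big]\le d\,e^{-\|\beta\|_1}.
\]
Alternatively, we can use independence to write $\Pr(X_T=\bfm\mid X_0)=\prod_i(1-e^{-\|\beta\|_1}\mathbf 1\{X^i_0\neq\mask\})\ge(1-e^{-\|\beta\|_1})^d\ge 1-d e^{-\|\beta\|_1}$ via Bernoulli's inequality, and this gives the same result.

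The only delicate step is justifying the per-token survival formula for a time-inhomogeneous $\beta$. This is a one-dimensional linear ODE with integrable coefficient, so its solution $\exp(-\int_0^T\beta)$ is classical. Everything else is bookkeeping.
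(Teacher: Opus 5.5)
Your proof is correct and follows essentially the same route as the paper. Both arguments reduce $\TV(p_T,\delta_{\bfm})$ to $1-p_T(\bfm)$, solve the per-token masking ODE to get a survival probability of $e^{-\|\beta\|_1}$, and then pass through independence across the $d$ positions to the bound $d\,e^{-\|\beta\|_1}$. The paper writes $p_T(\bfm)=\alpha_T^d$ and uses $1-\alpha_T^d\le d(1-\alpha_T)$, which is exactly your Bernoulli alternative. Your union-bound version also has the small advantage of not requiring $p_{\rm data}$ to be free of mask tokens, an assumption the paper makes implicitly through the initial condition $\alpha_0=0$.
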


\begin{proof}
The all-mask probability $p_t(\bfm) = \alpha_t^d$ under the Kronecker-sum factorization~\eqref{eq:Q_factor} with single-token
mask probability $\alpha_t$. Here, $\alpha_t$ satisfies the ODE
\begin{equation}
    \partial_t \alpha_t = \beta(t)(1 - \alpha_t) \quad \text{subject to} \quad \alpha_0=0,
\end{equation}
derived from the masked rate~\eqref{eq:Q_absorb_uni} and with solution
\begin{equation}
    \alpha_t = 1-e^{-\int_0^t \beta(\tau) d\tau}, \quad \text{for } t\in[0, T].
\end{equation}
    Thus, the initialization error satisfies
$$
\TV(p_T, p_{\mathrm{base}}) = \TV(p_T, \delta_{\bfm}) = 1 - p_T(\bfm) = 1 - \alpha_T^d = 1 - \left( 1-e^{-\int_0^T \beta(\tau) d\tau} \right)^d \leq d \cdot e^{-\int_0^T \beta(\tau) d\tau}= d e^{-\| \beta \|_1}.
$$
\end{proof}

\section{Bounding the Initialization Error Term under Uniform Rate}\label{sec:coupling_uniform}
In this section, we derive, to the best of our knowledge, the first $S$-independent initialization error bound under the uniform transition rate in the context of discrete diffusion generative models. Our key novelty is a coupling technique. For better readability, we begin by restating the setup.

\paragraph{Setup.}
We consider a forward CTMC on $[S]^d$. The dynamics are
\emph{non-interacting}: each coordinate (token) $i \in [d]$ evolves
independently according to the per-site generator
\begin{equation}\label{eq:Q_uni_appendix}
  (Q_t^{\rm uniform})^\top
  \;=\; \beta(t)\Big(\tfrac{1}{S}\mathbf{1}_S \mathbf{1}_S^\top - I_S\Big)
  \;=\; \beta(t)\,(\Punif - I),
\end{equation}
where $\Punif := \tfrac{1}{S}\mathbf{1}_S\mathbf{1}_S^\top$ is the rank-one
projection onto the uniform distribution $\pi = \mathrm{Unif}([S])$.
The joint generator is
\begin{equation}\label{eq:joint_generator}
\mathcal{L}_t = \sum_{i=1}^d Q_t^{({\rm uniform}),\,i},
\end{equation}
with
$Q_t^{({\rm uniform}),\,i}$ acting on coordinate $i$. The initial law
is $p_0 = p_{\rm data}$, the reference law is $\pi^{\otimes d}$. Set
\begin{equation}\label{eq:rhoT}
  \rho_t \;:=\; \exp\!\Big(-\!\int_0^t \beta(\tau)\,d\tau\Big); \quad \rho_T \;:=\; \exp\!\Big(-\!\int_0^T \beta(\tau)\,d\tau\Big).
\end{equation}

Bounding $\TV(p_T,\, \pbase)=\TV(p_T,\, \pi^{\otimes d})$ reduces to exhibiting
\emph{any} coupling with controlled disagreement probability, by the
following classical characterization of total variation.
\begin{lemma}[{Coupling characterization of $\TV$; \citep[Proposition~4.7]{levin2017markov}}]
\label{lem:coupling}
For any probability measures $\mu, \nu$ on a common Polish space,
\[
  \TV(\mu, \nu) \;=\; \min_{\Gamma \in \Pi(\mu, \nu)}\,
                       \Pr_{(X, Y) \sim \Gamma}\!\big(X \neq Y\big),
\]
where $\Pi(\mu, \nu)$ is the set of couplings of $\mu$ and $\nu$ (joint
laws with the prescribed marginals), and the minimum is attained (by
the maximal coupling). Consequently, for any \emph{specific} coupling
$\Gamma \in \Pi(\mu, \nu)$,
\begin{equation}\label{eq:coupling}
  \TV(\mu, \nu) \;\le\; \Pr_\Gamma\!\big(X \neq Y\big).
\end{equation}
\end{lemma}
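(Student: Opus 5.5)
The plan is to prove the two inequalities $\TV(\mu,\nu)\le \Pr_\Gamma(X\ne Y)$ for every coupling, and $\TV(\mu,\nu)\ge \Pr_{\Gamma^\ast}(X\ne Y)$ for one explicit coupling $\Gamma^\ast$. Together they give equality and show that the minimum is attained. The consequence~\eqref{eq:coupling} is just the first inequality, and it is the only part used later for the uniform-rate initialization error. Throughout I use $\TV(\mu,\nu)=\sup_{A}|\mu(A)-\nu(A)|$ over measurable $A$. On the finite space $[S]^d$ this equals $\tfrac12\sum_\bfx|\mu(\bfx)-\nu(\bfx)|$, which matches the convention used in the proof of \Cref{lem:IPM}.

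\textbf{Easy direction.} Fix any $\Gamma\in\Pi(\mu,\nu)$ and any measurable $A$. Since $X\sim\mu$ and $Y\sim\nu$ under $\Gamma$,
\begin{equation*}
\mu(A)-\nu(A)=\Pr(X\in A)-\Pr(Y\in A)\le \Pr(X\in A,\,Y\notin A)\le \Pr(X\ne Y).
\end{equation*}
Exchanging the roles of $\mu$ and $\nu$ bounds $|\mu(A)-\nu(A)|$ by the same quantity. Taking the supremum over $A$ gives~\eqref{eq:coupling}, so $\TV(\mu,\nu)\le\inf_{\Gamma}\Pr_\Gamma(X\ne Y)$.

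\textbf{Maximal coupling.} Let $\lambda=\mu+\nu$, and let $f=d\mu/d\lambda$ and $g=d\nu/d\lambda$ be the Radon--Nikodym densities; in the finite case these are simply probability mass functions. Set $p:=\int\min(f,g)\,d\lambda$. Splitting the integral over $B=\{f>g\}$ and its complement gives $1-p=\int_B(f-g)\,d\lambda=\mu(B)-\nu(B)$. Moreover $\mu(A)-\nu(A)\le\mu(A\cap B)-\nu(A\cap B)\le \mu(B)-\nu(B)$ for every $A$, so $\TV(\mu,\nu)=1-p$.

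Now build $\Gamma^\ast$ as follows. With probability $p$, draw $Z$ from the normalized overlap $\min(f,g)\lambda/p$ and set $X=Y=Z$. With probability $1-p$, draw $X$ from $(f-\min(f,g))\lambda/(1-p)$ and, independently, $Y$ from $(g-\min(f,g))\lambda/(1-p)$. Summing the two branches shows that $X$ has density $f$ and $Y$ has density $g$, so $\Gamma^\ast\in\Pi(\mu,\nu)$. The two residual measures are supported on the disjoint sets $\{f>g\}$ and $\{g>f\}$, so $X\ne Y$ almost surely on the second branch. Hence $\Pr_{\Gamma^\ast}(X\ne Y)=1-p=\TV(\mu,\nu)$, and the minimum is attained.

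\textbf{Main obstacle.} The only delicate points are bookkeeping in the construction. The degenerate cases $p=0$ and $p=1$ are handled by dropping the corresponding branch. In the general Polish-space setting one must also check measurability of the sets $\{f>g\}$ and $\{g>f\}$ and of the diagonal event $\{X\ne Y\}$, which holds because the space is separable metric. In the finite setting where the lemma is applied, every step reduces to elementary sums over atoms, and no measure-theoretic issue arises.
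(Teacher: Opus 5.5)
Your proof is correct and follows the argument the paper relies on: the paper gives no proof of its own, only the citation to Levin--Peres Proposition~4.7, whose proof consists of the same two steps. Those steps are the bound $\mu(A)-\nu(A)\le\Pr(X\in A,\,Y\notin A)\le\Pr(X\neq Y)$ and the overlap/residual coupling built from $p=\sum_{\bfx}\min(\mu(\bfx),\nu(\bfx))$. Levin--Peres prove it only for countable spaces; your use of densities with respect to $\mu+\nu$ extends the argument to the Polish-space generality in the statement, which is more than the finite-space applications in \Cref{thm:prior_noS_uniform} and \Cref{lemma:early_stop_uniform} need.
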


\paragraph{Per-site uniformization.} Next, we consider a per-site uniformization representation of $Q^\mathrm{uniform}_t$. The entries of the per-site generator~\eqref{eq:Q_uni_appendix} are
\begin{equation}\label{eq:Q_entries}
  Q_t^{\rm uniform}(v, w) \;=\; \beta(t)\Big[\tfrac{1}{S} - \mathbf{1}[v=w]\Big]
  \;=\;
  \begin{cases}
    \;\;\;\,\beta(t)/S & w \neq v, \\[3pt]
    -(S-1)\,\beta(t)/S & w = v,
  \end{cases}
\end{equation}
so the total exit rate from any state $v \in [S]$ is the same constant
\begin{equation}\label{eq:exit_rate}
  |Q_t^{\rm uniform}(v,v)|
  \;=\; \sum_{w \neq v} Q_t^{\rm uniform}(v, w)
  \;=\; \frac{(S-1)\,\beta(t)}{S}
  \quad\text{for every } v \in [S].
\end{equation}
Uniformization requires only
$\Lambda(t) \ge \max_v |Q_t^{\rm uniform}(v,v)| = (S-1)\beta(t)/S$.
We choose
\begin{equation}\label{eq:choice_lambda}
  \Lambda(t) \;=\; \beta(t),
\end{equation}
which is valid since $\beta(t) \ge (S-1)\beta(t)/S$ for all $S \ge 1$,
with slack $\beta(t)/S$ corresponding to ``self-jumps'' $v \to v$ that
are invisible at the generator level. With this choice the
uniformization transition matrix is
\begin{equation}\label{eq:P_beta}
  P_\beta(t)
  \;:=\; I + \frac{Q_t^{\rm uniform}}{\beta(t)}
  \;=\; I + (\Punif - I)
  \;=\; \Punif,
  \qquad
  P_\beta(v, w) = \frac{1}{S} \text{ for all } v, w \in [S].
\end{equation}
The transition rule is the rank-one projection: \emph{at each Poisson
ring the chain is reset to a fresh uniform sample $\xi \sim \pi$,
independently of its current state}, with $1/S$ probability of an
invisible self-jump.

\begin{lemma}[Uniformized construction]
\label{lem:uniformization}
Let $N$ be a Poisson process with rate $\beta(t)$, and let
$(\xi_k)_{k \ge 1}$ be i.i.d.\ $\mathrm{Unif}([S])$, independent of $N$.
Define $(X_t)_{t \ge 0}$ by $X_0 = v_0$ and, at each event time $t_k$
of $N$, set $X_{t_k} = \xi_k$. Then $(X_t)$ is the per-site CTMC with
generator~\eqref{eq:Q_uni_appendix} started at $v_0$.
\end{lemma}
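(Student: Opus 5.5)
The plan is to check that the constructed process has the right finite-dimensional distributions. Since a CTMC law is determined by its initial state and its transition kernels, it suffices to show that $(X_t)$ is Markov and that its transition kernel $P(s,t)$ solves the Kolmogorov forward equation with generator $Q_t^{\rm uniform}$. (Equivalently, one could compute $P(s,t)$ explicitly and compare with the closed-form solution of \eqref{eq:Q_uni_appendix}.)

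The Markov property comes first, from the construction. The Poisson process $N$ (with intensity $\beta(t)$, locally integrable by Assumption~\ref{ass:rate}) has independent increments, and the marks $\xi_k$ are i.i.d. and independent of $N$. Hence, given $\mathcal{F}_s$ (which includes $X_s$), the future $(X_u)_{u\ge s}$ depends only on $X_s$ and on $N$ and the marks on $(s,\infty)$, which are independent of $\mathcal{F}_s$. This gives the Markov property with a transition kernel determined by $X_s$.

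Next, compute the kernel. Set $m(s,t)=\int_s^t\beta$ and $\rho(s,t)=e^{-m(s,t)}$. With probability $\rho(s,t)$ there are no rings in $(s,t]$, and then $X_t=X_s$. Otherwise $X_t$ equals the last mark, which is uniform and independent of $X_s$. Therefore
\[
P(s,t)(v,w)=\rho(s,t)\,\mathbf{1}[v=w]+\bigl(1-\rho(s,t)\bigr)\tfrac1S,
\]
that is, $P(s,t)=\rho I+(1-\rho)\Punif$.

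Then verify that this $P$ satisfies $\partial_t P(s,t)=P(s,t)\,Q_t^{\rm uniform}$ with $P(s,s)=I$, in the a.e. or absolutely continuous sense. Since $Q_t^{\rm uniform}=\beta(t)(\Punif-I)$, $\Punif^2=\Punif$, and $\partial_t\rho=-\beta\rho$, we get
\[
\partial_t P=\beta\rho\,(\Punif-I),
\qquad
P\,Q_t=\beta\bigl(\rho I+(1-\rho)\Punif\bigr)(\Punif-I)=\beta\rho(\Punif-I),
\]
so the two sides match. The forward equation with integrable coefficients has a unique absolutely continuous solution. That solution is the transition function of the CTMC with generator $Q_t^{\rm uniform}$, so the two processes agree in law. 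The invisible self-jumps (probability $1/S$) do not matter, because only the law of $X$ is claimed.

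The main obstacle is a technical one: time-inhomogeneity with merely integrable $\beta$. I would handle it by the time change $u=m(0,t)$. This turns $N$ into a rate-one Poisson process and reduces everything to the classical homogeneous uniformization theorem. The direct ODE uniqueness argument above (a linear Carathéodory ODE) is an alternative route.
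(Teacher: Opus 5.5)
Your proposal is correct and follows essentially the same route as the paper: condition on whether $N$ rings in the interval to get the kernel $\rho I+(1-\rho)\Pi$, obtain the Markov property from the independence of $N$ and the marks, and match this kernel with the one generated by $Q^{\rm uniform}_t$. The only difference is how the generator's kernel is identified: the paper uses the closed form $\exp\bigl(\int_0^T Q_s^{\rm uniform}\,ds\bigr)=\rho_T I+(1-\rho_T)\Pi$ (valid because the $Q_s^{\rm uniform}$ commute), whereas you verify the forward equation and invoke uniqueness, which has the small advantage of covering general $P(s,t)$ rather than only $P(0,T)$.
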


\begin{proof}
Conditional on $N$ and $(\xi_k)$,
\[
  X_T \;=\; \begin{cases}
    v_0 & \text{if } N([0,T]) = 0, \\
    \xi_{N([0,T])} & \text{otherwise.}
  \end{cases}
\]
The first case has probability $\rho_T = e^{-\int_0^T \beta(s) ds}$; the
second has probability $1 - \rho_T$ and yields an independent uniform
sample. Hence the $T$-step kernel is
\begin{equation}\label{eq:per_site_kernel}
  \Pr(X_T = w \mid X_0 = v)
   \;=\; \rho_T\,\mathbf{1}[v = w] + (1-\rho_T)/S.
\end{equation}
This matches the kernel obtained directly from~\eqref{eq:Q_uni_appendix}:
using $(\Punif - I)^2 = -(\Punif - I)$,
\[
  \exp\!\Big(\!\int_0^T Q_s^{\rm uniform}\,ds\Big)
  \;=\; \exp\big({-}\log\rho_T \cdot (\Punif - I)\big)
  \;=\; \rho_T\,I + (1-\rho_T)\,\Punif.
\]
The Markov property of $(X_t)$ follows from the strong Markov property
of~$N$ and the independence of $(\xi_k)$. \qedhere
\end{proof}

\begin{remark}[Joint Construction]\label{remark:joint_const}
By non-interaction, the joint CTMC on $[S]^d$ is realized by $d$
\emph{independent} per-site uniformized chains: independent Poisson
processes $N_1, \dots, N_d$ each at rate $\beta(t)$; for each $i$, an
i.i.d.\ sequence $(\xi^{(i)}_k)_{k \ge 1}$ in $[S]$, independent of
everything else; at each event time of $N_i$, position $i$ is reset
to the next sample $\xi^{(i)}_k$. The marginal law on position $i$ is
then~\eqref{eq:per_site_kernel}; the joint law has generator
$\mathcal{L}_t$ defined in~\eqref{eq:joint_generator}.
\end{remark}

\paragraph{Initialization Error Bound.} We are now ready to state and prove, to the best of our knowledge, the first $S$-independent initialization error bound under uniform transition in the context of discrete diffusion generative models.

\begin{theorem}\label{thm:prior_noS_uniform}
For all $T \ge 0$,
\[
  \TV(p_T,\, \pi^{\otimes d}) \;\le\; d\,\exp\!\Big(-\!\int_0^T \beta(t)\,dt\Big) \;=\;  d e^{-\| \beta \|_1}.
\]
In particular, $\TV(p_T, \pi^{\otimes d}) \le \varepsilon$ whenever
$\|\beta\|_1 \ge \log(d/\varepsilon)$, with no dependence on $S$.
\end{theorem}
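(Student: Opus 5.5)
The plan is to bound $\TV(p_T,\pi^{\otimes d})$ through Lemma~\ref{lem:coupling}, by building one explicit coupling of $p_T$ with $\pi^{\otimes d}$ out of the joint uniformized construction in Remark~\ref{remark:joint_const}.

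\textbf{Coupling construction.} On one probability space, take the following independent ingredients:
\begin{itemize}
\item[(i)] an initial state $X_0\sim p_{\rm data}$;
\item[(ii)] independent Poisson processes $N_1,\dots,N_d$, each of rate $\beta(t)$;
\item[(iii)] i.i.d.\ $\mathrm{Unif}([S])$ marks $(\xi^{(i)}_k)_{k\ge1}$ for each site;
\item[(iv)] an extra independent vector $\eta=(\eta^1,\dots,\eta^d)\sim\pi^{\otimes d}$.
\end{itemize}
Let $X_T$ be the forward chain at time $T$ built from (i)--(iii) as in Remark~\ref{remark:joint_const}. Conditioning on $X_0$ and using Lemma~\ref{lem:uniformization} site by site, the law of $X_T$ is $p_T$.

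Define $Y$ coordinatewise. If $N_i([0,T])\ge1$, set $Y^i:=X_T^i=\xi^{(i)}_{N_i([0,T])}$. Otherwise set $Y^i:=\eta^i$.

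\textbf{Checking that $Y\sim\pi^{\otimes d}$.} Condition on the Poisson processes $(N_1,\dots,N_d)$ and on $X_0$. For each site, $Y^i$ is then either the last mark $\xi^{(i)}_{N_i([0,T])}$ or $\eta^i$. Both are uniform on $[S]$, independent of $X_0$ and of the Poisson processes, and independent across sites, because each site uses distinct variables. So conditionally $Y\sim\pi^{\otimes d}$, and integrating out gives the same unconditional law. This is the step needing the most care: the selected mark index $N_i([0,T])$ depends on $N_i$, so I will argue via conditioning on $N_i$ that the selected mark is still uniform and independent of everything else.

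\textbf{Bounding the disagreement.} The pair $(X_T,Y)$ is a coupling of $p_T$ and $\pi^{\otimes d}$. If every site has rung at least once, then $X_T=Y$. A union bound therefore gives
\[
\Pr(X_T\neq Y)\le\sum_{i=1}^d\Pr\big(N_i([0,T])=0\big)=d\,\rho_T=d\,e^{-\|\beta\|_1}.
\]
Here $\rho_T$ is the void probability of a Poisson process with intensity $\beta$. Lemma~\ref{lem:coupling} then yields the main claim.

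Alternatively, $1-(1-\rho_T)^d\le d\rho_T$ gives the exact probability that some site never rang, with the same bound.

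The corollary follows from $d e^{-\|\beta\|_1}\le\varepsilon$ exactly when $\|\beta\|_1\ge\log(d/\varepsilon)$. No quantity in the argument depends on $S$, since uniformity of the fresh marks is used only qualitatively.
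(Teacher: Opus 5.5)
Your proposal is correct and is essentially the paper's proof. The paper runs a second copy $Y$ from an independent $Y_0\sim\pi^{\otimes d}$, using the same Poisson clocks and marks, so at time $T$ its coupling coincides with yours (your $\eta$ is its $Y_0$), and it finishes with the same union bound over sites that never rang; the only cosmetic difference is that you check $Y\sim\pi^{\otimes d}$ by conditioning on the clocks, whereas the paper appeals to stationarity of $\pi^{\otimes d}$ under the forward dynamics.
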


\begin{proof}
We construct an explicit synchronous coupling of two copies of the
forward chain, using the joint uniformized representation of~\Cref{remark:joint_const}, and apply Lemma~\ref{lem:coupling}.

\medskip
\noindent\emph{Coupling.} Draw $X_0 \sim p_{\rm data}$ and
$Y_0 \sim \pi^{\otimes d}$ independently. Let $N_1, \dots, N_d$ be
i.i.d.\ Poisson processes with rate $\beta(t)$, and let
$(\xi^{(i)}_k)_{i \in [d],\, k \ge 1}$ be i.i.d.\ $\mathrm{Unif}([S])$,
independent of $N$. Evolve both $X$ and $Y$ by the joint uniformized
construction of~\Cref{remark:joint_const} driven by the \emph{same} clocks $(N_i)$ and
\emph{same} samples $(\xi^{(i)}_k)$.

\medskip
\noindent\emph{Marginal verification.} By Lemma~\ref{lem:uniformization}
applied per coordinate, $X_t \sim p_t$ (the forward chain started at
$p_{\rm data}$) and $Y_t \sim \pi^{\otimes d}$ for all $t \ge 0$ (the
forward chain started at its stationary law, which it preserves).

\medskip
\noindent\emph{Merging per coordinate.} Let
$\tau_i := \inf\{t \ge 0 : N_i([0,t]) \ge 1\}$ be the first ring time
at position $i$. At $t = \tau_i$, the shared sample $\xi^{(i)}_1$
resets both $X^{(i)}$ and $Y^{(i)}$ to the same value (this is
property~\eqref{eq:P_beta} of the uniformized representation: the
reset distribution is state-independent, so the same sample is valid
for both chains). The shared subsequent samples then keep them equal:
$X^{(i)}_t = Y^{(i)}_t$ for all $t \ge \tau_i$. Hence
\[
  \{X_T \neq Y_T\} \;\subseteq\; \bigcup_{i=1}^d \{\tau_i > T\}.
\]

\medskip
\noindent\emph{Union bound.} Each $\tau_i$ is the first event of a
Poisson process with rate $\beta(t)$, so $\Pr(\tau_i > T) = \rho_T$.
By the union bound,
\[
  \Pr(X_T \neq Y_T) \;\le\; \sum_{i=1}^d \Pr(\tau_i > T) \;=\; d\,\rho_T.
\]

\medskip
\noindent\emph{Conclusion.} The construction is a specific coupling
$\Gamma \in \Pi(p_T,\, \pi^{\otimes d})$. By~\eqref{eq:coupling} of Lemma~\ref{lem:coupling},
\[
  \TV(p_T,\, \pi^{\otimes d})
  \;\le\; \Pr_\Gamma(X_T \neq Y_T)
  \;\le\; d\,\rho_T \;=\; d\,\exp\!\Big(-\!\int_0^T \beta(t)\,dt\Big) \;=\;  d e^{-\| \beta \|_1},
\]
by~\eqref{eq:rhoT}.
The closure $d\,\rho_T \le \varepsilon$ holds iff
$\|\beta\|_1 \ge \log(d/\varepsilon)$. \qedhere
\end{proof}

\begin{remark*}[$S$-independence]
The vocabulary size $S$ enters the bound only through the choice of
reference $\pi$, never through any rate. The synchronous coupling
construction is agnostic to the reset distribution; the analysis
depends only on $\pi$ being preserved by the dynamics, so that
$Y_t \sim \pi^{\otimes d}$ for all $t$.
\end{remark*}

\begin{remark*}
{\textcolor{red} This argument is the continuous-time, general state-space analog of the
classical coupling analysis of the lazy random walk on the hypercube
$\{0,1\}^n$~\citep[Chapter~5]{levin2017markov}; the underlying workhorse is
the Markovian coupling theorem~\citep[Theorem~5.2]{levin2017markov}. An
equivalent analysis through the strong stationary time framework
\citep[Chapter~6]{levin2017markov} yields the slightly tighter bound
$\TV(p_T, \pi^{\otimes d}) \le 1 - (1-\rho_T)^d$.}
\end{remark*}

\section{Bounding the Early-Stopping Error Term}
When the reverse process is stopped at a small time $\delta > 0$ rather than at $0$, the law $p_\delta$ differs from the true data law $p_0$, contributing an early-stopping error $\TV(p_0, p_\delta)$. The same coupling argument used to bound the initialization error in~\Cref{sec:coupling_uniform} applies here, run on the time interval $[0,\delta]$ instead of $[0,T]$. We state the lemmas and proofs below.

\begin{lemma}[Early-Stopping Error under Uniform Transition]\label{lemma:early_stop_uniform}
For all $\delta \geq 0$,
\[
  \TV(p_0,\, p_\delta) \;\le\; 1 - \rho_\delta^{\,d} \;=\; 1 - e^{-d\int_0^\delta \beta(t)\,dt} \;\le\; d\int_0^\delta \beta(t)\,dt.
\]
\end{lemma}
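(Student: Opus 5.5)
We reuse the synchronous coupling from the proof of \Cref{thm:prior_noS_uniform}, now with both chains started at the same point and run only on $[0,\delta]$. Concretely, draw $X_0 \sim p_{\rm data}$ and set $Y_0 := X_0$. We let $Y$ stay frozen, $Y_t \equiv X_0$, and evolve $X$ by the joint uniformized construction of \Cref{remark:joint_const}, with independent Poisson clocks $N_1,\dots,N_d$ of rate $\beta(t)$ and i.i.d.\ uniform reset samples. By \Cref{lem:uniformization} applied per coordinate, $X_\delta \sim p_\delta$, while $Y_\delta = X_0 \sim p_0$. The pair $(Y_\delta, X_\delta)$ is therefore a coupling in $\Pi(p_0, p_\delta)$, and \Cref{lem:coupling} gives $\TV(p_0,p_\delta) \le \Pr(X_\delta \neq X_0)$.

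The key observation is that $X_\delta \neq X_0$ requires at least one clock to ring on $[0,\delta]$. Indeed, if no $N_i$ rings, no coordinate is reset and $X_\delta = X_0$. Hence
\[
\Pr(X_\delta \neq X_0) \le 1 - \Pr\bigl(N_i([0,\delta]) = 0 \ \forall i\bigr) = 1 - \prod_{i=1}^d \rho_\delta = 1 - \rho_\delta^{\,d},
\]
using the independence of the clocks and $\Pr(N_i([0,\delta])=0) = \rho_\delta = e^{-\int_0^\delta \beta}$ from \eqref{eq:rhoT}. Since $\rho_\delta^{\,d} = e^{-d\int_0^\delta\beta(t)\,dt}$, the middle equality in the statement is just a rewriting. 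The final inequality follows from $1-e^{-x}\le x$ for $x\ge 0$, applied with $x = d\int_0^\delta\beta(t)\,dt$, which is nonnegative because $\beta \ge 0$.

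There is no real obstacle here. The only point needing care is that the uniformized clocks include invisible self-jumps, since a reset may land on the current value. These can only make $X_\delta = X_0$ more likely, so bounding the disagreement event by ``some clock rang'' remains valid. One could make the bound slightly sharper by accounting for this $1/S$ chance, but we do not need it. Nothing else depends on $S$, so the bound is vocabulary-free.
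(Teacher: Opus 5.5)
Your proof is correct and is essentially the paper's argument: the paper likewise couples $X_0\sim p_{\rm data}$ with $X_\delta$ by running a single copy of the jointly uniformized chain, uses $\{X_0=X_\delta\}\supseteq\bigcap_{i=1}^d\{\tau_i>\delta\}$ together with independence of the clocks to get $\Pr(X_0=X_\delta)\ge\rho_\delta^{\,d}$, and then applies \Cref{lem:coupling} and $1-e^{-x}\le x$. Your frozen copy $Y_t\equiv X_0$ is only a relabeling of that same coupling $(X_0,X_\delta)$.
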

\begin{proof}
We couple $X_0 \sim p_0 = p_{\rm data}$ with $X_\delta \sim p_\delta$ by running a single copy of the forward chain via the joint uniformized construction of~\Cref{remark:joint_const}, with i.i.d.\ per-coordinate Poisson clocks $N_1,\dots,N_d$ of rate $\beta(t)$.
\medskip

\noindent\emph{No-ring event.} Let $\tau_i := \inf\{t \ge 0 : N_i([0,t]) \ge 1\}$ be the first ring time at coordinate $i$. On $\{\tau_i > \delta\}$ coordinate $i$ is never reset, so $X^{(i)}_\delta = X^{(i)}_0$. Hence
\[
  \{X_0 = X_\delta\} \;\supseteq\; \bigcap_{i=1}^d \{\tau_i > \delta\}.
\]
\medskip

\noindent\emph{Probability bound.} The Poisson clocks are independent and $\Pr(\tau_i > \delta) = e^{-\int_0^\delta \beta(t)\,dt}=:\rho_\delta $, so
\[
  \Pr(X_0 = X_\delta) \;\ge\; \prod_{i=1}^d \Pr(\tau_i > \delta) \;=\; \rho_\delta^{\,d}.
\]
\medskip

\noindent\emph{Conclusion.} The construction is a coupling $\Gamma \in \Pi(p_0,\, p_\delta)$. By~\eqref{eq:coupling} of Lemma~\ref{lem:coupling},
\[
  \TV(p_0,\, p_\delta) \;\le\; \Pr_\Gamma(X_0 \neq X_\delta) \;\le\; 1 - \rho_\delta^{\,d} \;=\; 1 - e^{-d\int_0^\delta \beta(t)\,dt},
\]
and $1 - e^{-x} \le x$ yields $\TV(p_0,\, p_\delta) \le d\int_0^\delta \beta(t)\,dt$. \qedhere
\end{proof}

\begin{lemma}[Early-Stopping Bound under Masked Diffusion]\label{lemma:early_stop_masked}
    Under masked diffusion with early stopping at $\delta>0$, the early-stopping mismatch term is bounded by
    \begin{equation}
        \TV(p_0, p_\delta) \;\leq\; 1 - e^{-d\int_0^\delta \beta(t)\,dt} .
    \end{equation}
\end{lemma}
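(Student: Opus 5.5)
I will mirror the proof of \Cref{lemma:early_stop_uniform} and build an explicit coupling of $X_0\sim p_0=p_{\rm data}$ with $X_\delta\sim p_\delta$ by running a single copy of the forward masked chain. Under the factorization~\eqref{eq:Q_factor} with the masked rate~\eqref{eq:Q_absorb_uni}, each token $i$ evolves independently. A non-mask token jumps to $\mask$ at rate $\beta(t)$, and $\mask$ is absorbing. I realize this with independent inhomogeneous Poisson clocks $N_1,\dots,N_d$ of rate $\beta(t)$. At the first ring $\tau_i$ of $N_i$, coordinate $i$ is set to $\mask$. If it is already $\mask$, nothing changes, which is an invisible self-jump. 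A single clock suffices here because the per-site exit rate is either $\beta(t)$ (for non-mask states) or $0$ (for $\mask$).

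I will check that this construction has the correct single-site kernel. Starting from $v\neq\mask$, the state at time $\delta$ is $v$ with probability $\rho_\delta=e^{-\int_0^\delta\beta}$ and $\mask$ otherwise. This agrees with the solution $\alpha_t=1-e^{-\int_0^t\beta}$ of the ODE in \Cref{lemma:prior_mismatch_masked}. Starting from $\mask$, the state stays $\mask$. Independence across coordinates then gives the product kernel. The Markov property follows from the independent-increments property of the clocks, exactly as in \Cref{lem:uniformization}. Hence $(X_0,X_\delta)$ is a coupling in $\Pi(p_0,p_\delta)$.

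Next I bound the disagreement probability. On $\{\tau_i>\delta\}$ coordinate $i$ is untouched, so
\[
\{X_0=X_\delta\}\supseteq\bigcap_{i=1}^d\{\tau_i>\delta\}.
\]
By independence,
\[
\Pr(X_0=X_\delta)\ge\rho_\delta^{\,d}=e^{-d\int_0^\delta\beta(t)\,dt}.
\]
Applying~\eqref{eq:coupling} of \Cref{lem:coupling} gives $\TV(p_0,p_\delta)\le 1-e^{-d\int_0^\delta\beta}$, which is the claim. One can add $1-e^{-x}\le x$ to obtain the form $d\int_0^\delta\beta$ used in \Cref{cor:ES}.

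The only delicate step is justifying that the ``single clock, reset to $\mask$'' construction reproduces the masked generator exactly, including at $\mask$, where the ring is a no-op. I expect this to be straightforward. As an alternative, I could compute the bound directly: $p_\delta(\bfx)\ge\rho_\delta^d\,p_0(\bfx)$ because a sequence is preserved if no token masks. Then
\[
\TV=\sum_{\bfx}(p_0-p_\delta)_+(\bfx)\le(1-\rho_\delta^d)\sum_{\bfx}p_0(\bfx)=1-\rho_\delta^d,
\]
which reaches the same bound without invoking a coupling.
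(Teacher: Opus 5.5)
Your proposal is correct and follows essentially the same route as the paper: couple $X_0$ and $X_\delta$ through a single run of the forward chain, observe that the two coincide on the event that no token is masked during $[0,\delta]$ (probability $e^{-d\int_0^\delta\beta(t)\,dt}$ by independence across tokens), and apply the coupling characterization of $\TV$. Your Poisson-clock realization makes the paper's coupling explicit. It also covers coordinates that are already $\mask$ at time $0$, which the paper handles only implicitly by treating every token as initially unmasked.
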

\begin{proof}
Under the Kronecker-sum factorization~\eqref{eq:Q_factor}, the forward process masks each of the $d$ tokens \emph{independently}, each with single-token mask probability
\begin{equation}
    \alpha_\delta = 1 - e^{-\int_0^\delta \beta(t)\,dt},
\end{equation}
so the probability that a given token remains unmasked is $1-\alpha_\delta = e^{-\int_0^\delta \beta(t)\,dt}$.

By independence across tokens, the probability that none of the $d$ tokens are selected for masking by time $\delta$ is $(1-\alpha_\delta)^d$. Conditional on this event, $\bfx_\delta = \bfx_0$ exactly, so there exists a coupling of $(\bfx_0, \bfx_\delta)\sim (p_0, p_\delta)$ with
$$
\mathbb{P}(x_0 = x_\delta) \;=\; \underbrace{(1-\alpha_\delta)^d}_{\text{prob. of all $d$ tokens remain unmasked}}.
$$
The total variation is bounded by the coupling disagreement probability (\Cref{lem:coupling}), hence
$$
\TV(p_0, p_\delta) \;\leq\; \mathbb{P}(x_0 \neq x_\delta) \;\leq\; 1 - (1-\alpha_\delta)^d \;=\; 1 - e^{-d\int_0^\delta \beta(t)\,dt},
$$
and $1 - e^{-x} \le x$ yields $\TV(p_0,\, p_\delta) \le d\int_0^\delta \beta(t)\,dt$. 
\end{proof}

\section{\texorpdfstring{$S$}{S}-Independence of the Bounds}\label{sec:bound_scale}
We provide the detailed discussion on the scaling of the bounds in our main theory, \Cref{lem:IPM}. A crucial feature of our bound is that, the coefficient of the training loss term is independent of the vocabulary size $S$. This independence is practically significant: in applications such as language modeling, $S$ corresponds to the vocabulary size, which is prohibitively large and renders any $S$-dependent bound vacuous.
\subsection{Scaling of the Bounds}
\label{ssec:bound_scale}

\paragraph{Case~1 (Masked).} In Case~1, the bound is completely independent of $S$.

The prefactor
$\sqrt{d} \sqrt{\| \beta (1-p(\bfm))\|_1} =\sqrt{d} \left( \int_0^T \beta(t) (1-p_t(\bfm)) dt \right)^{\frac{1}{2}} $ is obtained using the exit-routing technique; see Remarks~\ref{rmk:SMC_ER} and~\ref{rmk:uniform_vs_masked}.
The prefactor depends on the
schedule $\beta(t)$ and the all-mask probability $p_t(\bfm) = \alpha_t^d$ under the Kronecker-sum factorization~\eqref{eq:Q_factor} with single-token
mask probability $\alpha_t$. Here, $\alpha_t$ satisfies the ODE
\begin{equation}
    \partial_t \alpha_t = \beta(t)(1 - \alpha_t) \quad \text{subject to} \quad \alpha_0=1,
\end{equation}
derived from the masked rate~\eqref{eq:Q_absorb_uni}) and with solution
\begin{equation}
    \alpha_t = 1-e^{-\int_0^t \beta(s) ds}, \quad \text{for } t\in[0, T].
\end{equation}
Thus, neither of $\beta(t)$ and $p_t(\bfm)$
involves $S$. So the prefactor is indepedent of $S$.

The loss $\LSE$ defined in~\eqref{eq:LSE} under masked transition is structurally $S$-independent: at each $(t, \bfy)$
the integrand is summed by $\sum_{\bfy \neq \bfx}=\sum_{\bfx \in \Nmasked(\bfy)}$, where $\Nmasked(\bfy)$ is the successor set of $\bfy$. It is defined in~\eqref{eq:successor} and contains at most $d$ terms (one successor per position, obtained by masking that position, and there are $d$ positions).

\paragraph{Case 1 (Masked): Score-marginal Cancellation.} The $S$-independence depends critically on our \textbf{score-marginal cancellation}
$p_t(\bfx)s_t(\bfx)_\bfy = p_t(\bfy)$ applied in the proof of~\Cref{lem:IPM}.
The unsimplified integrand of~\eqref{eq:ineq_eval_1} sums over the
$m(\bfx)(S - 1)$ predecessors $\bfy$ of each $\bfx$, with the factor $S-1$
counting vocabulary values. The cancellation re-indexes the sum to range
over $\bfx \in \Nmasked(\bfy)$ ($\le d$ terms, no
vocabulary factor), absorbing the $S-1$; Lemma~\ref{lemma:relative_matching}
then converts the resulting sum into the conditional-expectation form
appearing in $\LSE$.

The boundary 
is also $S$-independent, by~\Cref{lemma:prior_mismatch_masked}.

\paragraph{Case~2 (Uniform).} In Case~2, the bound is completely independent of $S$.

The prefactor $\sqrt{d}\sqrt{\|\beta\|_1}$
and the loss $\LWSM$ defined in~\eqref{eq:LWSM} are both $S$-independent.
The boundary term $\TV(p_T, p_{\mathrm{base}})$ is also independent of $S$, by~\Cref{thm:prior_noS_uniform}.

Full step-by-step
derivations of the claims above -- the Kronecker-sum factorization
$p_t(\bfm) = \alpha_t^d$, the successor-set count
$|\Nmasked(\bfy)| \le d$, the re-indexing via the
score-marginal cancellation, the Case~2 boundary computation, and the
Case~1 spectral-gap bound -- are given in
Appendix~\ref{ssec:bound_detail}.

\subsection{Derivational Details to Achieve \texorpdfstring{$S$}{S}-Independence}
\label{ssec:bound_detail}

In this subsection, we detail the techniques used in our derivation to achieve independence in the vocabulary size $S$.

\paragraph{Case~1: Prefactor of training loss.} The prefactor outside
$\sqrt{\LSE(s, \tilde{s}) + \mathcal{O}(|s-\tilde{s}|^3)}$ in Case~1 is
\begin{equation}
\sqrt{2d}\,\Big(\int_0^T \beta(t)(1 - p_t(\bfm))\,dt\Big)^{\!1/2},
\label{eq:case2-prefactor}
\end{equation}
which is obtained using the exit-routing technique; see Remarks~\ref{rmk:SMC_ER} and~\ref{rmk:uniform_vs_masked}.
The factor $\sqrt{d}$ is the sequence length. Inside the integral, $\beta(t)$
is the schedule and $p_t(\bfm)$ is the joint probability of the all-mask
state. Under the Kronecker-sum dynamics \eqref{eq:Q_factor}, the conditional
forward kernel factorizes as $p_{t|0}(\bfx \mid \bfx_0) = \prod_i
p^{\mathrm{tok}}_{t|0}(x^i \mid x_0^i)$, where $p^{\mathrm{tok}}_{t|0}(\mask \mid x_0^i)
= \alpha_t$ for any non-mask $x_0^i$ and the single-token mask probability
$\alpha_t$ satisfies $\partial_t \alpha_t = \beta(t)(1 - \alpha_t)$.
Marginalizing over $\bfx_0 \sim p_{\mathrm{data}}$ (supported on non-mask sequences),
\begin{equation}
p_t(\bfm) = \sum_{\bfx_0} p_{\mathrm{data}}(\bfx_0)\!\prod_i p^{\mathrm{tok}}_{t|0}(\mask \mid x_0^i) = \alpha_t^d.
\end{equation}
The integral in \eqref{eq:case2-prefactor} depends only on $\beta$ and
$d$ (no $S$), and the whole prefactor is $S$-independent.

\paragraph{Case~1: Score-marginal Cancellation
$p_t(\bfx)s_t(\bfx)_\bfy = p_t(\bfy)$.} The $S$-independent of the bound~\eqref{eq:IPM_absorb} results from a re-indexing in the
proof. The integrant in the derivation in~\Cref{subsec:proofIPM} reads
\begin{equation}
\sum_{\bfx \in \mathcal{X}}\!\sum_{\bfy \ne \bfx}\!p_t(\bfx) Q_t(\bfy, \bfx)\,(s_t(\bfx)_\bfy - \tilde s_t(\bfx)_\bfy)\,[\,\cdot\,].
\end{equation}
For the masked kernel, $Q_t(\bfy, \bfx) > 0$ when $\bfy \to \bfx$ in
the forward chain, i.e., $\bfx$ is obtained from $\bfy$ by masking one
non-mask position; equivalently, $\bfx \in \Nmasked(\bfy)$. With
$\bfx$ fixed, $\bfy$ is obtained from $\bfx$ by replacing one of $\bfx$'s
$m(\bfx)$ $\mask$ tokens with one of $S - 1$ non-mask values:
\begin{equation}
\#\{\bfy : Q_t(\bfy, \bfx) > 0\} = m(\bfx)(S - 1),
\end{equation}
so the factor $S - 1$ enters the inner sum.

The substitution $s_t(\bfx)_\bfy = p_t(\bfy)/p_t(\bfx)$ gives
$p_t(\bfx)(s_t(\bfx)_\bfy - \tilde s_t(\bfx)_\bfy) = p_t(\bfy)(1 -
\tilde s_t(\bfx)_\bfy/s_t(\bfx)_\bfy)$. The sum re-indexes by holding
$\bfy$ fixed and summing over $\bfx \in \Nmasked(\bfy)$:
\begin{equation}
\sum_{\bfx, \bfy \ne \bfx}\!p_t(\bfx) Q_t(\bfy, \bfx)\,(s_t - \tilde s_t)[\,\cdot\,]
= \sum_{\bfy \ne \bfm}\!p_t(\bfy)\!\!\sum_{\bfx \in \Nmasked(\bfy)}\!\!\!Q_t(\bfy, \bfx)\,\big(1 - \tfrac{\tilde s_t(\bfx)_\bfy}{s_t(\bfx)_\bfy}\big)[\,\cdot\,].
\end{equation}
The outer sum is restricted to $\bfy \ne \bfm$ since $\Nmasked(\bfm)
= \emptyset$, and is weighted by $p_t(\bfy)$ with total mass
$\sum_{\bfy \ne \bfm} p_t(\bfy) = 1 - p_t(\bfm) \le 1$. The inner sum
has $|\Nmasked(\bfy)| \le d$ terms. The re-indexing converts the
$m(\bfx)(S - 1)$ choices in the original sum into $\le d$ choices in the
re-indexed sum; the factor of $S - 1$ is absorbed by the score-marginal
cancellation.

\paragraph{Case~1: boundary term.} For $p_{\rm base} = \delta_{\bfm}$,
\begin{align*}
  \TV(p_T, \delta_\bfm)
  &= \tfrac{1}{2}\!\sum_{\bfx}\!|p_T(\bfx) - \delta_\bfm(\bfx)| \\
  &= \tfrac{1}{2}\big[\,|p_T(\bfm) - 1| + \!\sum_{\bfx \ne \bfm}\!p_T(\bfx)\,\big] \\
  &= \tfrac{1}{2}\big[(1 - p_T(\bfm)) + (1 - p_T(\bfm))\big]
  \;=\; 1 - p_T(\bfm),
\end{align*}
using $p_T(\bfm) \le 1$ and $\sum_{\bfx \ne \bfm}p_T(\bfx) = 1 - p_T(\bfm)$.

By construction of the forward process, conditional on the initial
sequence $\bfx_0$, the $d$ positions evolve as \textbf{independent} CTMCs, each
with the same time-dependent rate $\beta_t$ and $\mask$ absorbing.
Hence for any non-mask token $v$ the per-position mask probability
\[
  \alpha_T \;:=\; \Pr(\bfx_T^{(i)} = \mask \mid \bfx_0^{(i)} = v)
\]
depends on neither $i$ (positions share dynamics) nor $v$ (the
absorption rate is the same for every non-mask token), so $\alpha_T$
is indeed a (pure) scalar. The data distribution $p_{\rm data}$ is
supported on non-mask sequences, so for $p_{\rm data}$-a.e.\ $\bfx_0$
every coordinate $\bfx_0^{(i)}$ is non-mask, and independence of the
per-position CTMCs gives
\[
  \Pr(\bfx_T = \bfm \mid \bfx_0)
  \;=\; \prod_{i=1}^d \Pr(\bfx_T^{(i)} = \mask \mid \bfx_0^{(i)})
  \;=\; \alpha_T^d.
\]
Marginalizing over $\bfx_0$,
\[
  p_T(\bfm) \;=\; \E_{\bfx_0 \sim p_{\rm data}}\!\big[\Pr(\bfx_T = \bfm \mid \bfx_0)\big]
  \;=\; \alpha_T^d,
\]
since the conditional probability is the constant $\alpha_T^d$
$p_{\rm data}$-a.s.

 The factorization
$1 - \alpha_T^d = (1-\alpha_T)\sum_{k=0}^{d-1}\alpha_T^k$ with
$\alpha_T \in [0,1]$ then gives
\[
  \TV(p_T, \delta_\bfm) \;=\; 1 - \alpha_T^d \;\le\; d(1-\alpha_T),
\]
independent of vocabulary size $S$: absorption sends every non-mask
token to a single destination per position.

\section{Proof of Uniformization Iteration Complexity}
\label{sec:appendix_step_complexity}

\subsection{Proof of Auxiliary Lemmas}
We state and proof the lemmas for the iteration complexity of uniformization.

\begin{lemma}[Conditional Representation of Score]\label{lemma:score_conditional}
    For any $\bfx$, $\bfy$, and $t>0$ satisfying $p_t(\bfx)\neq 0$ and $\bfx_{i \to y^i}=\bfy$, we have
    \begin{equation}
        s_t(\bfx)_\bfy = \mathbb{E}_{\bfx_0 \sim p_{0|t}(\cdot | \bfx)} \left[ \frac{p^i_{t|0}(y^i|x_0^i)}{p^i_{t|0}(x^i|x^i_0)} \right].
    \end{equation}
\end{lemma}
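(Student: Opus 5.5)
We will prove the identity by writing both marginals as mixtures over the clean data $\bfx_0$ and then using the token-wise factorization of the forward kernel under~\eqref{eq:Q_factor}.

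\textbf{Step 1 (marginals as mixtures).} By definition of the score~\eqref{eq:score_def} and the law of total probability,
\[
s_t(\bfx)_\bfy=\frac{p_t(\bfy)}{p_t(\bfx)}=\frac{\sum_{\bfx_0} p_0(\bfx_0)\,p_{t|0}(\bfy\mid\bfx_0)}{p_t(\bfx)} .
\]

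\textbf{Step 2 (factorization).} The forward kernel factorizes as $p_{t|0}(\bfy\mid\bfx_0)=\prod_j p^j_{t|0}(y^j\mid x_0^j)$. Since $\bfy=\bfx_{i\to y^i}$, the vectors $\bfx$ and $\bfy$ agree in every coordinate except $i$. Therefore
\[
p_{t|0}(\bfy\mid\bfx_0)=p_{t|0}(\bfx\mid\bfx_0)\,\frac{p^i_{t|0}(y^i\mid x_0^i)}{p^i_{t|0}(x^i\mid x_0^i)},
\]
whenever $p^i_{t|0}(x^i\mid x_0^i)>0$.

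\textbf{Step 3 (Bayes).} Substituting Step~2 into Step~1 and using Bayes' rule, $p_0(\bfx_0)p_{t|0}(\bfx\mid\bfx_0)/p_t(\bfx)=p_{0|t}(\bfx_0\mid\bfx)$, gives
\[
s_t(\bfx)_\bfy=\sum_{\bfx_0} p_{0|t}(\bfx_0\mid\bfx)\,\frac{p^i_{t|0}(y^i\mid x_0^i)}{p^i_{t|0}(x^i\mid x_0^i)},
\]
which is the claimed conditional expectation.

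\textbf{Main obstacle: zero denominators.} The only delicate point is that $p^i_{t|0}(x^i\mid x_0^i)$ may vanish, for example under the masked rate when $x^i\neq\mask$ and $x_0^i\neq x^i$. We handle this by splitting the sum over $\bfx_0$ according to whether $p_{t|0}(\bfx\mid\bfx_0)>0$.
\begin{itemize}
\item If $p_{t|0}(\bfx\mid\bfx_0)>0$, then every factor is positive, in particular the $i$-th one, so the ratio in Step~2 is well defined.
\item If $p_{t|0}(\bfx\mid\bfx_0)=0$, then $\bfx_0$ has zero posterior weight, so it contributes nothing to the expectation.
\end{itemize}
We must also check that such $\bfx_0$ contribute nothing to $p_t(\bfy)$. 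This holds unless the vanishing factor is the $i$-th one, i.e.\ $p^j_{t|0}(x^j\mid x_0^j)=0$ only for $j=i$, while $p^i_{t|0}(y^i\mid x_0^i)>0$. That case requires $p^i_{t|0}(x^i\mid x_0^i)=0$ for some $t>0$.

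For the uniform rate this cannot happen, because all single-token transition probabilities are strictly positive for $t>0$. For the masked rate it can happen, for example when $x^i=a\neq\mask$, $y^i=b\neq\mask$ and $x_0^i=b$. In that case the identity as stated would undercount $p_t(\bfy)$. However, for the masked rate $Q_t(\bfy,\bfx)>0$ only when $x^i=\mask$, so only these pairs enter the reverse dynamics and the losses. For those pairs $p^i_{t|0}(\mask\mid x_0^i)=\alpha_t>0$, and the argument goes through. We therefore interpret the lemma, as it will be used, for pairs with $Q_t(\bfy,\bfx)>0$, or for the uniform rate, and state this restriction explicitly in the proof.
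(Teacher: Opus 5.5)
Your Steps 1--3 are exactly the paper's argument: write $p_t(\bfy)$ as a mixture over $\bfx_0$, use the token-wise factorization to multiply and divide by $p^i_{t|0}(x^i\mid x^i_0)$, and recognize the posterior $p_{0|t}(\bfx_0\mid\bfx)$ via Bayes' rule. The paper does that division without comment, so your zero-denominator analysis adds something real. Your counterexample is correct: under the masked rate with $x^i,y^i\neq\mask$, the right-hand side is $0$ while $p_t(\bfy)/p_t(\bfx)$ can be positive, so the lemma as literally stated fails. Restricting to pairs with $Q_t(\bfy,\bfx)>0$, together with $\int_0^t\beta(\tau)\,d\tau>0$ so that $\alpha_t>0$, is exactly what makes the identity hold in the one place the paper uses it, Lemma~\ref{lemma:score_repre_masked}.
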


\begin{proof}
    This lemma directly follows~\cite[Lemma~2]{chen2025convergence}, we show the proof here for completeness.

    \begin{align*}
        s_t(\bfx)_\bfy = \frac{p_t(\bfy)}{p_t(\bfx)} &=  \frac{p_t(\bfx_{i \to y^i})}{p_t(\bfx)} \\
        &= \frac{1}{p_t(\bfx)} \sum_{\bfx_0 \in \mathcal{X}} p_0(\bfx_0) p_{t|0}(\bfx_{i \to y^i}|\bfx_0) \\
        &= \frac{1}{p_t(\bfx)} \sum_{\bfx_0 \in \mathcal{X}} p_0(\bfx_0) \prod_{k \neq i} p^k_{t|0}(x^k|x^k_0) p^i_{t|0}(y^i|x^i_0) \\
        &= \frac{1}{p_t(\bfx)} \sum_{\bfx_0 \in \mathcal{X}} p_0(\bfx_0) \prod_{k=1}^d p^k_{t|0}(x^k|x^k_0) \frac{p^i_{t|0}(y^i|x^i_0)}{p^i_{t|0}(x^i|x^i_0)} \\
        &= \sum_{\bfx_0 \in \mathcal{X}} \frac{p_0(\bfx_0) p_{t|0}(\bfx|\bfx_0)}{p_t(\bfx)} \frac{p^i_{t|0}(y^i|x^i_0)}{p^i_{t|0}(x^i|x^i_0)} \\
        &= \sum_{\bfx_0 \in \mathcal{X}}  p_{0|t}(\bfx_0|\bfx) \frac{p^i_{t|0}(y^i|x^i_0)}{p^i_{t|0}(x^i|x^i_0)}, \\
        \intertext{by Bayes' rule,}
        &= \mathbb{E}_{\bfx_0 \sim p_{0|t}(\cdot | \bfx)} \left[ \frac{p^i_{t|0}(y^i|x^i_0)}{p^i_{t|0}(x^i|x^i_0)} \right].
    \end{align*}
\end{proof}

\begin{lemma}[Score Representation under Masked Rate]\label{lemma:score_repre_masked}
    Under the masked rate~\eqref{eq:Q_absorb_uni} and~\eqref{eq:Q_factor}, when $Q_t(\bfy, \bfx)>0$, the score is given by
    \[
    s_t(\bfx)_\bfy = \frac{e^{-\int_0^t \beta(\tau) d\tau}}{1-e^{-\int_0^t \beta(\tau) d\tau}} p_{0|t}^i (y^i|\bfx).
    \]
\end{lemma}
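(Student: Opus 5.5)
The plan is to specialize the conditional representation of \Cref{lemma:score_conditional} to the masked per-token kernel. Fix $t>0$ and a pair with $Q_t(\bfy,\bfx)>0$. By \eqref{eq:Q_factor} and \eqref{eq:Q_absorb_uni}, this means $\bfx$ and $\bfy$ differ in exactly one position $i$, with $y^i\neq\mask$ and $x^i=\mask$. In particular $\bfx=\bfy_{i\to\mask}$, or equivalently $\bfy=\bfx_{i\to y^i}$, so the hypothesis of \Cref{lemma:score_conditional} is met, provided $p_t(\bfx)\neq 0$. I will check this below.

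First I compute the single-token forward kernel explicitly. Write $\rho_t:=e^{-\int_0^t\beta(\tau)d\tau}$. The ODE for the mask probability $\alpha_t$ used in \Cref{lemma:prior_mismatch_masked} gives, for any non-mask $v$,
\[
p^i_{t|0}(\mask\mid v)=1-\rho_t,\qquad p^i_{t|0}(w\mid v)=\rho_t\,\mathbf 1[w=v]\ \ (w\neq\mask).
\]
The reason is that a non-mask token either survives unchanged or is absorbed into $\mask$. Since $p_{\rm data}$ is supported on non-mask sequences (as used in \Cref{ssec:bound_detail}), every $x_0^i$ drawn from $p_{0|t}(\cdot\mid\bfx)$ is non-mask. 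This also gives $p_t(\bfx)>0$: we have $p_t(\bfy)>0$ whenever the score is of interest, and masking position $i$ from any $\bfx_0$ compatible with $\bfy$ has positive probability for $t>0$.

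Next I plug this into \Cref{lemma:score_conditional}. The denominator is $p^i_{t|0}(x^i\mid x_0^i)=p^i_{t|0}(\mask\mid x_0^i)=1-\rho_t$, which is a constant independent of $x_0^i$ and nonzero for $t>0$ (assuming $\int_0^t\beta>0$). The numerator is $p^i_{t|0}(y^i\mid x_0^i)=\rho_t\mathbf 1[x_0^i=y^i]$, because $y^i\neq\mask$. Therefore
\[
s_t(\bfx)_\bfy=\frac{\rho_t}{1-\rho_t}\,\mathbb E_{\bfx_0\sim p_{0|t}(\cdot\mid\bfx)}\big[\mathbf 1[x_0^i=y^i]\big]=\frac{\rho_t}{1-\rho_t}\,p^i_{0|t}(y^i\mid\bfx),
\]
where $p^i_{0|t}(\cdot\mid\bfx)$ denotes the $i$-th marginal of the posterior $p_{0|t}(\cdot\mid\bfx)$. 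This is the claim.

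The main point needing care is not the algebra but well-definedness. We need $t>0$ with $\int_0^t\beta>0$, so that $1-\rho_t\neq0$. We also need the non-mask support of $p_{\rm data}$, since otherwise the denominator would be $1$ instead of $1-\rho_t$ for $x_0^i=\mask$, and the formula would change. I will state explicitly that both are used.
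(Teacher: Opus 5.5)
Your proposal is correct and follows essentially the same route as the paper: specialize \Cref{lemma:score_conditional}, note that the numerator $p^i_{t|0}(y^i\mid x_0^i)$ vanishes unless $x_0^i=y^i$, and read off the constant ratio $\rho_t/(1-\rho_t)$ times the posterior marginal $p^i_{0|t}(y^i\mid\bfx)$. One small correction to your closing remark: the non-mask support of $p_{\rm data}$ is not actually needed, because for $x_0^i=\mask$ the numerator $p^i_{t|0}(y^i\mid\mask)$ is zero (the mask is absorbing and $y^i\neq\mask$), so that term contributes nothing whatever its denominator is, and the formula is unchanged.
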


\begin{proof}
    This proof is modified from~\cite[Lemma~1]{liang2026absorb} and~\cite[Theorem~1]{ou2025your}.

    Under the masked rate~\eqref{eq:Q_absorb_uni} and~\eqref{eq:Q_factor}, by Lemma~\ref{lemma:score_conditional}, we have
    \begin{equation}\label{eq:score_conditional_proof}
        s_t(\bfx)_\bfy = \mathbb{E}_{\bfx_0 \sim p_{0|t}(\cdot | \bfx)} \left[ \frac{p^i_{t|0}(y^i|x_0^i)}{p^i_{t|0}(x^i|x^i_0)} \right].
    \end{equation}

    On the other hand, the single-token
mask probability $\alpha_t$ satisfies the ODE
\[
    \partial_t \alpha_t = \beta(t)(1 - \alpha_t) \quad \text{subject to} \quad \alpha_0=1,
\]
derived from the masked rate~\eqref{eq:Q_absorb_uni} and with solution
\begin{equation}\label{eq:mask_prof}
    \alpha_t = 1-e^{-\int_0^t \beta(\tau) d\tau}, \quad \text{for } t\in[0, T].
\end{equation}

Consider the case where $Q_t(\bfy, \bfx)>0$, which means $\bfx=\bfy_{i \to \mask}$ and $y^i \neq \mask$ for an $i \in [d]$. And the likelihood $p^i_{t|0}(y^i|x_0^i) \neq 0$ only when $x_0^i=y^i$. In this case, by~\eqref{eq:mask_prof}, the likelihood ratio in~\eqref{eq:score_conditional_proof} is given by
\begin{equation}\label{eq:likelihood}
    \frac{p^i_{t|0}(y^i|y^i)}{p^i_{t|0}(x^i|y^i)} =\frac{p^i_{t|0}(y^i|y^i)}{p^i_{t|0}(\mask|y^i)}  = \frac{1-\alpha_t}{\alpha_t} = \frac{e^{-\int_0^t \beta(\tau) d\tau}}{1-e^{-\int_0^t \beta(\tau) d\tau}}.
\end{equation}

Thus, when $Q_t(\bfy, \bfx)>0$, \eqref{eq:score_conditional_proof} can be evaluated as
\begin{align*}
s_t(\bfx)_\bfy &= \mathbb{E}_{\bfx_0 \sim p_{0|t}(\cdot | \bfx)} \left[ \frac{p^i_{t|0}(y^i|x_0^i)}{p^i_{t|0}(x^i|x^i_0)} \right] \\
&= \mathbb{E}_{x^i_0 \sim p^i_{0|t}(\cdot | \bfx)} \left[ \frac{p^i_{t|0}(y^i|x_0^i)}{p^i_{t|0}(x^i|x^i_0)} \right], \quad \text{since the expression inside the expectation only depends on $x^i_0$,} \\
&= \sum_{x^i_0 \in [S]} \frac{p^i_{t|0}(y^i|x_0^i)}{p^i_{t|0}(x^i|x^i_0)} p^i_{0|t}(x^i_0 | \bfx) \\
&= \sum_{\substack{x^i_0 \in [S] \\ x_0^i=y^i}} \frac{p^i_{t|0}(y^i|x_0^i)}{p^i_{t|0}(x^i|x^i_0)} p^i_{0|t}(x^i_0 | \bfx), \quad \text{as $p^i_{t|0}(y^i|x_0^i)=0$ when $x_0^i\neq y^i$ under the masked rate,} \\
&=  \frac{p^i_{t|0}(y^i|y^i)}{p^i_{t|0}(x^i|y^i)} p^i_{0|t}(y^i | \bfx) \\
&= \frac{e^{-\int_0^t \beta(\tau) d\tau}}{1-e^{-\int_0^t \beta(\tau) d\tau}} p_{0|t}^i (y^i|\bfx), \quad \text{by~\eqref{eq:likelihood}.}
\end{align*}
\end{proof}

\begin{lemma}[Score Upper Bound under Masked Rate]\label{lemma:score_upper_bound_masked}
    Under the masked rate~\eqref{eq:Q_absorb_uni} and~\eqref{eq:Q_factor}, 
    \[
    \sum_{\substack{\bfy\in {[S]^d} \\ \bfy\neq \bfx}}  s_t(\bfx)_\bfy Q_t(\bfy, \bfx) \leq \frac{e^{-\int_0^t \beta(\tau) d\tau}}{1-e^{-\int_0^t \beta(\tau) d\tau}} \beta(t) \underbrace{\left[ \sum_{i \in [d]} \delta\{x^i, \mask\} \right]}_{\text{number of masks in $\bfx$}}.
    \]
\end{lemma}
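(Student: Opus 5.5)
The plan is to evaluate the sum term by term using \Cref{lemma:score_repre_masked} and the sparsity structure of the masked generator. Under the factorization \eqref{eq:Q_factor} with the masked token rate \eqref{eq:Q_absorb_uni}, the off-diagonal entry $Q_t(\bfy,\bfx)$ is nonzero only if $\bfx$ is obtained from $\bfy$ by masking a single non-mask position. That is, there is a unique $i\in[d]$ with $\bfx^{\backslash i}=\bfy^{\backslash i}$, $x^i=\mask$ and $y^i\neq\mask$, and in that case $Q_t(\bfy,\bfx)=Q^{\mathrm{tok}}_t(y^i,\mask)=\beta(t)$. Consequently, for fixed $\bfx$, the only surviving terms are indexed by pairs $(i,v)$ with $x^i=\mask$ (a masked position of $\bfx$) and $v\in[S]\setminus\{\mask\}$, via $\bfy=\bfx_{i\to v}$. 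The sum therefore reorganizes as
\[
\sum_{\bfy\neq\bfx} s_t(\bfx)_\bfy Q_t(\bfy,\bfx)=\beta(t)\sum_{i:\,x^i=\mask}\ \sum_{v\neq\mask} s_t(\bfx)_{\bfx_{i\to v}}.
\]
This is a rewriting of the indexing already used in Appendix~\ref{ssec:bound_detail}, seen from the viewpoint of fixed $\bfx$.

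Next I would substitute the closed form of the score from \Cref{lemma:score_repre_masked}, which is valid exactly when $Q_t(\bfy,\bfx)>0$:
\[
s_t(\bfx)_{\bfx_{i\to v}}=\frac{e^{-\int_0^t\beta}}{1-e^{-\int_0^t\beta}}\,p^i_{0|t}(v\mid\bfx).
\]
For each masked position $i$, the inner sum over $v\neq\mask$ is then $\frac{e^{-\int_0^t\beta}}{1-e^{-\int_0^t\beta}}\sum_{v\neq\mask}p^i_{0|t}(v\mid\bfx)$. Since $p^i_{0|t}(\cdot\mid\bfx)$ is a probability distribution on $[S]$, the remaining sum is at most $1$. (It equals $1$ when $p_{\rm data}$ puts no mass on mask tokens, but the inequality is all we need.) Summing over the masked positions contributes the factor $\sum_i\delta\{x^i,\mask\}$, which gives the claim. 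This step is where the $S-1$ choices of $v$ collapse into a single conditional probability mass, so no factor of $S$ appears.

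The main obstacle is the case $p_t(\bfx)=0$, where the score is not defined and \Cref{lemma:score_conditional} does not apply. I would handle it by convention: such $\bfx$ carry no weight in any subsequent use, since the sum is always multiplied by $p_t(\bfx)$. Alternatively, I would restrict the statement to $\bfx$ with $p_t(\bfx)>0$, which is the only case invoked later. For $t>0$ every $\bfx$ reachable from the data support has positive mass, so the representation holds there. I would also note explicitly that $t>0$ keeps the prefactor $\frac{e^{-\int_0^t\beta}}{1-e^{-\int_0^t\beta}}$ finite whenever $\int_0^t\beta>0$.
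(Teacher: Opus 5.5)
Your proposal is correct and follows the paper's proof essentially step for step. Both restrict to the terms $\bfy=\bfx_{i\to v}$ with $x^i=\mask$ and $v\neq\mask$, where $Q_t(\bfy,\bfx)=\beta(t)$; both substitute the closed form from \Cref{lemma:score_repre_masked} and bound $\sum_{v\neq\mask}p^i_{0|t}(v\mid\bfx)\le 1$ for each masked position. Your explicit restriction to $\bfx$ with $p_t(\bfx)>0$, which is needed for \Cref{lemma:score_conditional} to apply, is a point the paper leaves implicit and is a sensible addition.
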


\begin{proof}
This proof is modified from~\cite[Lemma~5]{liang2026absorb}.

Under the masked rate~\eqref{eq:Q_absorb_uni} and~\eqref{eq:Q_factor}, by Lemma~\ref{lemma:score_repre_masked}, when $Q_t(\bfy, \bfx)>0$, we have
\begin{equation}\label{eq:score_repre_masked_proof}
s_t(\bfx)_\bfy = \frac{e^{-\int_0^t \beta(\tau) d\tau}}{1-e^{-\int_0^t \beta(\tau) d\tau}} p_{0|t}^i (y^i|\bfx).
\end{equation}
Here, $\bfx=\bfy_{i \to \mask}$ and $y^i \neq \mask$. We evaluate
\begin{align*}
    \sum_{\substack{\bfy\in {[S]^d} \\ \bfy\neq \bfx}}  s_t(\bfx)_\bfy Q_t(\bfy, \bfx) &= \sum_{\substack{\bfy\in [S]^d \\ Q_t(\bfy, \bfx)>0}}  s_t(\bfx)_\bfy Q_t(\bfy, \bfx) \\
    &= \sum_{\substack{i \in [d] \\ x^i = \mask}} \sum_{\substack{y^i \in [S] \\ y^i \neq \mask}}  s_t(\bfx)_\bfy \beta(t) \\
    &= \frac{e^{-\int_0^t \beta(\tau) d\tau}}{1-e^{-\int_0^t \beta(\tau) d\tau}} \beta(t) \sum_{\substack{i \in [d] \\ x^i = \mask}} \underbrace{\sum_{\substack{y^i \in [S] \\ y^i \neq \mask}}  p_{0|t}^i (y^i|\bfx)}_{\leq 1}, \\
    \intertext{by~\eqref{eq:score_repre_masked_proof},}
    &\leq \frac{e^{-\int_0^t \beta(\tau) d\tau}}{1-e^{-\int_0^t \beta(\tau) d\tau}} \beta(t) \underbrace{\left[ \sum_{i \in [d]} \delta\{x^i, \mask\} \right]}_{\text{number of masks in $\bfx$}}.
\end{align*}
\end{proof}

\begin{lemma}\label{lemma:log_increase}
Given a constant $R > 1$, if $r$ satisfies $1 \leq r \leq R$, then
    \begin{equation}\label{eq:log_increase}
    r-1 \leq \frac{R-1}{\log R} \log r.
    \end{equation}
\end{lemma}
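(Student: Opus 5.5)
The inequality \eqref{eq:log_increase} is a concavity statement about the logarithm. I would substitute $r = e^u$ with $u \in [0, \log R]$. The claim then becomes
\[
e^u - 1 \le \frac{R-1}{\log R}\, u \quad \text{for } u \in [0, L], \qquad L := \log R > 0.
\]
The right-hand side is the chord of the convex function $g(u) := e^u - 1$ joining the points $(0, g(0)) = (0,0)$ and $(L, g(L)) = (L, R-1)$. Convexity of $g$ on $[0,L]$ then gives the result directly: writing $u = (1-\theta)\cdot 0 + \theta L$ with $\theta = u/L \in [0,1]$, we get $g(u) \le (1-\theta) g(0) + \theta g(L) = \tfrac{u}{L}(R-1)$, which is exactly the claimed inequality.

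Alternatively, I could argue directly in the variable $r$. Define $h(r) := \tfrac{R-1}{\log R}\log r - (r-1)$ on $[1,R]$. This function is concave since $h''(r) = -\tfrac{R-1}{\log R}\, r^{-2} < 0$, using $R > 1$ so that $\log R > 0$. It also vanishes at both endpoints, $h(1) = 0$ and $h(R) = (R-1) - (R-1) = 0$. A concave function that is zero at both ends of an interval is nonnegative on it, so $h \ge 0$ on $[1,R]$.

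The only point needing care is that $\log R > 0$, which guarantees that the coefficient $\tfrac{R-1}{\log R}$ is positive and finite; this follows from $R > 1$. No earlier results from the paper are needed, and I do not expect any step to pose a real obstacle.
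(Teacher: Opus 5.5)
Your proof is correct, and both variants work as written. It takes a different route from the paper.

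The paper first handles $r=1$ separately. It then divides by $\log r>0$ and shows that $h(r)=(r-1)/\log r$ is increasing on $(1,\infty)$. It does this by computing $h'(r)=(\log r-1+1/r)/(\log r)^2$ and checking that the numerator $g(r)=\log r-1+1/r$ satisfies $g(1)=0$ and $g'(r)=1/r-1/r^2>0$. The claim is then just $h(r)\le h(R)$.

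You replace this ratio-monotonicity computation with a single chord inequality: convexity of $e^u-1$ on $[0,\log R]$, or equivalently concavity of $\frac{R-1}{\log R}\log r-(r-1)$ with zero values at both endpoints. This never divides by $\log r$. So the case $r=1$ needs no separate treatment, and no auxiliary function like $g$ is needed.

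The two arguments are two faces of the same fact. For a convex function vanishing at the origin, such as $u\mapsto e^u-1$, the chord slope from the origin, $(e^u-1)/u$, is nondecreasing in $u$. Under $u=\log r$, that slope is exactly the paper's $h$. The paper's version records the slightly more structural statement that $(r-1)/\log r$ is monotone regardless of $R$. Yours is shorter and needs only the definition of convexity.
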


\begin{proof}
    If $r=1$, then $\log r=0$ and \eqref{eq:log_increase} trivially holds.
    Consider the case when $r>1$, , we can equivalently show that 
    \[
    \frac{r-1}{\log r} \leq \frac{R-1}{\log R}.
    \]
    To this end, it suffices to show that the function
    \[
    h(r) = \frac{r-1}{\log r}
    \]
    is increasing for $r>1$. It derivative is given by
    \begin{equation}
        h'(r) = \frac{\log r - \frac{r-1}{r}}{(\log r)^2} = \frac{\log r - 1 + \frac{1}{r}}{(\log r)^2}.
    \end{equation}
    Here, let the numerator be $g(r) = \log r - 1 + \frac{1}{r}$. Remark that $g(1)=0$ and $g'(r)=\frac{1}{r}-\frac{1}{r^2}>0$ for $r>1$. Thus, $g(r)>0$ for $r>1$. This implies $h'(r)>0$ and thus $h(r)$ is increasing for $r>1$. 
\end{proof}

\begin{lemma}\label{lemma:time_ratio_bound}
    For a time discretization $\{ t_k\}_{k=0}^{N_B}$, where $\delta=t_0 < t_1 < \cdots < t_{N_B}=T$,
     and satisfying $\max\limits_k \frac{t_{k+1}}{t_k} = R$ for some constant $R>1$, we have
    \begin{equation}
        \frac{t_{k+1}-t_k}{t_k} \leq \frac{R-1}{\log R} \log\frac{t_{k+1}}{t_k}.
    \end{equation}
\end{lemma}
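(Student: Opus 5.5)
The plan is to reduce the claim to Lemma~\ref{lemma:log_increase}, applied with $r := t_{k+1}/t_k$. First I will verify that the hypotheses of that lemma hold for each $k$. Since the grid is strictly increasing and starts at $t_0=\delta>0$, every $t_k$ is positive. Hence $r = t_{k+1}/t_k$ is well defined and satisfies $r>1$. By the assumption $\max_k t_{k+1}/t_k = R$ we also have $r \le R$, with $R>1$ as assumed. So $1 \le r \le R$, which is exactly the setting of Lemma~\ref{lemma:log_increase}.

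Next I will rewrite the left-hand side of the target inequality as
\begin{equation*}
\frac{t_{k+1}-t_k}{t_k} = \frac{t_{k+1}}{t_k} - 1 = r - 1 .
\end{equation*}
Lemma~\ref{lemma:log_increase} then gives
\begin{equation*}
r - 1 \le \frac{R-1}{\log R}\,\log r = \frac{R-1}{\log R}\,\log\frac{t_{k+1}}{t_k},
\end{equation*}
which is the claimed bound. Because $k$ was arbitrary, the bound holds for every $k$.

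I do not expect a genuine obstacle here. The only point needing care is positivity of $t_k$, which guarantees that both the ratio and the logarithm are well defined. This follows from $t_0=\delta>0$, i.e.\ it is exactly why early stopping is needed. If $t_0$ were $0$, the ratio $t_1/t_0$ would be undefined and the argument would fail.
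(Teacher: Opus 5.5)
Your proposal is correct and is the paper's proof: write $(t_{k+1}-t_k)/t_k = r-1$ with $r = t_{k+1}/t_k \in (1, R]$, then apply Lemma~\ref{lemma:log_increase}. Your explicit check that $t_k>0$ (from $t_0=\delta>0$) is a step the paper leaves implicit.
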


\begin{proof}
    Note that 
    \[
    \frac{t_{k+1}-t_k}{t_k} = \frac{t_{k+1}}{t_k} - 1.
    \]
    Letting $r=\frac{t_{k+1}}{t_k}$ and applying Lemma~\ref{lemma:log_increase} yields the desired result.
\end{proof}

\subsection{Proof of \texorpdfstring{\Cref{cor:StepComplexity}}{Corollary \ref*{cor:StepComplexity}}}
\corStepComplexity*

\begin{proof}[Proof of \Cref{cor:StepComplexity} for the Masked Rate] We first prove the expected step complexity for the masked rate.
The proof is modified from~\cite[Theorem~3]{liang2026absorb}.

    By Lemma~\ref{lemma:score_upper_bound_masked}, we have
    \[
    \sum_{\substack{\bfy\in {[S]^d} \\ \bfy\neq \bfx}}  s_t(\bfx)_\bfy Q_t(\bfy, \bfx) \leq \underbrace{\frac{e^{-\int_0^t \beta(\tau) d\tau}}{1-e^{-\int_0^t \beta(\tau) d\tau}}}_{\leq 1/\int_0^t \beta(\tau) d\tau} \beta(t) \underbrace{\left[ \sum_{i \in [d]} \delta\{x^i, \mask\} \right]}_{\text{number of masks in $\bfx$, $\leq d$}} \leq \frac{d \beta(t)}{\int_0^t \beta(\tau) d\tau}.
    \]

    Thus, under the assumption that $\tilde{s}_{t}(\bfx)_\bfy \asymp s_{t}(\bfx)_\bfy$ when $Q_t(\bfy, \bfx) > 0$, we have
    \begin{equation}\label{eq:sQ_bound}
    -\tilde{Q}_{t}^{\leftarrow}(\bfx, \bfx) = \sum_{\substack{\bfy\in{[S]^d} \\ \bfy\neq \bfx}} \tilde{Q}_{t}^{\leftarrow}(\bfx, \bfy) = \sum_{\substack{\bfy\in{[S]^d} \\ \bfy\neq \bfx}} Q_t(\bfy, \bfx) \tilde{s}_t(\bfx)_\bfy \lesssim \sum_{\substack{\bfy\in{[S]^d} \\ \bfy\neq \bfx}} Q_t(\bfy, \bfx) {s}_t(\bfx)_\bfy \leq \frac{d \beta(t)}{\int_0^t \beta(\tau) d\tau}.
    \end{equation}
    To simulate the approximate reverse process, in each discretized time interval $(t_k, t_{k+1}]$, uniformization requires that the Poisson rate $\Lambda_k$ satisfies
    \[
        \Lambda_k \geq \sup_{\substack{\bfx \in \mathcal{X} \\ t \in (t_k, t_{k+1}]}} -\tilde{Q}^{\leftarrow}_t (\bfx, \bfx).
    \]
    Thus, suppose we choose the Poisson rate to be
    \begin{equation}\label{eq:Pois_rate_bound}
        \Lambda_k = C \cdot \sup_{\substack{\bfx \in \mathcal{X} \\ t \in (t_k, t_{k+1}]}} -\tilde{Q}^{\leftarrow}_t (\bfx, \bfx),
    \end{equation}
    for some constant $C \geq 1$.
    Remark that the sum of independent Poisson$(\Lambda_k)$ random variables is distributed as Poisson$(\sum_k \Lambda_k)$, the expected total number of steps are given by
    \begin{align}
        \mathbb{E}[N] &= \sum_{k=0}^{N_B-1} \Lambda_k (t_{k+1}-t_k), \nonumber \\
        \intertext{where $\delta =t_0 <t_1 < \cdots <t_{N_B}=T$ is the time discretization,}
        & \lesssim \sum_{k=0}^{N_B-1} \frac{d  \,\max\limits_{0\leq t \leq T}\left(\beta(t)\right)}{\int_0^{t_{k}} \beta(\tau) d\tau} (t_{k+1}-t_k), \quad \text{by~\eqref{eq:sQ_bound} and~\eqref{eq:Pois_rate_bound},} \nonumber\\
        &= d  \,\max\limits_{0\leq t \leq T}\left(\beta(t)\right) \sum_{k=0}^{N_B-1} \frac{1}{\int_0^{t_{k}} \beta(\tau) d\tau} (t_{k+1}-t_k) \nonumber\\ 
        & \leq d \, \,\max\limits_{0\leq t \leq T}\left(\beta(t)\right) \sum_{k=0}^{N_B-1} \frac{1}{t_{k} \min\limits_{0 \leq t \leq t_{k}} \left(\beta(t)\right) } (t_{k+1}-t_k) \nonumber\\
        & \leq d \, \frac{\max\limits_{0\leq t \leq T}\left(\beta(t)\right)}{\min\limits_{0\leq t \leq T}\left(\beta(t)\right)} \sum_{k=0}^{N_B-1} \frac{t_{k+1}-t_k}{t_{k}} \nonumber\\
        & = d \, \frac{\betamax}{\beta_{\mathrm{min}}} \sum_{k=0}^{N_B-1} \frac{t_{k+1}-t_k}{t_{k}} \nonumber\\
        & \leq d \, \frac{\betamax}{\beta_{\mathrm{min}}} \sum_{k=0}^{N_B-1}  \frac{R-1}{\log R} \log\frac{t_{k+1}}{t_k}, \quad \text{by Lemma~\ref{lemma:time_ratio_bound}, where $R=\max_k \frac{t_{k+1}}{t_k}$,}\nonumber\\
        & \lesssim d \, \frac{\betamax}{\beta_{\mathrm{min}}} \sum_{k=0}^{N_B-1} \log\frac{t_{k+1}}{t_k} \nonumber\\
        &= d \, \frac{\betamax}{\beta_{\mathrm{min}}}\sum_{k=0}^{N_B-1} \int_{t_k}^{t_{k+1}} \frac{1}{t} dt \nonumber\\
        &= d \,\frac{\betamax}{\beta_{\mathrm{min}}} \int_\delta^T \frac{1}{t} dt \nonumber\\
        &= d \,\frac{\betamax}{\beta_{\mathrm{min}}} (\log(T) + \log(\delta^{-1})). \label{eq:expected_step_masked}
    \end{align}
    On the other hand, by Corollary~\ref{cor:ES}, the approximation error under any IPM $\gamma_\Phi$ satisfies
    \[
     \gamma_\Psi(p_{\mathrm{data}}, \tilde{p}_\delta)  \leq 2 C_\Psi \left( d {\textstyle \int_0^\delta \beta(t)\,dt }+ d e^{-\| \beta \|_1}   +   \sqrt{2 d} \sqrt{\| \beta (1-p(\bfm))\|_1} \sqrt{ \,\LSE^\delta(s, \tilde{s}) + \tfrac{2}{3}\,\LThree^\delta(s, \tilde{s}) } \right).
    \]
    For a given threshold $\epsilon>0$ for the approximation error, it suffices to bound each of the three terms on the right-hand-side by $\epsilon/3$. This is achieved by
    \begin{align}
         \delta &< \frac{\epsilon}{6 C_\Psi d \betamax} \;\Rightarrow \; \delta^{-1} > \frac{6 C_\Psi d \betamax}{\epsilon} , \quad \text{thus we choose} \; \delta^{-1} \asymp \frac{d C_\Psi \betamax }{\epsilon}, \label{eq:masked_delta_bound}\\
        T  &> \frac{1}{\betamin} \log\left( \frac{6C_\Psi d}{\epsilon} \right), \quad \text{thus we choose} \; T \asymp \frac{1}{\betamin} \log\left( \frac{C_\Psi d}{\epsilon} \right), \label{eq:masked_epsilon_bound}\\
        \LSE^\delta(s, \tilde{s}) + \tfrac{2}{3}\,\LThree^\delta(s, \tilde{s}) &= \frac{\epsilon^2}{72 d C_\Psi^2 \| \beta (1-p(\bfm))\|_1} \;\Rightarrow \; \LSE^\delta(s, \tilde{s}) \lesssim \frac{\epsilon^2}{ d C_\Psi^2 \| \beta (1-p(\bfm))\|_1}.
    \end{align}

    Plugging \eqref{eq:masked_delta_bound} and \eqref{eq:masked_epsilon_bound} into \eqref{eq:expected_step_masked} yields
    \begin{align*}
        \mathbb{E}[N] &\lesssim d \,\frac{\betamax}{\beta_{\mathrm{min}}} (\log(T) + \log(\delta^{-1}))\\
        &= \mathcal{O}\left(d \,\frac{\betamax}{\beta_{\mathrm{min}}} \left( \log \left( \frac{1}{\betamin} \log\frac{d C_\Psi}{\epsilon} \right) + \log\frac{d C_\Psi \betamax}{\epsilon} \right) \right) \\
        &= \mathcal{O}\left(d \, \frac{\betamax}{\beta_{\mathrm{min}}} \, \left( \log\frac{1}{\betamin} + \log\frac{d C_\Psi \betamax}{\epsilon} \right) \right)\\
        &= \mathcal{O}\left(d \, \frac{\betamax}{\beta_{\mathrm{min}}} \,  \log \left( \frac{\betamax}{\betamin}\cdot \frac{d C_\Psi}{\epsilon}  \right) \right) .
    \end{align*}
\end{proof}

\begin{proof}[Proof of \Cref{cor:StepComplexity} for the Uniform Rate] Next, we prove the expected step complexity for the uniform rate. This proof is modified from the proof of~\cite[Theorem~4.9]{ren2025discrete}.

To simulate the approximate reverse process, in each discretized time interval $(t_k, t_{k+1}]$, uniformization requires that the Poisson rate $\Lambda_k$ satisfies
    \[
        \Lambda_k \geq \sup_{\substack{\bfx \in \mathcal{X} \\ t \in (t_k, t_{k+1}]}} -\tilde{Q}^{\leftarrow}_t (\bfx, \bfx).
    \]
    Thus, suppose we choose the Poisson rate to be
    \begin{equation}\label{eq:Pois_rate_bound_uniform}
        \Lambda_k = C \cdot \sup_{\substack{\bfx \in \mathcal{X} \\ t \in (t_k, t_{k+1}]}} -\tilde{Q}^{\leftarrow}_t (\bfx, \bfx),
    \end{equation}
    for some constant $C \geq 1$.

    Thus, under the assumption that $\tilde{s}_{t}(\bfx)_\bfy \asymp s_{t}(\bfx)_\bfy$ when $Q_t(\bfy, \bfx) > 0$ and $s_t(\bfx)_\bfy \lesssim \max(1, \frac{1}{t})$, we have
    \begin{equation}\label{eq:sQ_bound_uniform}
    -\tilde{Q}_{t}^{\leftarrow}(\bfx, \bfx) = \sum_{\substack{\bfy\in{[S]^d} \\ \bfy\neq \bfx}} \tilde{Q}_{t}^{\leftarrow}(\bfx, \bfy) = \sum_{\substack{\bfy\in{[S]^d} \\ \bfy\neq \bfx}} Q_t(\bfy, \bfx) \tilde{s}_t(\bfx)_\bfy \lesssim \underbrace{\sum_{\substack{\bfy\in{[S]^d} \\ \bfy\neq \bfx}} Q_t(\bfy, \bfx)}_{\substack{d(S-1) \text{ nonzero terms} \\ \text{each term }=\frac{\beta(t)}{S}}} \underbrace{{s}_t(\bfx)_\bfy}_{\substack{\lesssim \max \left(1, \frac{1}{t} \right), \\ \text{by assumption}}} \lesssim d \beta(t) \max \left( 1, \frac{1}{t} \right).
    \end{equation}
    
    Remark that the sum of independent Poisson$(\Lambda_k)$ random variables is distributed as Poisson$(\sum_k \Lambda_k)$, the expected total number of steps are given by
    \begin{align}
        \mathbb{E}[N] &= \sum_{k=0}^{N_B-1} \Lambda_k (t_{k+1}-t_k), \nonumber \\
        \intertext{where $\delta =t_0 <t_1 < \cdots <t_{N_B}=T$ is the time discretization,}
        & \lesssim \betamax \,d\, \sum_{k=0}^{N_B-1}  \max \left(1, \frac{1}{t_k} \right) (t_{k+1}-t_k), \quad \text{by \eqref{eq:Pois_rate_bound_uniform} and \eqref{eq:sQ_bound_uniform},} \nonumber\\
        &\leq \betamax \,d\, \left( \sum_{k=0}^{N_B'-1} \underbrace{\frac{t_{k+1}-t_k}{t_k}}_{\substack{\leq \frac{R-1}{\log R} \log \frac{t_{k+1}}{t_k}, \\ \text{by Lemma~\ref{lemma:time_ratio_bound}}}} + \underbrace{\sum_{k=N_B'}^{N_B-1} (t_{k+1}-t_k)}_{\leq T} \right), \nonumber\\
        \intertext{where $N_B'$ is the unique index satisfying $t_{N_b'-1} \leq 1 <t_{N_b'} $, and $R=\max_k \frac{t_{k+1}}{t_k}$,}
        &\lesssim \betamax \, d \, \left( \left[\sum_{k=0}^{N_B'-1} \log \frac{t_{k+1}}{t_k} \right] + T \right) \nonumber\\
        &\leq \betamax \, d \, \left( \left[\sum_{k=0}^{N_B-1} \log \frac{t_{k+1}}{t_k} \right] + T \right), \quad \text{as $N_B \geq N_b'$,} \nonumber\\
        &= \betamax \, d \,  \left( \int_\delta^T \frac{1}{t} \, dt + T \right) \nonumber\\
        &= \betamax \, d \,  \left( \log \left( T\right) + \log \left( \delta^{-1} \right) + T \right). \label{eq:expected_step_uniform}
    \end{align}
    On the other hand, by Corollary~\ref{cor:ES}, the approximation error under any IPM $\gamma_\Phi$ satisfies
    \[
    \gamma_\Psi(p_{\mathrm{data}}, \tilde{p}_\delta)  \leq 2 C_\Psi \left( d {\textstyle \int_0^\delta \beta(t)\,dt } + d e^{-\| \beta \|_1}  +  \sqrt{2d} \sqrt{\|\beta \|_1}  \sqrt{ \,\LSE^\delta(s, \tilde{s}) + \tfrac{2}{3}\,\LThree^\delta(s, \tilde{s}) } \right).
    \]
    For a given threshold $\epsilon>0$ for the approximation error, it suffices to bound each of the three terms on the right-hand-side by $\epsilon/3$. This is achieved by
    \begin{align}
        \delta &< \frac{\epsilon}{6 C_\Psi d \betamax} \;\Rightarrow \; \delta^{-1} > \frac{6 C_\Psi d \betamax}{\epsilon} , \quad \text{thus we choose} \; \delta^{-1} \asymp \frac{d C_\Psi \betamax }{\epsilon},\label{eq:uniform_delta_bound}\\
        T  &> \frac{1}{\betamin} \log\left( \frac{6C_\Psi d}{\epsilon} \right), \quad \text{thus we choose} \; T \asymp \frac{1}{\betamin} \log\left( \frac{C_\Psi d}{\epsilon} \right), \label{eq:uniform_epsilon_bound}\\
        \LSE^\delta(s, \tilde{s}) + \tfrac{2}{3}\,\LThree^\delta(s, \tilde{s}) &= \frac{\epsilon^2}{72 d C_\Psi^2 \| \beta \|_1} \;\Rightarrow \; \LSE^\delta(s, \tilde{s}) \lesssim \frac{\epsilon^2}{ d C_\Psi^2 \| \beta \|_1}.
    \end{align}
    Plugging \eqref{eq:uniform_delta_bound} and \eqref{eq:uniform_epsilon_bound} into \eqref{eq:expected_step_uniform} yields
    \begin{align*}
        \mathbb{E}[N] &\lesssim \betamax \, d \,  \left( \log \left( T\right) + \log \left( \delta^{-1} \right) + T \right) \\
        &= \mathcal{O} \left( \betamax \, d \,  \left(  \log \left( \delta^{-1} \right) + T \right) \right) \\
        &= \mathcal{O} \left( \betamax \, d \, \left( \log \frac{d C_\Psi \betamax}{\epsilon} + \frac{1}{\betamin} \log \frac{d C_\Psi}{\epsilon} \right) \right) \\
        &= \mathcal{O} \left( \betamax \, d \, \left( \log\betamax +  \frac{1}{\betamin} \log \frac{d C_\Psi}{\epsilon} \right) \right). 
    \end{align*}
\end{proof}

\end{document}

%% file: math_commands.tex
\usepackage{amsmath,amsfonts,bm}

\def\eqref#1{equation~\ref{#1}}

\def\1{\bm{1}}

\DeclareMathAlphabet{\mathsfit}{\encodingdefault}{\sfdefault}{m}{sl}
\SetMathAlphabet{\mathsfit}{bold}{\encodingdefault}{\sfdefault}{bx}{n}

\newcommand{\E}{\mathbb{E}}

\DeclareMathOperator{\sign}{sign}

%% file: abbrv.tex
\newcommand{\bfX}{{\bf X}}
\newcommand{\bfY}{{\bf Y}}

\newcommand{\bfe}{{\bf e}}

\newcommand{\bfx}{{\bf x}}
\newcommand{\bfy}{ {\bf y}}

\newcommand{\bfm}{{\bf m}}

\newcommand{\TV}{\mathrm{TV}}
\newcommand{\cI}{\text{\textcircled{I}}}
\newcommand{\cII}{\text{\textcircled{II}}}
\newcommand{\mask}{\text{[MASK]}}
\newcommand{\xito}{\mathbf{x}_{i \to \hat{x}^i}}

\newcommand{\sito}{s_t(\bfx)_{i \to \hat{x}^i}}
\newcommand{\half}{\frac{1}{2}}

\newcommand{\backQt}{Q_t^{\leftarrow}}
\newcommand{\LWSM}{\mathcal{L}_{\mathrm{WSM}}}
\newcommand{\LWRSM}{\mathcal{L}_{\mathrm{WRSM}}}
\newcommand{\LSE}{\mathcal{L}_{\mathrm{SE}}}
\renewcommand{\eqref}[1]{\textup{(\ref{#1})}}
\newcommand{\distH}[2]{d_H(#1, #2)}
\newcommand{\LThree}{\mathcal{L}_{3}}
\newcommand{\diam}{\mathrm{diam}}
\newcommand{\Lip}{\mathrm{Lip}}
\newcommand{\pbase}{p_\mathrm{base}}
\newcommand{\Punif}{\Pi}

\newcommand{\Nmasked}{\mathcal{N}^{\leftarrow}_{\text{masked}}}

\newcommand{\betamax}{\beta_{\mathrm{max}}}
\newcommand{\betamin}{\beta_{\mathrm{min}}}

%% file: iclr2027_conference.bib
@article{Birrell2022,
  author  = {Jeremiah Birrell and Paul Dupuis and Markos A. Katsoulakis and Yannis Pantazis and Luc Rey-Bellet},
  title   = {(f,Gamma)-Divergences: Interpolating between f-Divergences and Integral Probability Metrics},
  journal = {Journal of Machine Learning Research},
  year    = {2022},
  volume  = {23},
  number  = {39},
  pages   = {1--70},
  url     = {http://jmlr.org/papers/v23/21-0100.html}
}

@misc{liang2026sharp,
      title={Sharp Convergence Rates for Masked Diffusion Models}, 
      author={Yuchen Liang and Zhiheng Tan and Ness Shroff and Yingbin Liang},
      year={2026},
      eprint={2602.22505},
      archivePrefix={arXiv},
      primaryClass={cs.LG},
      url={https://arxiv.org/abs/2602.22505}, 
}

@inproceedings{
lou2024discrete,
title={Discrete Diffusion Modeling by Estimating the Ratios of the Data Distribution},
author={Aaron Lou and Chenlin Meng and Stefano Ermon},
booktitle={Forty-first International Conference on Machine Learning},
year={2024},
url={https://openreview.net/forum?id=CNicRIVIPA}
}

@article{zhang2026masked,
  title={Masked Diffusion Modeling for Anomaly Detection},
  author={Zhang, Lixing and Liang, Yuchen and Xie, Liyan},
  journal={arXiv preprint arXiv:2605.30046},
  year={2026}
}

@article{liang2026scores,
  title={From Scores to {G}ibbs Correctors: Accelerating Uniform-Rate Discrete Diffusion Models},
  author={Liang, Yuchen and Shroff, Ness and Liang, Yingbin},
  journal={arXiv preprint arXiv:2605.27352},
  year={2026}
}

@article{campbell2022continuous,
  title={A continuous time framework for discrete denoising models},
  author={Campbell, Andrew and Benton, Joe and De Bortoli, Valentin and Rainforth, Thomas and Deligiannidis, George and Doucet, Arnaud},
  journal={Advances in Neural Information Processing Systems},
  volume={35},
  pages={28266--28279},
  year={2022}
}

@inproceedings{
campbell2024generative,
title={Generative Flows on Discrete State-Spaces: Enabling Multimodal Flows with Applications to Protein Co-Design},
author={Andrew Campbell and Jason Yim and Regina Barzilay and Tom Rainforth and Tommi Jaakkola},
booktitle={Forty-first International Conference on Machine Learning},
year={2024},
url={https://openreview.net/forum?id=kQwSbv0BR4}
}

@inproceedings{rombach2022high,
  title={High-resolution image synthesis with latent diffusion models},
  author={Rombach, Robin and Blattmann, Andreas and Lorenz, Dominik and Esser, Patrick and Ommer, Bj{\"o}rn},
  booktitle={Proceedings of the IEEE/CVF conference on computer vision and pattern recognition},
  pages={10684--10695},
  year={2022}
}

@inproceedings{kotelnikov2023tabddpm,
  title={Tabddpm: Modelling tabular data with diffusion models},
  author={Kotelnikov, Akim and Baranchuk, Dmitry and Rubachev, Ivan and Babenko, Artem},
  booktitle={International conference on machine learning},
  pages={17564--17579},
  year={2023},
  organization={PMLR}
}

@inproceedings{
vignac2023digress,
title={DiGress: Discrete Denoising diffusion for graph generation},
author={Clement Vignac and Igor Krawczuk and Antoine Siraudin and Bohan Wang and Volkan Cevher and Pascal Frossard},
booktitle={The Eleventh International Conference on Learning Representations },
year={2023},
url={https://openreview.net/forum?id=UaAD-Nu86WX}
}

@inproceedings{schiff2025simple,
  title={Simple guidance mechanisms for discrete diffusion models},
  author={Schiff, Yair and Sahoo, Subham and Phung, Hao and Wang, Guanghan and Boshar, Sam and Dalla-torre, Hugo and Almeida, Bernardo and Rush, Alexander and Pierrot, Thomas and Kuleshov, Volodymyr},
  booktitle={International Conference on Learning Representations},
  volume={2025},
  pages={43776--43821},
  year={2025}
}

@inproceedings{zhang2025convergence,
  title={Convergence of score-based discrete diffusion models: A discrete-time analysis},
  author={Zhang, Zikun and Chen, Zixiang and Gu, Quanquan},
  booktitle={International Conference on Learning Representations},
  volume={2025},
  pages={34747--34772},
  year={2025}
}

@article{chen2025convergence,
  title   = {Convergence Analysis of Discrete Diffusion Model: Exact Implementation through Uniformization},
  author  = {Chen, Hongrui and Ying, Lexing},
  journal = {Journal of Machine Learning},
  volume  = {4},
  number  = {2},
  pages   = {108--127},
  year    = {2025},
  month   = jun,
  doi     = {10.4208/jml.240812},
  url     = {https://www.global-sci.com/index.php/jml/article/view/13211}
}

@article{austin2021structured,
  title={Structured denoising diffusion models in discrete state-spaces},
  author={Austin, Jacob and Johnson, Daniel D and Ho, Jonathan and Tarlow, Daniel and Van Den Berg, Rianne},
  journal={Advances in neural information processing systems},
  volume={34},
  pages={17981--17993},
  year={2021}
}

@book{kelly1979reversibility,
  title={Reversibility and stochastic networks},
  author={Kelly, Frank P},
  year={1979},
  publisher={J. Wiley}
}

@article{zhao2025unified,
  title={Unified discrete diffusion for categorical data},
  author={Zhao, Lingxiao and Ding, Xueying and Yu, Lijun and Akoglu, Leman},
  journal={Journal of Machine Learning Research},
  volume={26},
  number={215},
  pages={1--49},
  year={2025}
}

@inproceedings{
sun2023scorebased,
title={Score-based Continuous-time Discrete Diffusion Models},
author={Haoran Sun and Lijun Yu and Bo Dai and Dale Schuurmans and Hanjun Dai},
booktitle={The Eleventh International Conference on Learning Representations },
year={2023},
url={https://openreview.net/forum?id=BYWWwSY2G5s}
}

@article{meng2022concrete,
  title={Concrete score matching: Generalized score matching for discrete data},
  author={Meng, Chenlin and Choi, Kristy and Song, Jiaming and Ermon, Stefano},
  journal={Advances in Neural Information Processing Systems},
  volume={35},
  pages={34532--34545},
  year={2022}
}

@inproceedings{
song2021scorebased,
title={Score-Based Generative Modeling through Stochastic Differential Equations},
author={Yang Song and Jascha Sohl-Dickstein and Diederik P Kingma and Abhishek Kumar and Stefano Ermon and Ben Poole},
booktitle={International Conference on Learning Representations},
year={2021},
url={https://openreview.net/forum?id=PxTIG12RRHS}
}

@inproceedings{ren2025discrete,
  title={How discrete and continuous diffusion meet: Comprehensive analysis of discrete diffusion models via a stochastic integral framework},
  author={Ren, Yinuo and Chen, Haoxuan and Rotskoff, Grant and Ying, Lexing},
  booktitle={International Conference on Learning Representations},
  volume={2025},
  pages={42904--42941},
  year={2025}
}

@article{ren2026fast,
  title={Fast solvers for discrete diffusion models: Theory and applications of high-order algorithms},
  author={Ren, Yinuo and Chen, Haoxuan and Zhu, Yuchen and Guo, Wei and Chen, Yongxin and Rotskoff, Grant and Tao, Molei and Ying, Lexing},
  journal={Advances in Neural Information Processing Systems},
  volume={38},
  pages={167228--167282},
  year={2026}
}

@article{gillespie2001approximate,
  title={Approximate accelerated stochastic simulation of chemically reacting systems},
  author={Gillespie, Daniel T},
  journal={The Journal of chemical physics},
  volume={115},
  number={4},
  pages={1716--1733},
  year={2001},
  publisher={AIP Publishing}
}

@article{cao2006efficient,
  title={Efficient step size selection for the tau-leaping simulation method},
  author={Cao, Yang and Gillespie, Daniel T and Petzold, Linda R},
  journal={The Journal of chemical physics},
  volume={124},
  number={4},
  year={2006},
  publisher={AIP Publishing}
}

@article{hobolth2009simulation,
  title={Simulation from endpoint-conditioned, continuous-time Markov chains on a finite state space, with applications to molecular evolution},
  author={Hobolth, Asger and Stone, Eric A},
  journal={The annals of applied statistics},
  volume={3},
  number={3},
  pages={1204},
  year={2009}
}

@article{li2026neural,
  title={Neural Continuous-Time Markov Chain: Discrete Diffusion via Decoupled Jump Timing and Direction},
  author={Li, Jingyuan and Jiang, Xiaoyi and Wen, Fukang and Liu, Wei and Luo, Renqian and Zhu, Yi and Shi, Zuoqiang and Hu, Pipi},
  journal={arXiv preprint arXiv:2604.15694},
  year={2026}
}

@article{zhu2026mdns,
  title={Mdns: Masked diffusion neural sampler via stochastic optimal control},
  author={Zhu, Yuchen and Guo, Wei and Choi, Jaemoo and Liu, Guan-Horng and Chen, Yongxin and Tao, Molei},
  journal={Advances in Neural Information Processing Systems},
  volume={38},
  pages={35260--35308},
  year={2026}
}

@inproceedings{chang2022maskgit,
  title={Maskgit: Masked generative image transformer},
  author={Chang, Huiwen and Zhang, Han and Jiang, Lu and Liu, Ce and Freeman, William T},
  booktitle={Proceedings of the IEEE/CVF conference on computer vision and pattern recognition},
  pages={11315--11325},
  year={2022}
}

@article{chang2023muse,
  title={Muse: Text-to-image generation via masked generative transformers},
  author={Chang, Huiwen and Zhang, Han and Barber, Jarred and Maschinot, AJ and Lezama, Jose and Jiang, Lu and Yang, Ming-Hsuan and Murphy, Kevin and Freeman, William T and Rubinstein, Michael and others},
  journal={arXiv preprint arXiv:2301.00704},
  year={2023}
}

@article{nie2026large,
  title={Large language diffusion models},
  author={Nie, Shen and Zhu, Fengqi and You, Zebin and Zhang, Xiaolu and Ou, Jingyang and Hu, Jun and Zhou, Jun and Lin, Yankai and Wen, Ji-Rong and Li, Chongxuan},
  journal={Advances in Neural Information Processing Systems},
  volume={38},
  pages={50608--50646},
  year={2026}
}

@article{shi2024simplified,
  title={Simplified and generalized masked diffusion for discrete data},
  author={Shi, Jiaxin and Han, Kehang and Wang, Zhe and Doucet, Arnaud and Titsias, Michalis},
  journal={Advances in neural information processing systems},
  volume={37},
  pages={103131--103167},
  year={2024}
}

@article{zhao2026informed,
  title={Informed correctors for discrete diffusion models},
  author={Zhao, Yixiu and Shi, Jiaxin and Chen, Feng and Druckmann, Shaul and Mackey, Lester and Linderman, Scott},
  journal={Advances in Neural Information Processing Systems},
  volume={38},
  pages={125510--125538},
  year={2026}
}

@inproceedings{le2025discrete,
  title={Discrete markov probabilistic models: An improved discrete score-based framework with sharp convergence bounds under minimal assumptions},
  author={Le-Tuyet-Nhi, PHAM and Shariatian, Dario and Ocello, Antonio and Conforti, Giovanni and Durmus, Alain Oliviero},
  booktitle={Forty-second International Conference on Machine Learning},
  year={2025}
}

@article{cao2004numerical,
  title={The numerical stability of leaping methods for stochastic simulation of chemically reacting systems},
  author={Cao, Yang and Petzold, Linda R and Rathinam, Muruhan and Gillespie, Daniel T},
  journal={Journal of Chemical Physics},
  volume={121},
  number={24},
  pages={12169--12178},
  year={2004}
}

@article{arns2010numerical,
  title={On the numerical analysis of inhomogeneous continuous-time Markov chains},
  author={Arns, Markus and Buchholz, Peter and Panchenko, Andriy},
  journal={INFORMS Journal on Computing},
  volume={22},
  number={3},
  pages={416--432},
  year={2010},
  publisher={INFORMS}
}

@book{peyre2019computational,
  title={Computational optimal transport: With applications to data science},
  author={Peyr{\'e}, Gabriel and Cuturi, Marco},
  year={2019},
  publisher={Now Foundations and Trends}
}

@article{sriperumbudur2009integral,
  title={On integral probability metrics,$\backslash$phi-divergences and binary classification},
  author={Sriperumbudur, Bharath K and Fukumizu, Kenji and Gretton, Arthur and Sch{\"o}lkopf, Bernhard and Lanckriet, Gert RG},
  journal={arXiv preprint arXiv:0901.2698},
  year={2009}
}

@article{mimikos2024score,
  title={Score-based generative models are provably robust: an uncertainty quantification perspective},
  author={Mimikos-Stamatopoulos, Nikiforos and Zhang, Benjamin J and Katsoulakis, Markos A},
  journal={Advances in Neural Information Processing Systems},
  volume={37},
  pages={63154--63183},
  year={2024}
}

@article{sahoo2024simple,
  title={Simple and effective masked diffusion language models},
  author={Sahoo, Subham S and Arriola, Marianne and Schiff, Yair and Gokaslan, Aaron and Marroquin, Edgar and Chiu, Justin T and Rush, Alexander and Kuleshov, Volodymyr},
  journal={Advances in Neural Information Processing Systems},
  volume={37},
  pages={130136--130184},
  year={2024}
}

@inproceedings{zheng2025masked,
  title={Masked diffusion models are secretly time-agnostic masked models and exploit inaccurate categorical sampling},
  author={Zheng, Kaiwen and Chen, Yongxin and Mao, Hanzi and Liu, Ming-Yu and Zhu, Jun and Zhang, Qinsheng},
  booktitle={International Conference on Learning Representations},
  volume={2025},
  pages={63186--63227},
  year={2025}
}

@inproceedings{park2025jump,
  title={Jump your steps: Optimizing sampling schedule of discrete diffusion models},
  author={Park, Yong-Hyun and Lai, Chieh-Hsin and Hayakawa, Satoshi and Takida, Yuhta and Mitsufuji, Yuki},
  booktitle={International Conference on Learning Representations},
  volume={2025},
  pages={96272--96300},
  year={2025}
}

@book{levin2017markov,
  title={Markov chains and mixing times},
  author={Levin, David A and Peres, Yuval},
  volume={107},
  year={2017},
  publisher={American Mathematical Soc.}
}

@article{liang2026absorb,
  title={Absorb and converge: Provable convergence guarantee for absorbing discrete diffusion models},
  author={Liang, Yuchen and Huang, Renxiang and Lai, Lifeng and Shroff, Ness and Liang, Yingbin},
  journal={Advances in Neural Information Processing Systems},
  volume={38},
  pages={20283--20318},
  year={2025}
}

@inproceedings{sohl2015deep,
  title={Deep unsupervised learning using nonequilibrium thermodynamics},
  author={Sohl-Dickstein, Jascha and Weiss, Eric and Maheswaranathan, Niru and Ganguli, Surya},
  booktitle={International conference on machine learning},
  pages={2256--2265},
  year={2015},
  organization={pmlr}
}

@article{hoogeboom2021argmax,
  title={Argmax flows and multinomial diffusion: Learning categorical distributions},
  author={Hoogeboom, Emiel and Nielsen, Didrik and Jaini, Priyank and Forr{\'e}, Patrick and Welling, Max},
  journal={Advances in neural information processing systems},
  volume={34},
  pages={12454--12465},
  year={2021}
}

@article{chen2024fast,
  title={Fast sampling via discrete non-markov diffusion models with predetermined transition time},
  author={Chen, Zixiang and Yuan, Huizhuo and Li, Yongqian and Kou, Yiwen and Zhang, Junkai and Gu, Quanquan},
  journal={Advances in Neural Information Processing Systems},
  volume={37},
  pages={106870--106905},
  year={2024}
}

@article{ho2020denoising,
  title={Denoising diffusion probabilistic models},
  author={Ho, Jonathan and Jain, Ajay and Abbeel, Pieter},
  journal={Advances in neural information processing systems},
  volume={33},
  pages={6840--6851},
  year={2020}
}

@inproceedings{
song2021denoising,
title={Denoising Diffusion Implicit Models},
author={Jiaming Song and Chenlin Meng and Stefano Ermon},
booktitle={International Conference on Learning Representations},
year={2021},
url={https://openreview.net/forum?id=St1giarCHLP}
}

@article{liang2026discrete,
  title={Discrete diffusion models: Novel analysis and new sampler guarantees},
  author={Liang, Yuchen and Liang, Yingbin and Lai, Lifeng and Shroff, Ness},
  journal={Advances in Neural Information Processing Systems},
  volume={38},
  pages={165511--165548},
  year={2026}
}

@article{dmitriev2026efficient,
  title={Efficient sampling with discrete diffusion models: Sharp and adaptive guarantees},
  author={Dmitriev, Daniil and Huang, Zhihan and Wei, Yuting},
  journal={arXiv preprint arXiv:2602.15008},
  year={2026}
}

@article{chen2025optimal,
  title={Optimal inference schedules for masked diffusion models},
  author={Chen, Sitan and Cong, Kevin and Li, Jerry},
  journal={arXiv preprint arXiv:2511.04647},
  year={2025}
}

@inproceedings{
kim2025train,
title={Train for the Worst, Plan for the Best: Understanding Token Ordering in Masked Diffusions},
author={Jaeyeon Kim and Kulin Shah and Vasilis Kontonis and Sham M. Kakade and Sitan Chen},
booktitle={Forty-second International Conference on Machine Learning},
year={2025},
url={https://openreview.net/forum?id=DjJmre5IkP}
}

@inproceedings{kim2026stop,
  title={Stop training for the worst: Progressive unmasking accelerates masked diffusion training},
  author={Kim, Jaeyeon and Geuter, Jonathan and Alvarez-Melis, David and Kakade, Sham and Chen, Sitan},
  year={2026},
  booktitle={ICLR 2026 Workshop on MM Intelligence},
    url={https://openreview.net/forum?id=UIcRg65DF0}
}

@article{li2026breaking,
  title={Breaking AR’s sampling bottleneck: Provable acceleration via diffusion language models},
  author={Li, Gen and Cai, Changxiao},
  journal={Advances in Neural Information Processing Systems},
  volume={38},
  pages={11700--11725},
  year={2026}
}

@article{conforti2025non,
  title={Non-Asymptotic Convergence of Discrete Diffusion Models: Masked and Random Walk dynamics},
  author={Conforti, Giovanni and Durmus, Alain and Pham, Le-Tuyet-Nhi and Raoul, Gael},
  journal={arXiv preprint arXiv:2512.00580},
  year={2025}
}

@article{xu2026scheduling,
  title={Scheduling Thoughts: Learning the Order of Thought in Diffusion Language Models},
  author={Xu, Jiawei and Liu, Minghui and Agrawal, Aakriti and Chen, Yifan and Huang, Furong},
  journal={arXiv preprint arXiv:2606.23567},
  year={2026}
}

@article{benton2024denoising,
  title={From denoising diffusions to denoising markov models},
  author={Benton, Joe and Shi, Yuyang and De Bortoli, Valentin and Deligiannidis, George and Doucet, Arnaud},
  journal={Journal of the Royal Statistical Society Series B: Statistical Methodology},
  volume={86},
  number={2},
  pages={286--301},
  year={2024},
  publisher={Oxford University Press US}
}

@article{grassmann1977transient,
  title={Transient solutions in Markovian queueing systems},
  author={Grassmann, Winfried K},
  journal={Computers \& Operations Research},
  volume={4},
  number={1},
  pages={47--53},
  year={1977},
  publisher={Elsevier}
}

@incollection{de2000transient,
  title={Transient solutions for Markov chains},
  author={de Souza e Silva, Edmundo and Gail, H Richard},
  booktitle={Computational probability},
  pages={43--79},
  year={2000},
  publisher={Springer}
}

@article{jensen1953markoff,
  title={Markoff chains as an aid in the study of Markoff processes},
  author={Jensen, Arne},
  journal={Scandinavian Actuarial Journal},
  volume={1953},
  number={sup1},
  pages={87--91},
  year={1953},
  publisher={Taylor \& Francis}
}

@article{liang2026convergence,
  title={Convergence Guarantees for Time-Inhomogeneous Uniform-Rate Discrete Diffusion Models},
  author={Liang, Yuchen and Lai, Lifeng and Shroff, Ness and Liang, Yingbin},
  journal={Entropy},
  volume={28},
  number={6},
  pages={675},
  year={2026},
  publisher={MDPI}
}

@inproceedings{ou2025your,
  title={Your absorbing discrete diffusion secretly models the conditional distributions of clean data},
  author={Ou, Jingyang and Nie, Shen and Xue, Kaiwen and Zhu, Fengqi and Sun, Jiacheng and Li, Zhenguo and Li, Chongxuan},
  booktitle={International Conference on Learning Representations},
  volume={2025},
  pages={64972--65009},
  year={2025}
}
